\newtheorem{lemma}{Lemma}
\newtheorem{theorem}{Theorem}
\DeclareMathOperator*{\argmax}{{argmax}}
\newcommand{\Rb}{\mathbb{R}}
\newcommand{\Dc}{\mathcal{D}}
\newcommand{\Xc}{\mathcal{X}}
\newcommand{\Yc}{\mathcal{Y}}
\newcommand{\Pc}{\mathcal{P}}
\newcommand{\sumj}[1]{\sum_{j=1}^M#1}
\newcommand{\prodj}[1]{\prod_{j=1}^M#1}
\newcommand{\sumi}[1]{\sum_{i=1}^n#1}
\newcommand{\Ex}{\mathbb{E}}
\newcommand{\wa}{w(\alpha)}
\newcommand{\ca}{c(\alpha)}
\newcommand*\pFq[2]{{}_{#1}F_{#2}}
\newcommand{\kl}{{\rm kl}}
\newcommand{\eqdef}{\triangleq}
\newcommand{\ie}{{\it i.e.}\xspace}
\newcommand{\wrt}{{\it w.r.t.}\xspace}
\newcommand{\eg}{{\it e.g.}\xspace}
\newcommand{\aka}{{\it a.k.a.}\xspace}
\newcommand{\coolname}{{\sc StocMV\xspace}}
\definecolor{mkfred}{rgb}{0.7, 0.0, 0.0}
\title{Learning Stochastic Majority Votes by\\Minimizing a PAC-Bayes Generalization Bound}
\author{%
  Valentina Zantedeschi$^{123}$ \hfill Paul Viallard$^4$ \hfill Emilie Morvant$^4$ \hfill R{\'e}mi Emonet$^4$\\
  \textbf{Amaury Habrard$^4$ \hfill Pascal Germain$^5$ \hfill Benjamin Guedj$^{123}$}\\
 ${}^1$ Inria, Lille - Nord Europe research centre, France \hfill
${}^2$ The Inria London Programme, France and UK\\
${}^3$ University College London, Department of Computer Science, Centre for Artificial Intelligence, UK\\
${}^4$ Univ Lyon, UJM-Saint-Etienne, CNRS, Institut d Optique Graduate School,\protect\\ Laboratoire Hubert Curien UMR 5516, F-42023, Saint-Etienne, France\\
${}^5$ Département d'informatique et de génie logiciel, Université Laval, Québec, Canada%
}
\newcommand*\samethanks[1][\value{footnote}]{\footnotemark[#1]}
\author[1,2,3]{Valentina Zantedeschi\thanks{\texttt{vzantedeschi@gmail.com}}}
\author[4]{Paul Viallard\thanks{firstname.lastname@univ-st-etienne.fr}}
\author[4]{Emilie Morvant\samethanks}
\author[4]{Remi Emonet\samethanks}
\author[4]{Amaury Habrard\samethanks}
\author[5]{Pascal Germain\thanks{pascal.germain@ift.ulaval.ca}}
\author[1,2,3]{Benjamin Guedj\thanks{benjamin.guedj@inria.fr}}
\affil[1]{Inria, Lille - Nord Europe research centre}
\affil[2]{The Inria London Programme}
\affil[3]{University College London, Department of Computer Science, Centre for Artificial Intelligence}
\affil[4]{Univ Lyon, UJM-Saint-Etienne, CNRS, Institut d Optique Graduate School,\protect\\ Laboratoire Hubert Curien UMR 5516, F-42023, Saint-Etienne, France}
\affil[5]{Université Laval, Québec, Canada}
\begin{document}
\date{}
\maketitle

\begin{abstract}%ROUGH ABSTRACT
We investigate a stochastic counterpart of majority votes over finite ensembles of classifiers, and study its generalization properties. While our approach holds for arbitrary distributions, we instantiate it with Dirichlet distributions: this allows for a closed-form and differentiable expression for the expected risk, which then turns the generalization bound into a tractable training objective.
The resulting stochastic majority vote learning algorithm %inherits informative generalisation properties and it 
achieves state-of-the-art accuracy and benefits from (non-vacuous) tight generalization bounds, in a series of numerical experiments when compared to competing algorithms which also minimize PAC-Bayes objectives -- both with uninformed (data-independent) and informed (data-dependent) priors.
% We further propose two algorithms for optimizing the derived bounds and empirically compare the obtained models and their risk certificates to state-of-the-art PAC-Bayesian ensemble methods, both in the contexts of weak (data-agnostic) priors and of informed (data-dependent) priors.
% This empirical analysis confirms that the proposed method better leverages the base classifiers' diversity, and obtains significantly tighter bounds on the test error.
\end{abstract}

%\ben{avec Valentina, on a discuté de real datasets pour build a stronger case. Ex d'un dataset avec sites de fact checking pour fake news (cf. page d'Augenstein)}

\section{Introduction}
\label{sec:intro}
% \ben{je m'occupe de l'intro}

%- ensemble methods: interest (best performance on tabular datasets), leverage diversity
By combining the outcomes of several predictors, ensemble methods~\citep{dietterich2000ensemble} have been shown to provide models that are more accurate and more robust than each predictor taken singularly.
The key to their success lies in harnessing the diversity of the set of predictors~\citep{Kuncheva2014}.
% They are the state-of-the-art models on problems with heterogenous tabular data and
%- weighted majority vote: refine the weights of the voters.
Among ensemble methods, weighted Majority Votes (MV) classifiers assign a score to each base classifier (\aka\ voter) and output the most common prediction, given by the weighted majority.
When voters have known probabilities of making an error and make independent predictions, the optimal weighting is given by the so-called Naive Bayes rule~\citep{berend15a}. 
However, in most situations these assumptions are not satisfied, giving rise to the need for techniques that estimate the optimal combination of voter predictions from the data.

Among them, PAC-Bayesian based methods are %principled 
well-grounded approaches for optimizing the voter weighting.
Indeed, PAC-Bayes theory (introduced by \citealp{TaylorWilliamson1997,McAllester1999} -- we refer to \citealp{guedj2019primer} for a recent survey and references therein) provides not only bounds on the true error of a MV through 
Probably Approximately Correct (PAC) generalization bounds (see \eg \cite{Catoni2007,seeger2002pac,maurer2004note,langford2003pac,germain2015risk}), but is also suited to derive theoretically grounded learning algorithms (see \eg \citet{Germain2009ICML},
\citet{roy2011pac,parrado2012pac,alquier2016properties}).
%\cite{Germain2009ICML}
Contrary to the most classical PAC bounds~\citep{valiant1984theory}, as VC-dimension~\citep{VapnikBook} or Rademacher-based bounds~\citep{MohriBook}, PAC-Bayesian guarantees do not stand for all hypotheses (\ie are not expressed as a worst-case analysis) but stand in expectation over the hypothesis set. % under a weighting distribution called the posterior distribution.
They involve a hypothesis space (formed by the base predictors), a prior distribution on it (\ie an {\it a priori} weighting) and a posterior distribution (\ie an {\it a posteriori} weighting) evaluated on a learning sample.
The prior brings some prior knowledge on the combination of predictors, and the posterior distribution is learned (adjusted) to lead to good generalization guarantees; the deviation between the prior and the posterior distributions plays a role in generalization guarantee and is usually captured by the Kullback-Leibler (KL) divergence.
In their essence, PAC-Bayesian results do not bound directly
the risk of the deterministic MV, but bound the expected risk of one (or several) base voters randomly drawn according to the weight distribution of the MV~\citep{langford2003pac,lacasse2006pac,lacasse2010learning,germain2015risk,NEURIPS2020_38685413}.
%For instance, by applying Markov's inequality~\citep{langford2003pac,germain2015risk}, one can derive an upper bound for the true risk that depends on the expected risk of the individual base predictors~\footnote{The expectation over of the individual risks of the base predictors is known as the Gibbs in PAC-Bayes}
%of a random base classifier.
%The obtained %oracle 
This randomization scheme leads to
%This leads to 
upper bounds on the true risk of the MV that are then used as a proxy to derive PAC-Bayesian generalization bounds.
However, the obtained risk certificates are generally not tight, as they depend on irreducible constant factors, and when optimized they can lead to sub-optimal weightings.
Indeed, by considering a random subset of base predictors, state-of-the-art methods do not fully leverage the diversity of the whole set of voters.
This is especially a problem when the voters are weak, and learning to combine their predictions is critical for good performance.

\paragraph{Our contributions.}
In this paper, we propose a new randomization scheme.
We consider the voter weighting associated to a MV as a realization of a distribution of voter weightings.
More precisely, we analyze with the PAC-Bayesian framework the expected risk of a MV drawn from the posterior distribution of MVs. 
The main difference with the literature is that we propose a stochastic MV, while previous works aim at studying randomized evaluations of the true risk of the deterministic MV.
Doing so, we are able to derive tight empirical PAC-Bayesian bounds for our model directly on its expected risk, in Section~\ref{sec:bounds}.
We further propose, in Section~\ref{sec:stoc} two approaches for optimizing the generalization bounds, hence learning the optimal posterior:
the first optimizes an analytical and differentiable form of the empirical risk that can be derived when considering Dirichlet distributions;
the second optimizes a Monte Carlo approximation of the expected risk and can be employed with any form of posterior.
In our experiments, reported in Section~\ref{sec:expe}, we first compare these two approaches, highlighting in which regimes one is preferable to the other.
Finally, we assess our method's performance on real benchmarks \wrt the performance of PAC-Bayesian approaches also learning MV classifiers.
These results indicate that our models enjoy generalization bounds that are consistently tight and non-vacuous both when studying ensembles of data-independent predictors and when \mbox{studying ensembles of data-dependent ones}.

\paragraph{Societal impact.} Our work abides by the ethical guidelines enforced in contemporary research in machine learning. 
Given the theoretical nature of our contributions we do not foresee immediate potentially negative societal impact.

% \emilie{je sais pas trop quoi faire de ça :
% The PAC-Bayesian framework has been used to derive non-vacuous generalization guarantees for NN methods, that are generally tighter than their PAC counterparts, \eg based on Rademacher complexity~\citep{dziugaite2017computing,neyshabur2017pac}.
% MV obtained by PAC-Bayesian approaches were shown to perform comparably well with boosting methods~\citep{roy2011pac}.}

% \paragraph{Outline.} The remainder of the paper is structured as follows:
% \begin{enumerate}
%     % \item propose stochastic MV
%     % \item optimize the exact 0-1 loss when Dirichlet assumptions
%     % \item propose a relaxed MC approximation for faster optimization
%     % \item derive tighter empirical PAC-Bayes bounds for MV
%     % \item show that oracle bounds are not tight with weak voters
%     \item compare with sota with fixed prior and with informed priors
% \end{enumerate}

\section{Notation and background}
%% pas de s à notation :)
In this section, we formally define weighted Majority Vote (MV) classifiers and review the principal PAC-Bayesian approaches for learning them.

\subsection{Weighted majority vote classifiers}
\label{sec:related}
% In this section, we formally  define weighted Majority Vote (MV) classifiers.
Consider the data random variable $(X, Y)$, taking values in $\Xc {\times} \Yc$ with $\Xc\subseteq \Rb^d$ a $d$-dimensional representation space and $\Yc$ the set of labels. 
We denote $\Pc$ the (unknown) data distribution of $(X, Y)$.
We define a set (dictionary) of base classifiers
%predictors 
$D {=} \{h_j: \Xc \to \Yc\}_{j=1}^M$.
The weighted majority vote classifier is a convex combination of the base classifiers from $D$.
Formally, a MV is parameterized by a weight vector $\theta \in [0,1]^M$, such that $\sumj \theta_j = 1$ hence lying in the ($M\text{-}1$)-simplex $\Delta^{M-1}$, as follows:
%and a weighted Majority Vote classifier parameterized by the weight vector $\theta \in [0,1]^M$, such that $\sumj \theta_j = 1$, as follows:
\begin{align}
    f_\theta(x) &= \argmax_{y \in \Yc} 
    \sumj \theta_j \; \mathds{1}( h_j(x) = y),%\\
   % &\eqdef \argmax_{y \in \Yc}  W_\theta(x, y),
\end{align}
where $\mathds{1}(\cdot)$ is the indicator function.
% For $\Yc \in \{-1, 1\}$, the MV classifier can be rewritten as
% \begin{equation} \label{eq:ftheta}
%     f_\theta(x) = \sign \left(\sumj \theta_j h_j(x) \right) .
% \end{equation}
Let $W_\theta(X, Y)$ be the random variable corresponding to the total weight assigned to base classifiers that predict an incorrect label on $(X,Y)$, that is
%weighted number of incorrect base classifiers defined as
\begin{equation} 
W_\theta(X, Y) = \sumj \theta_j \mathds{1}(h_j(X) \neq Y).
\end{equation}
In binary classification with $|\Yc|{=}2$, the MV errs %makes an error 
whenever $W_\theta(X, Y)\geq 0.5$ \citep{lacasse2010learning, NEURIPS2020_38685413}.
Hence the true risk  (\wrt\ $01$-loss) of the MV classifier can be expressed as 
\begin{align}\label{eq:truerisk}
    &R(f_\theta) \eqdef \Ex_\Pc \;\mathds{1}(W_\theta(X, Y) \geq 0.5) = \mathbb{P}(W_\theta \geq 0.5). 
    %,\\ \mbox{where}\quad & \label{eq:margin}  W_\theta(X, Y) = \sumj \theta_j \mathds{1}(h_j(X) {\neq} Y)
\end{align}
Similarly, the empirical risk of $f_\theta$ on a $n$-sample $S {=} \{(x_i, y_i) {\sim} \Pc\}_{i=1}^n$ is given by 
%\mbox{$\hat{R}(f_\theta) = \frac{1}{n}\sumi \mathds{1}(W_\theta(x_i, y_i) \geq 0.5)$}.
\begin{equation*}\label{eq:emprisk}
    \hat{R}(f_\theta) = \sumi \;\mathds{1}(W_\theta(x_i, y_i) \geq 0.5).
\end{equation*}
Note that the results we introduce in the following are stated for binary classification, but are valid also in the multi-class setting ($|\Yc|{>}2$).
Indeed, in this context Equation~\eqref{eq:truerisk} becomes a surrogate of the risk: we have $R(f_\theta) \leq \mathbb{P}(W_\theta \geq 0.5)$ (see the notion of $\omega$-margin with $\omega=0.5$ proposed by~\citet{laviolette2017risk}).

% \iffalse
% From the above we deduce that the MV makes an error
% when the weighted number of incorrect base classifiers
% \begin{equation} 
% W_\theta(X, Y) = \sumj \theta_j \mathds{1}(h_j(X) \neq Y)
% \end{equation}
% is greater than $0.5$ in the context of binary classification ~\citep{lacasse2010learning, NEURIPS2020_38685413}.
% \emilie{Je m'occupe de parler du multiclasse
% \cite{laviolette2017risk} $\omega$-margin when $\omega=\frac12$}
% %; $W_\theta(X, Y)$ corresponds to the weighted number of  incorrect base classifiers.
% Hence the true risk  (\wrt\ $01$-loss) of the MV classifier can be expressed as 
% \begin{align}\label{eq:truerisk}
%     &R(f_\theta) = \Ex_\Pc \;\mathds{1}(W_\theta(X, Y) \geq 0.5) \eqdef \mathbb{P}(W_\theta \geq 0.5). 
%     %,\\ \mbox{where}\quad & \label{eq:margin}  W_\theta(X, Y) = \sumj \theta_j \mathds{1}(h_j(X) {\neq} Y)
% \end{align}
% \fi

%\pascal{}

\begin{figure}[t]
\centering
\begin{minipage}{.35\textwidth}
  \centering
  \includegraphics[width=\textwidth]{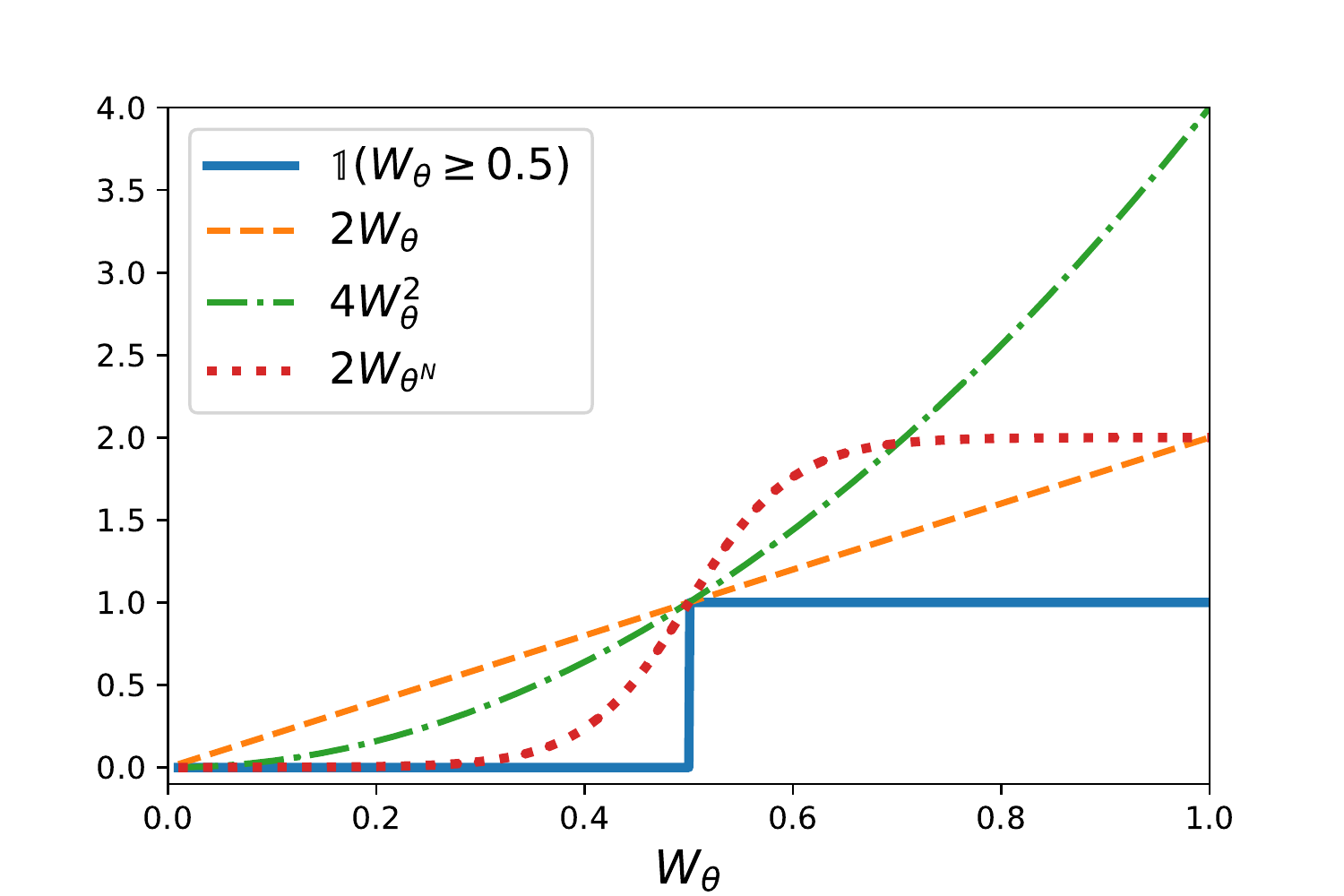}
  \caption{Oracle upper bounds for the true risk. The risks are the areas under the respective curves, for an arbitrary distribution of $W_\theta$ (typically different from the uniform).\label{fig:moments}}
\end{minipage}\hfill
\begin{minipage}{.61\textwidth}
  \centering
  \includegraphics[width=\textwidth]{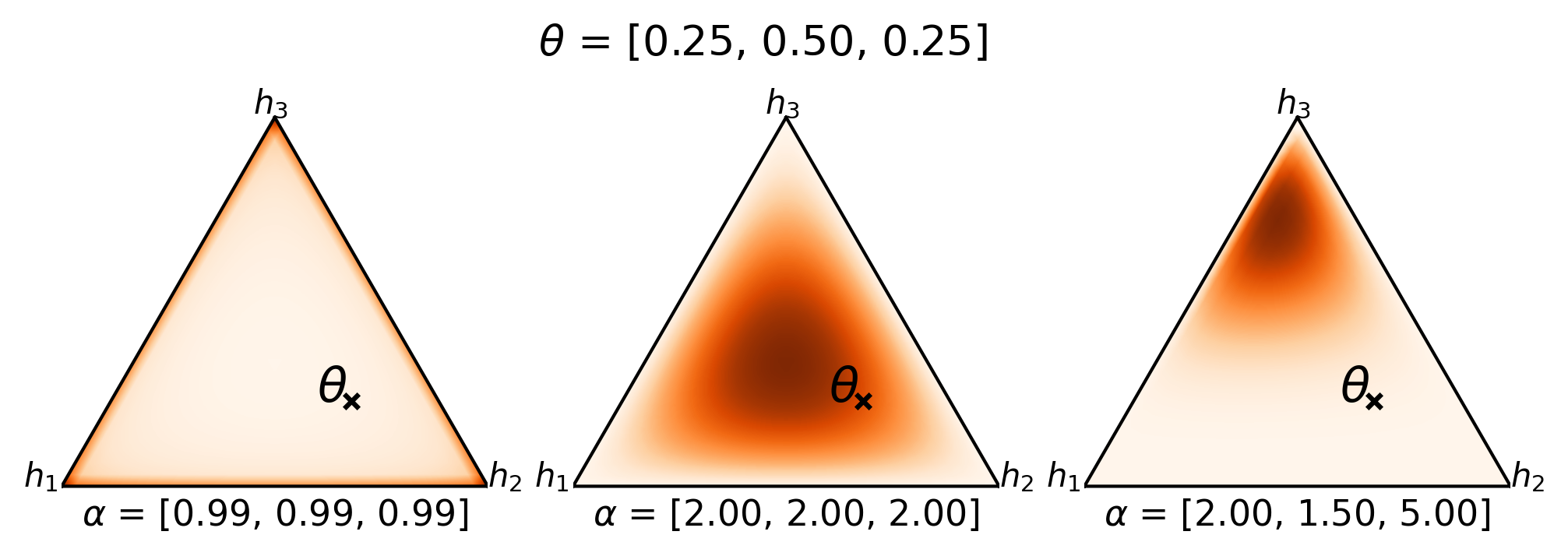}
    \caption{Visualization of the density measure $\rho : \Delta^2 \to \Rb_+$ taking the form of a Dirichlet distribution, with concentration parameters $\alpha$. 
    The darker the color, the higher $\rho(\theta)$.
    Each $\theta$ on the simplex corresponds to a majority vote classifier $f_\theta$ and has an associated probability depending on~$\alpha$. \label{fig:simplex}}
\end{minipage}
\end{figure}

\subsection{A PAC-Bayesian perspective on the majority vote}

The current paper stands in the context of PAC-Bayesian MV learning.
The PAC-Bayesian framework has been employed to study the generalization guarantees of randomized classifiers.
It is known to provide tight risk certificates that can be used to derive self-bounding learning algorithms~\cite[\eg][]{Germain2009ICML,dziugaite2017computing,perez2020tighter}.
In the PAC-Bayes literature of the analysis of MV, 
$\theta$ is interpreted as the parameter of a categorical distribution $\mathcal{C}(\theta)$ over the set of base classifiers $D$ \cite[\eg][]{germain2015risk,lorenzen2019pac,NEURIPS2020_38685413}.
In this sense, $f_\theta$ corresponds to the MV 
predictor $f_\theta(x) {=} \argmax_{y \in \Yc} \Ex_{h \sim \mathcal{C}(\theta)}  \mathds{1}( h(x) {=} y)$ and $W_\theta$ corresponds to the expected ratio of wrong predictors $W_\theta(X, Y) = \Ex_{h \sim \mathcal{C}(\theta)} \mathds{1}(h(X) \neq Y)$.
The PAC-Bayesian analysis provides a sensible way to find such a categorical distribution, called \textit{posterior} distribution, that leads to a model with low true risk $R(f_\theta)$.
However, an important caveat is that the PAC-Bayesian generalization bounds cannot be derived directly on the true risk $R(f_\theta)$, without making assumptions on the distribution of $W_\theta$
and raising fidelity problems.
Thus, a common approach is to consider oracle\footnote{Oracle bounds are expressed in terms of the unknown data distribution; their exact value cannot be computed.} upper bounds on the true risk in terms of quantities from which PAC-Bayesian  bounds can be derived, that are typically related to the statistical moments of $W_\theta$. 
By doing so, oracle bounds act as a proxy for estimating the cumulative density function for $W_\theta \geq 0.5$ \citep{langford2003pac, germain2007pac,lacasse2010learning,NEURIPS2020_38685413}.
Generalization guarantees for $R(f_\theta)$ are hence derived, involving the empirical counterpart of the oracle bound, the KL-divergence between the \emph{posterior} categorical distribution $\mathcal{C}(\theta)$ and a \emph{prior} one. % (that can be in practice the uniform distribution). 
%To learn such a categorical distribution, called the \textit{posterior} distribution, PAC-Bayesian methods deal with the minimization of generalization bounds for $R(f_\theta)$, that involves the empirical risk, a \textit{prior} categorical distribution, and the KL-divergence between this prior and the learned posterior.
An overview of the existing oracle bounds described below is represented in Figure~\ref{fig:moments}.

\paragraph{First order bound.}\label{sec:FO}
The most classical ``factor two'' oracle bound~\citep{langford2003pac} is derived considering the relation between MV's risk and the first moment of $W_\theta$, \aka~the expected risk of the randomized classifier:
$$R_1(\theta) \eqdef  \Ex_{h \sim \mathcal{C}(\theta)} \mathds{1}(h(X) {\neq} Y ) = \Ex_\Pc \;W_\theta.$$
%$R_1(\theta) = \Ex_\Pc \;\Ex_{h \sim \mathcal{C}(\theta)} \mathds{1}(h(X) \neq Y) = \Ex_\Pc \;W_\theta$.
By applying Markov's inequality, we have $R(f_\theta) \leq 2 R_1(\theta)$.
% For the particular case of binary classification, $R_1(\theta)$ can be rewritten as the linear loss $\frac{1}{2}(1 - \Ex\; Y h(X))$.
This ``factor two'' bound is close to zero only if the
expectation of the risk of a single base classifier drawn according to $\mathcal{C}(\theta)$ is itself close to zero. 
Therefore, it does not take into account the correlation between base predictors, which is key to characterize how a MV classifier can achieve strong predictions even when its base classifiers are individually weak (as observed when performing \eg \emph{Boosting}~\citep{schapire1999boosting}).
This explains why $R_1(\theta)$ can be a very loose estimate of $R(f_\theta)$ when the base classifiers \mbox{are adequately decorrelated}.  
%Analyzing this bound corresponds to analyzing the expectation of the risk of the single base classifiers drawn according to $\mathcal{C}(\theta)$.
% Hence, this bound is not always tight, as it does not account directly for the correlation between base predictor errors, which is a crucial property for learning strong MV from weak classifiers.
% For instance when the base classifiers make disjoint errors, the true risk is null, but not the risk of the randomized classifier. \val{cite works that have studied/observed this.}

\paragraph{Binomial bound.}\label{sec:Bin}
A generalization of the first order approach was proposed in~\citet{DBLP:journals/jmlr/Shawe-TaylorH09,lacasse2010learning}, where the true risk of the MV is estimated by drawing multiple ($N$) base hypotheses and computing the probability that at least $\frac{N}{2}$ make an error (which is given by the binomial with parameter $W_\theta$):
$$W_{\theta^N}(X, Y) \eqdef \sum_{k=\frac{N}{2}}^N {N \choose k} W_\theta^k (1-W_\theta)^{(N-k)}.$$
The higher $N$, the better $W_{\theta^N}(X, Y)$ approximates the true risk, but the looser the bound, as the KL term worsens by a factor of $N$.
Moreover, with this approach it is not possible to derive generalization bounds directly on the true risk, and the corresponding oracle bound still presents a factor two: $R(f_\theta) \leq 2\; \Ex_\Pc W_{\theta^N}$.

\paragraph{Second order bound.}\label{sec:SO}
A parallel line of works focuses on improving the bounds by accounting for voter correlations, \ie considering the agreement and/or disagreement of two random voters.
\cite{NEURIPS2020_38685413} recently proposed a new upper bound for the true risk depending on the second moment of $W_\theta$,
% \footnote{This oracle bound can be seen as a special case of~\citet{shanian2012sample} for a null margin.}
\aka~tandem loss or joint error, the risk that two predictors make a mistake on the same point: 
$$R_2(\theta) {\eqdef} \Ex_\Pc \;\Ex_{h \sim \mathcal{C}(\theta), h' \sim \mathcal{C}(\theta)} \mathds{1}(h(X) {\neq} Y \!\land\! h'(X) {\neq} Y) {=} \Ex_\Pc \;W_\theta^2.$$
By applying the second order Markov's inequality, we have $R(f_\theta) {\leq} 4 R_2(\theta)$.
\cite{NEURIPS2020_38685413} show that in the worst case (\ie when the base classifiers are identical) the second order bound could be twice worse than the first order bound ($R_2(\theta){\approx}2R_1(\theta)$), while in the best case (\ie the $M$ base classifiers are perfectly decorrelated), the second order bound could be \emph{an order of magnitude} \mbox{tighter ($R_2(\theta){\approx} \frac 1M R_1(\theta)$)}.
% This inequality is tighter than the first order one when $W_\theta \leq 0.5$.

\paragraph{C-bound.}\label{sec:CB}
Originally proposed by~\citet{breiman2001random} in the context of Random Forest, a bound known as the C-Bound in the PAC-Bayesian literature~\citep{lacasse2006pac,roy2011pac,germain2015risk,laviolette2017risk,viallard_Cbound} is derived by considering explicitly the joint error and disagreement between two base predictors.
Using Chebyshev-Cantelli's inequality, the C-Bound can be written in terms of the first and second moments of $W_\theta$:
$$R(f_\theta) \leq \frac{R_2(\theta) - \Ex_{h \sim \mathcal{C}(\theta)} \left(\Ex_\Pc \mathds{1}(h(X) \neq Y)\right)^2}{R_2(\theta) - R_1(\theta) + \frac{1}{4}}.$$
This bound is tighter than the second order one if the disagreement $\Ex_{h, h'} \Ex_\Pc (\mathds{1}(h(x) \neq h'(X)))$ is greater than $R_2(\theta)$~\citep{NEURIPS2020_38685413}. \\

% \paragraph{General case.}
% All the previous bounds can be viewed as special cases of the bound proposed by~\cite{germain2007pac} \pascal{bold statement! Je ne crois pas que ce soit vrai pour la C-nound notemment. }which stands for any combination of the moments of $W_\theta$: $R(f_\theta) \leq \sum_{k=1}^K c_k R_k(W_\theta)$ where $R_k(W_\theta)$ is the $k$-th moment of $W_\theta$ and $c_k \geq 0$ the corresponding coefficient.
% By taking their Taylor expansions around $W_\theta = 0.5$, \citet{germain2007pac} further show that common losses can be expressed as combinations of the moments of $W_\theta$.
% In particular they study the exponential and logistic losses.
% However, they show that the higher $k$, the bigger $c_k$, making the bounds increasingly looser.
% A way for seeing this is through the moment-generating function of $W_\theta$ about $0.5$

% \begin{equation}\label{eq:momgen}
%     M_{W_\theta}(k) = \Ex_\Pc \exp(k (W_\theta - 0.5)) = \Ex_\Pc \sum_{t=1}^\infty \frac{k^t (W_\theta-0.5)^t}{t!} = \sum_{t=1}^\infty \frac{k^t \Ex_\Pc(W_\theta-0.5)^t}{t!}.
% \end{equation}
% The bigger $k$, the bigger the coefficients $\{\frac{k^t}{t!}\}_{t=1}^{\infty}$.

% Notice that the state-of-the-art PAC-Bayesian learning methods 
% make use of a finite set of random base classifiers, hence they do not leverage the whole set of voters.

From a practical point of view, one could minimize the generalization bounds of one of the above methods to learn a weight distribution over an ensemble of predictors. However, this could lead to sub-optimal MV classifiers.
%From a practical point of view, due to their stochastic nature such oracle bounds can fail to find the optimal MV classifier.
To illustrate this behavior we plot in Figure~\ref{fig:moons-pred} the decision surfaces learnt by the minimization of a PAC-Bayesian bound on each of the aforementioned oracle bounds.
These plots provide evidence that, when the base classifiers are weak, state-of-the-art PAC-Bayesian methods do not necessarily build powerful ensembles (failing to improve upon a Naive Bayes approach~\citep{berend15a}).
\emph{First Order} concentrates on few base classifiers, as previously observed by~\citet{lorenzen2019pac} and \citet{NEURIPS2020_38685413}, 
while \emph{Second Order} and \emph{C-Bound} fail to leverage the diversity in the whole set of classifiers.
Indeed, in this setting the base classifiers are weak, but diverse enough so that there exists an optimal combination of them that  perfectly splits the two classes without error. However, the optimization of the PAC-Bayes guarantees over \emph{Second Order} and \emph{C-Bound} are shown to select a small subset of base classifiers which is not enough to achieve good performance.
% find the optimal combination of classifiers.
On the contrary, \emph{Binomial} is able to fit the problem by drawing more than $2$ voters, but it provides generalization bounds that are loose even when the learned model exhibits good generalization capabilities, as in this case.

\iffalse
Notice that all the described learning methods study a finite set of random base classifiers, hence they do not leverage the whole set of voters.
This limitation can negatively impact their performance, as we demonstrate on the representative toy dataset \textbf{two-moons} in Figure~\ref{fig:moons-pred}, where we plot the decision surfaces learnt by the aforementioned methods.
These plots provide evidence that, when the base classifiers are weak, the state-of-the-art PAC-Bayesian methods do not necessarily build powerful ensembles (failing to improve upon a Naive Bayes approach~\citep{berend15a}).
\emph{First Order} concentrates on few base classifiers, as previously observed by~\cite{lorenzen2019pac} and \cite{NEURIPS2020_38685413}, 
while \emph{Second Order} and \emph{C-Bound} fail to find the optimal combination of classifiers.
On the contrary, by drawing more than $2$ voters \emph{Binomial} is able to fit the problem, but it provides generalization bounds that are looser than ours.
\fi

\begin{figure}[t]
    \centering
    \includegraphics[width=0.16\textwidth]{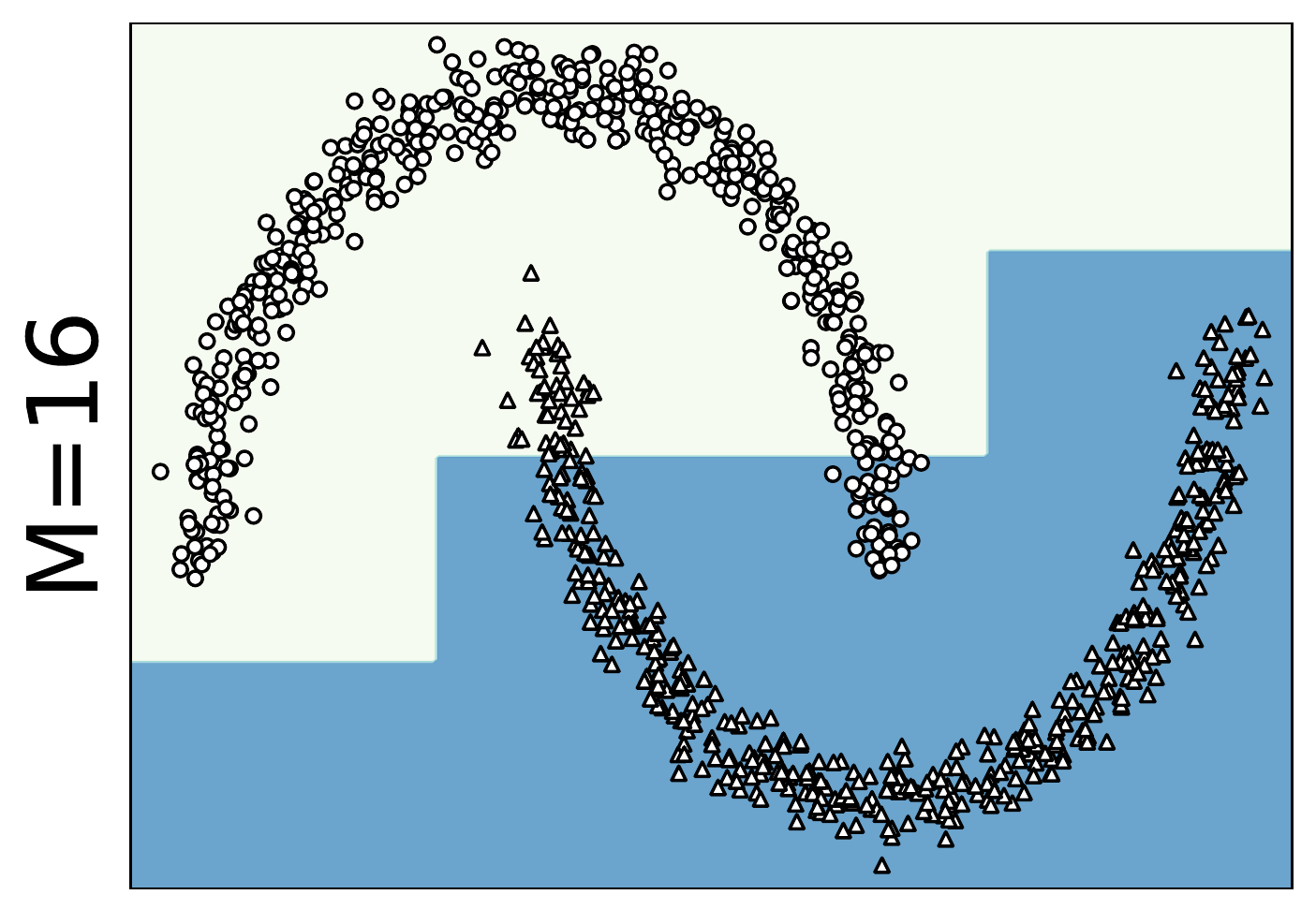}
    \includegraphics[width=0.15\textwidth]{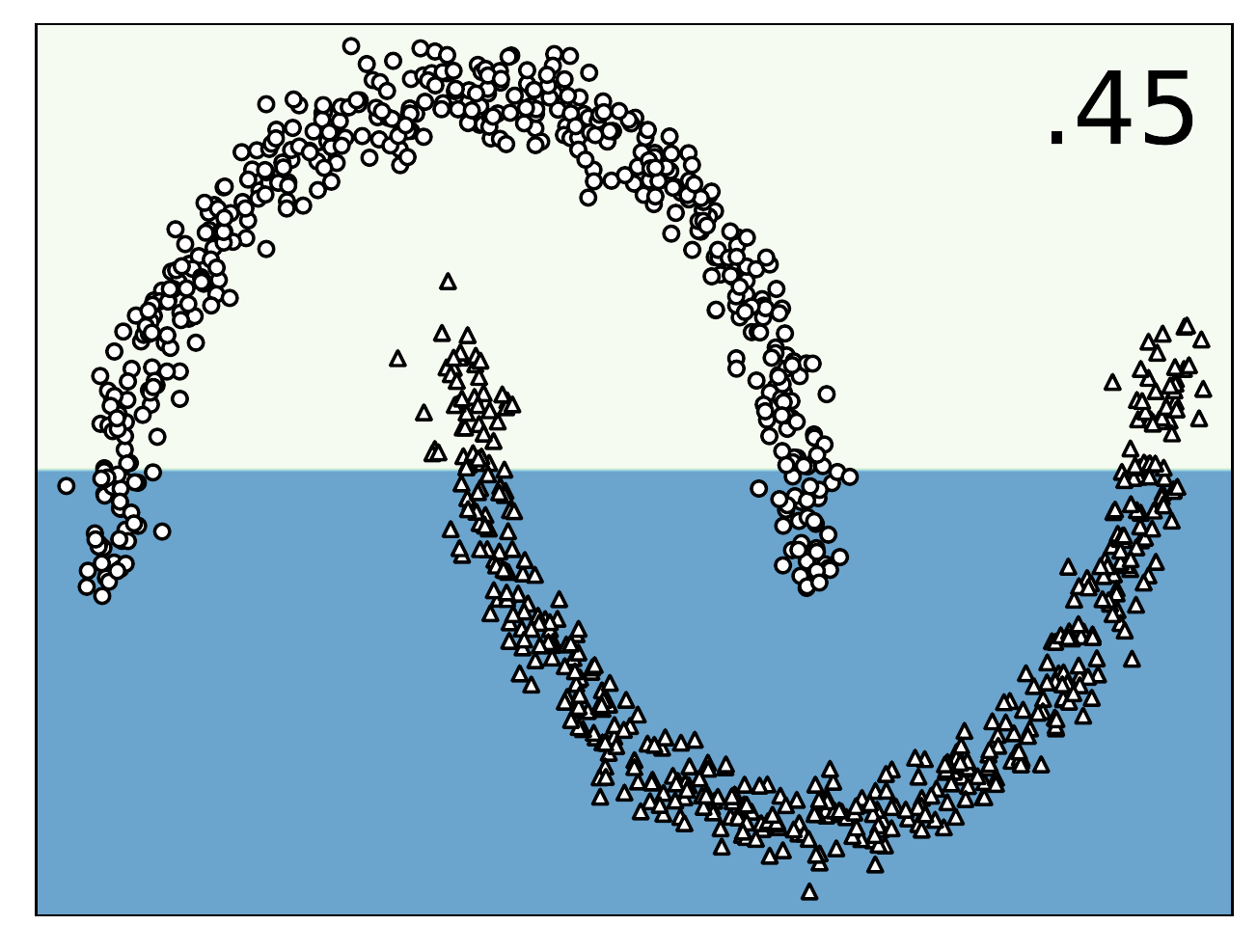}
    \includegraphics[width=0.15\textwidth]{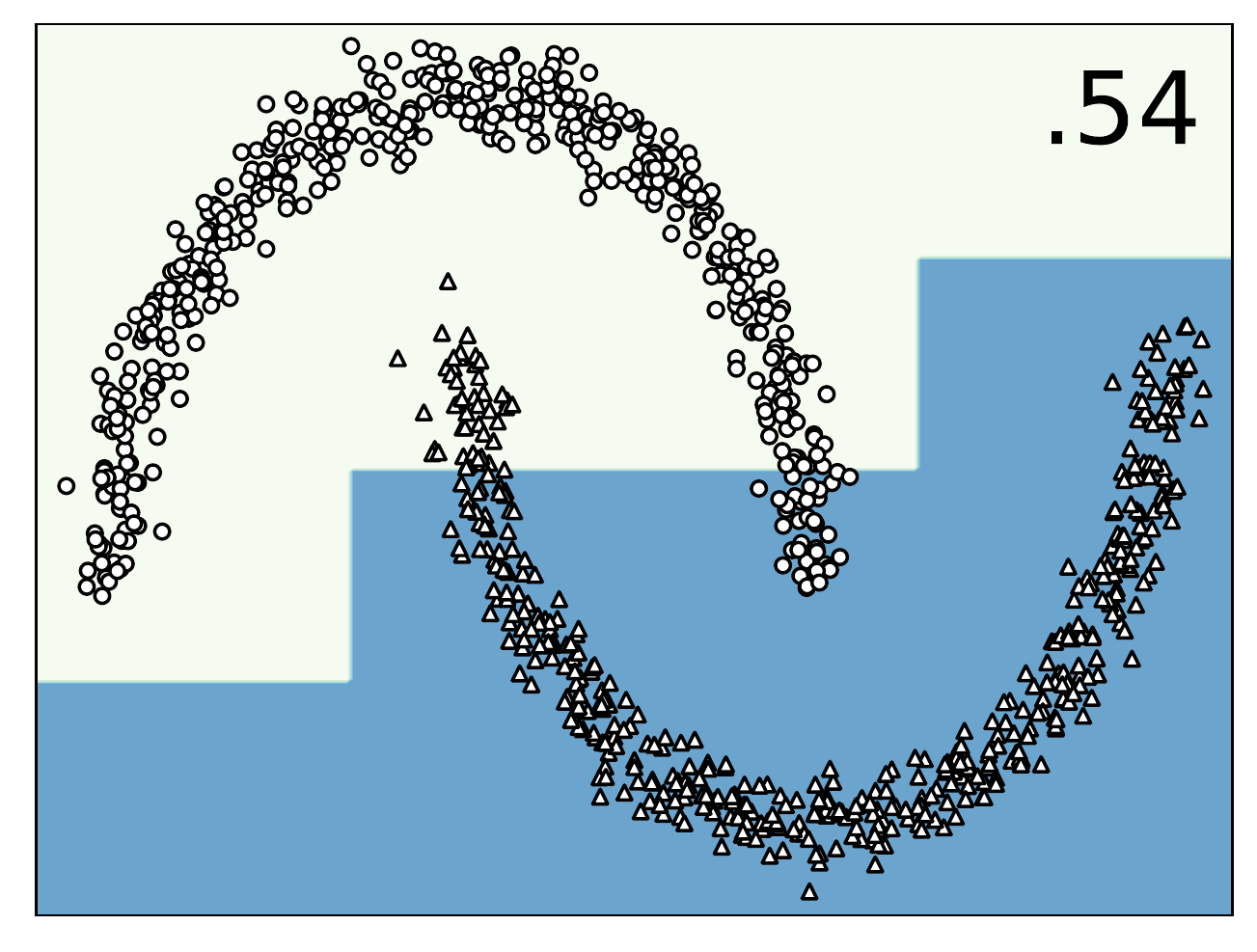}
    \includegraphics[width=0.15\textwidth]{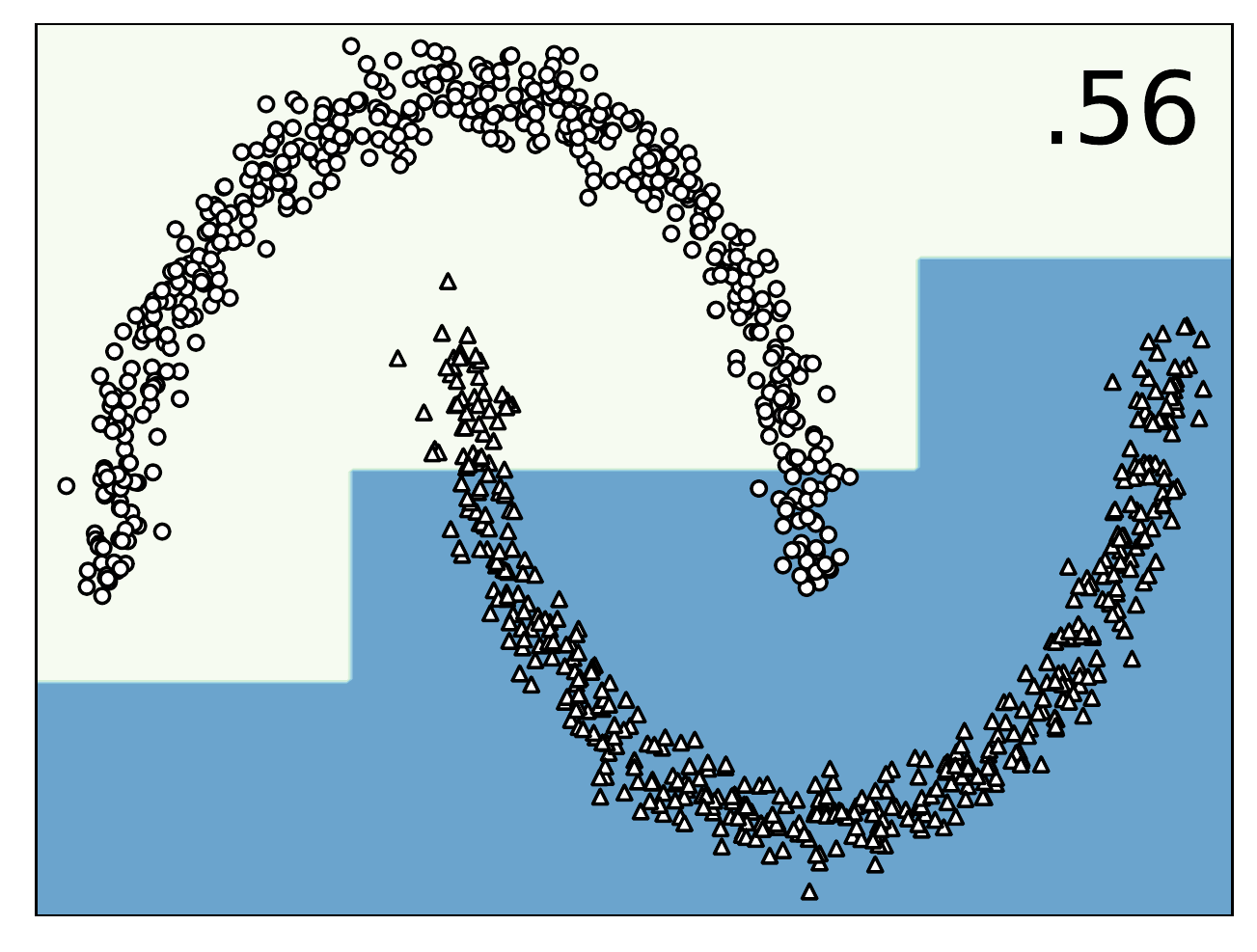}
    \includegraphics[width=0.15\textwidth]{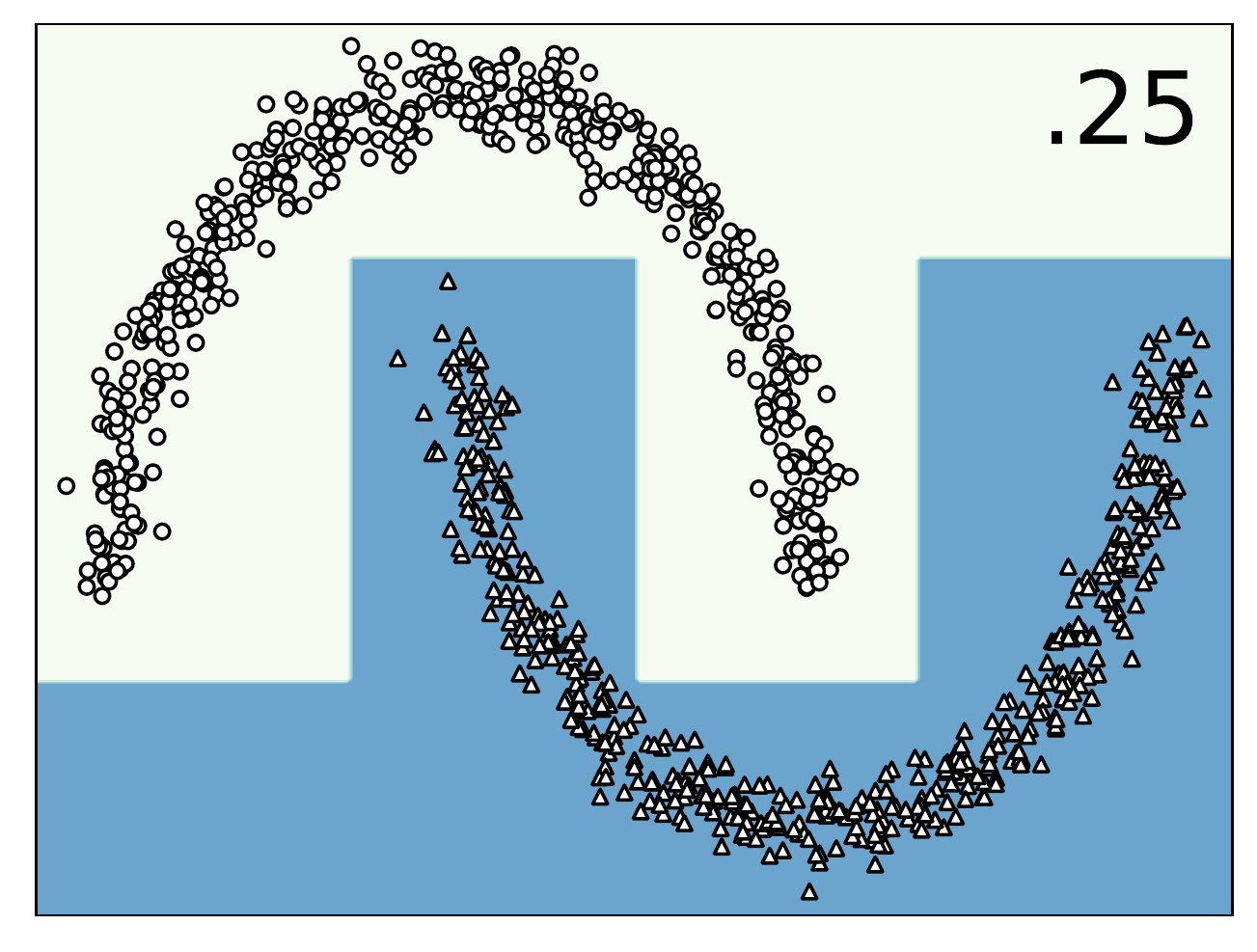}
    \includegraphics[width=0.15\textwidth]{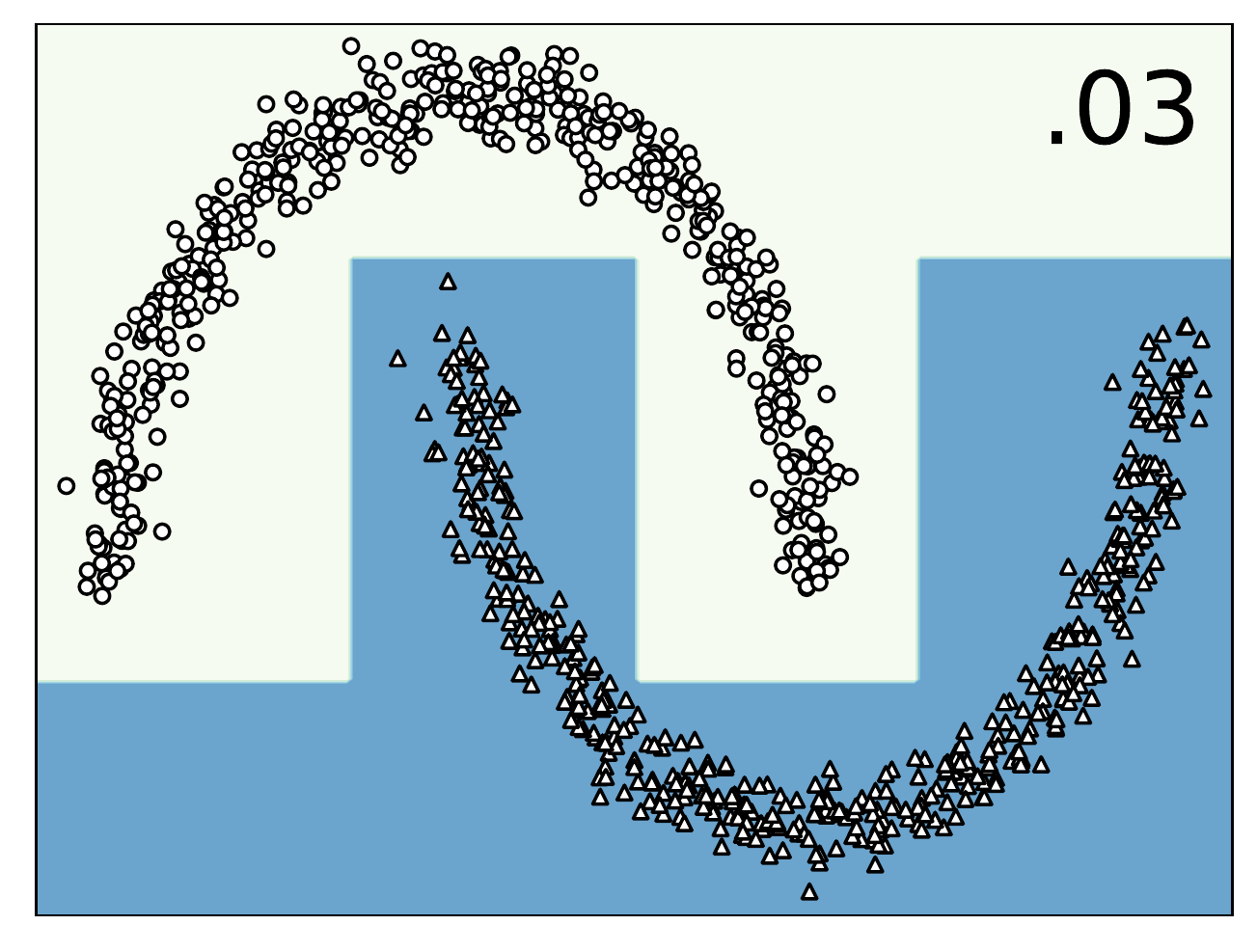}\\
    \subcaptionbox{Bayesian NB}{\includegraphics[width=0.16\textwidth]{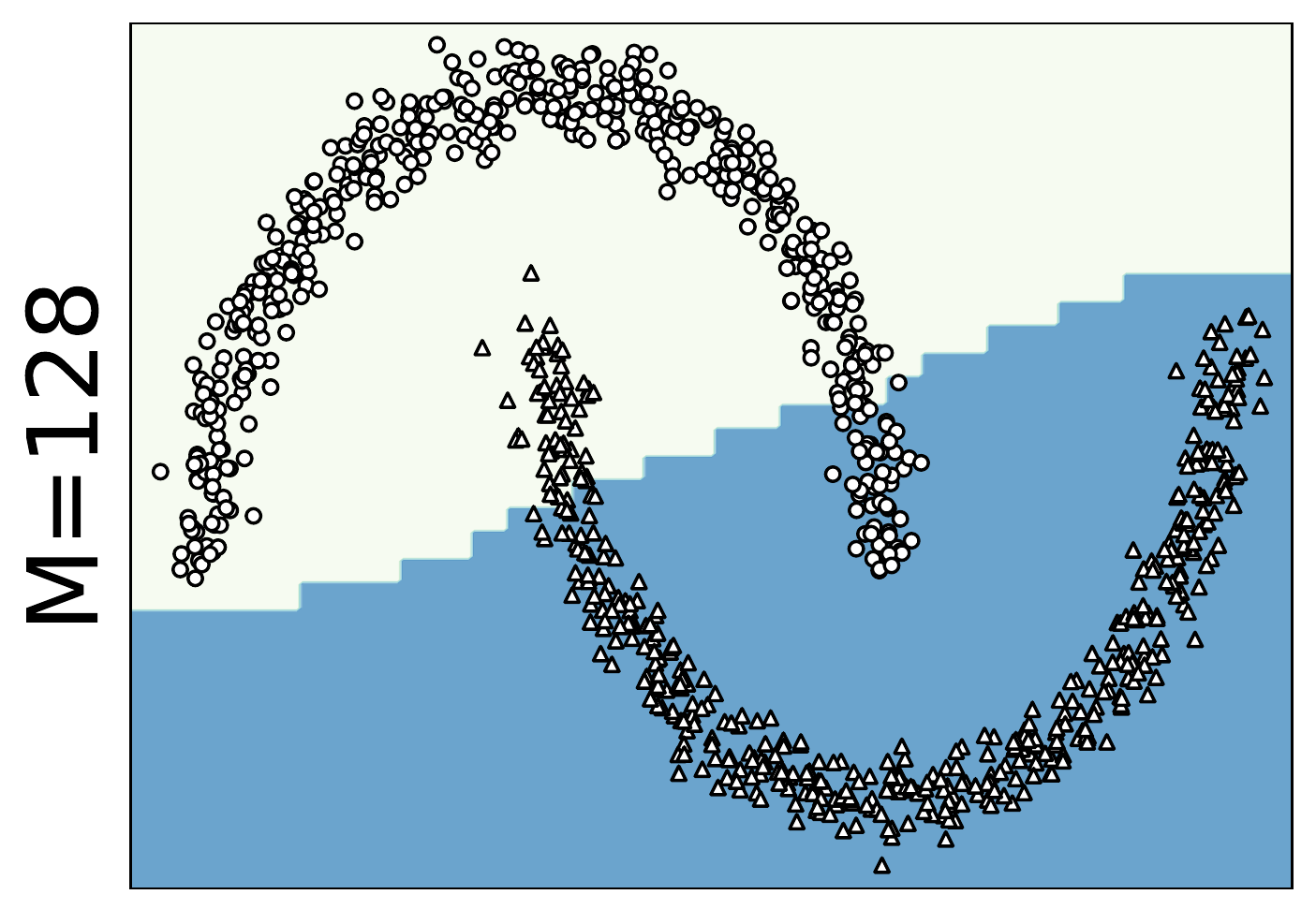}}
    \subcaptionbox{First Order}{\includegraphics[width=0.15\textwidth]{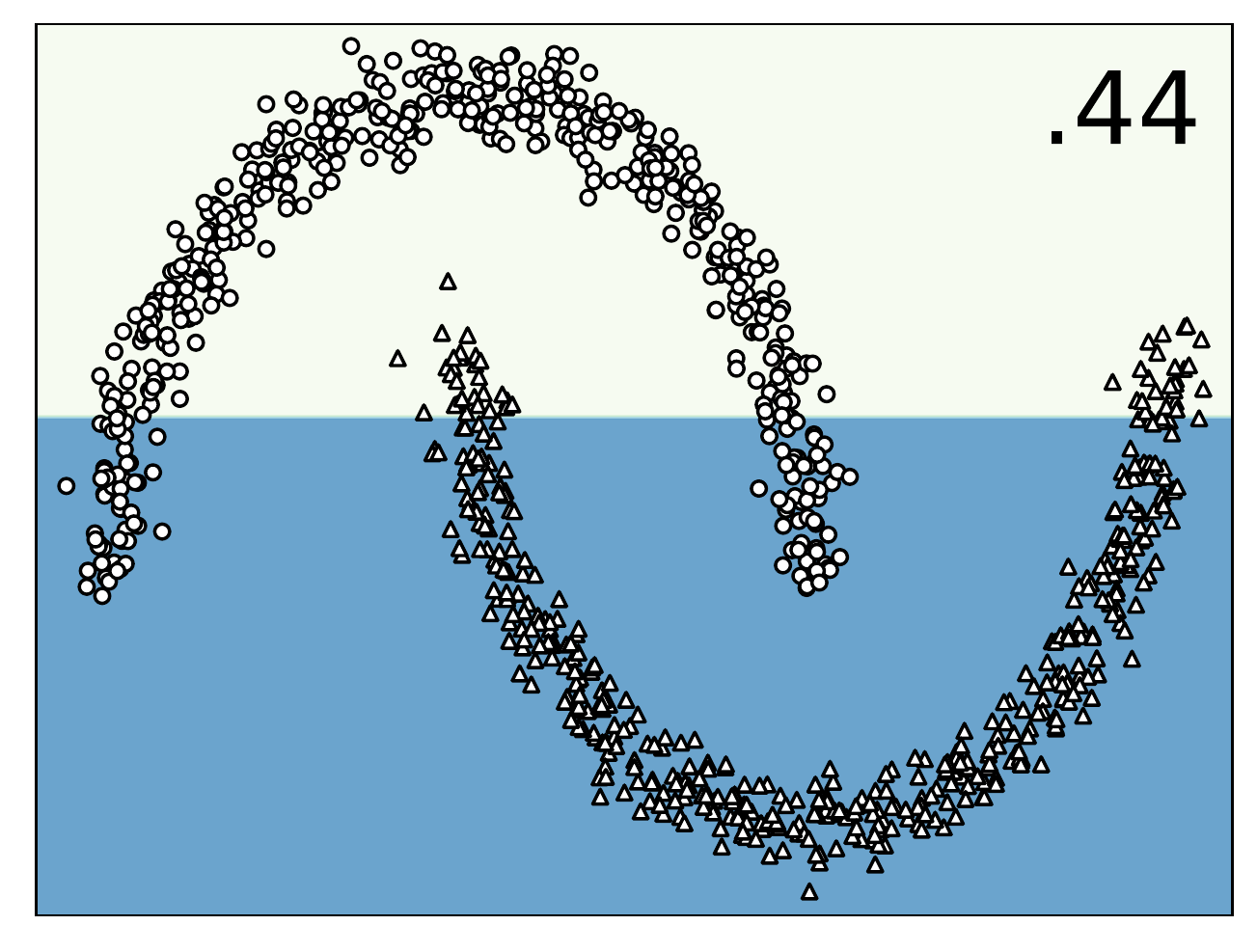}}
    \subcaptionbox{Second Order}{\includegraphics[width=0.15\textwidth]{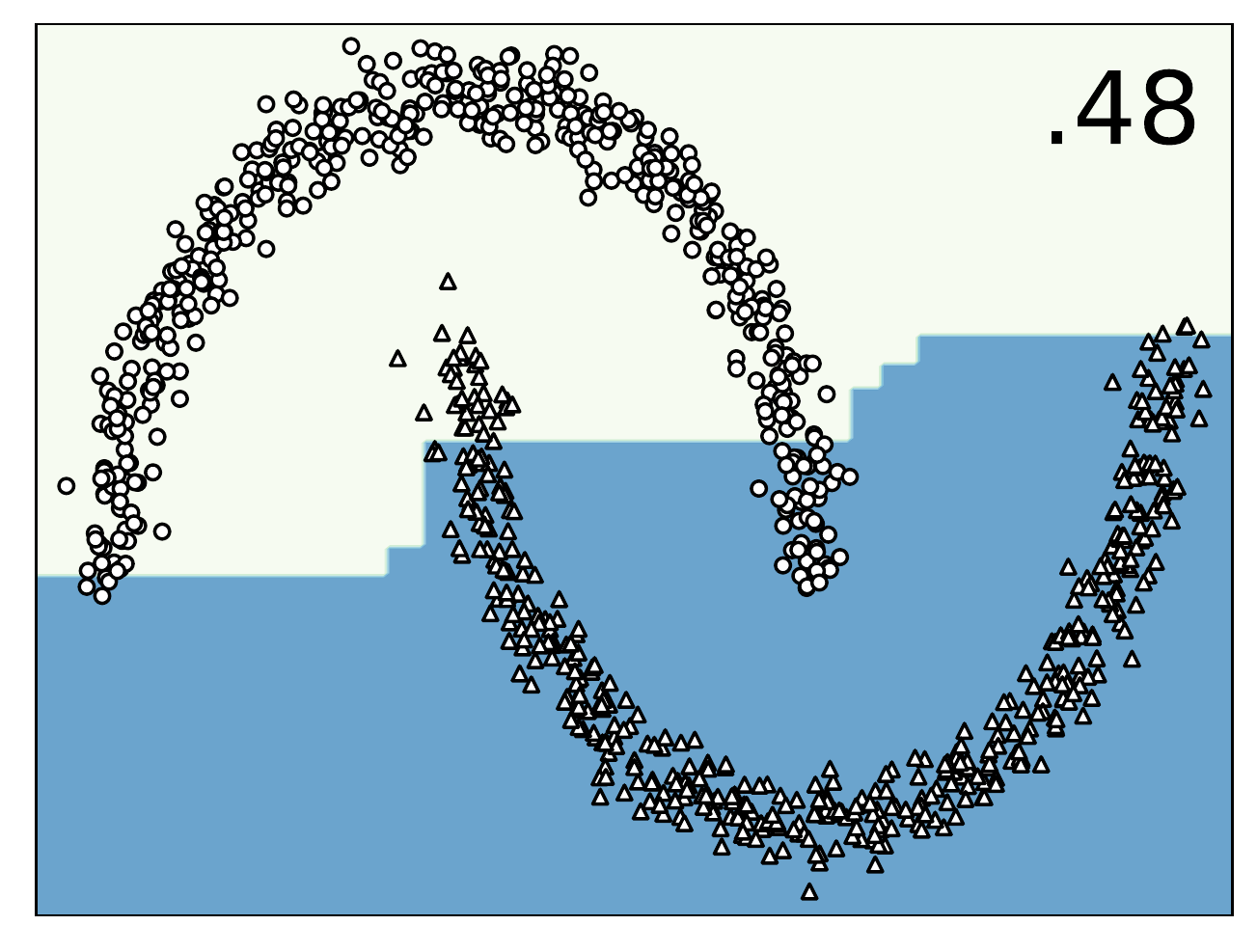}}
    \subcaptionbox{C-Bound}{\includegraphics[width=0.15\textwidth]{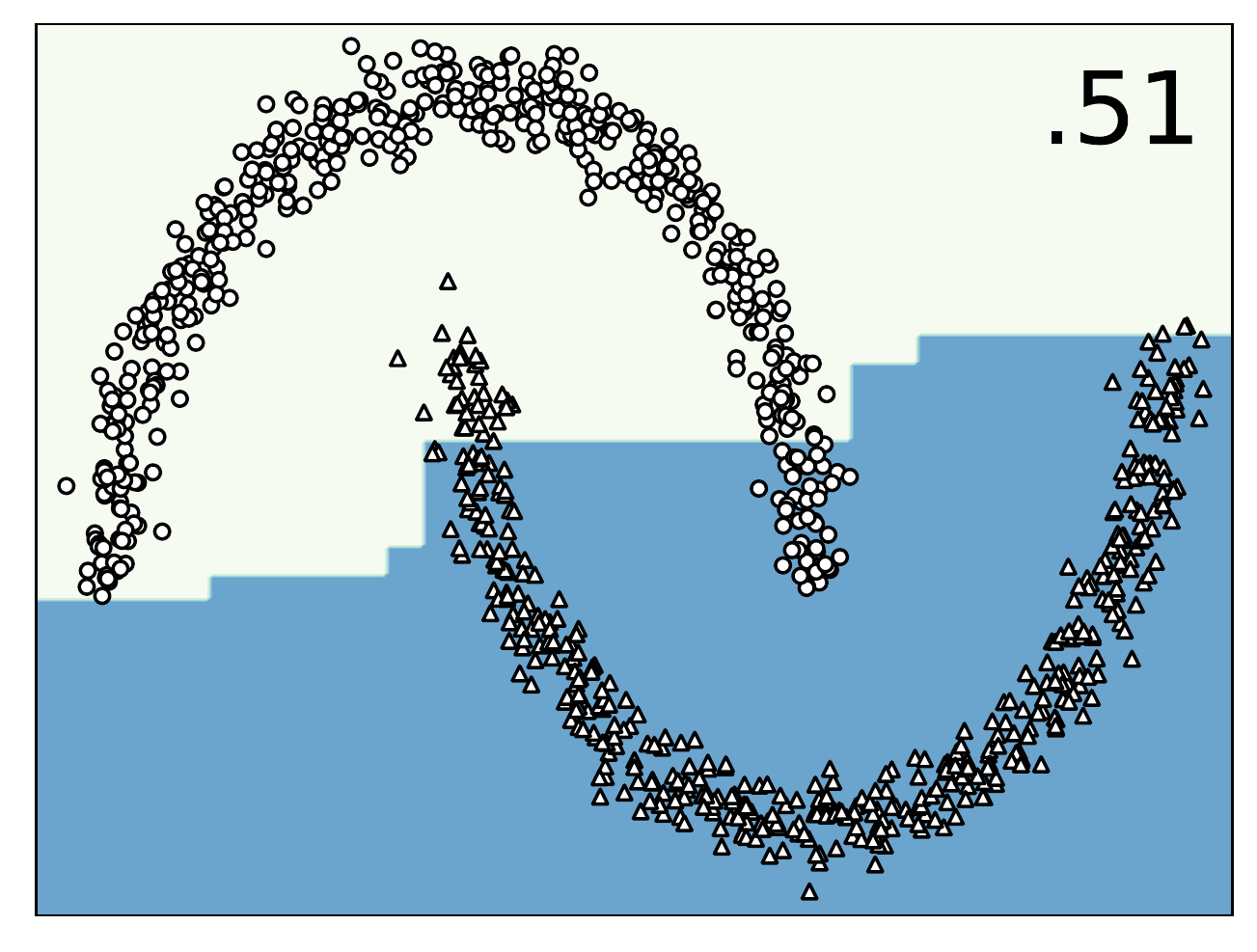}}
    \subcaptionbox{Binomial}{\includegraphics[width=0.15\textwidth]{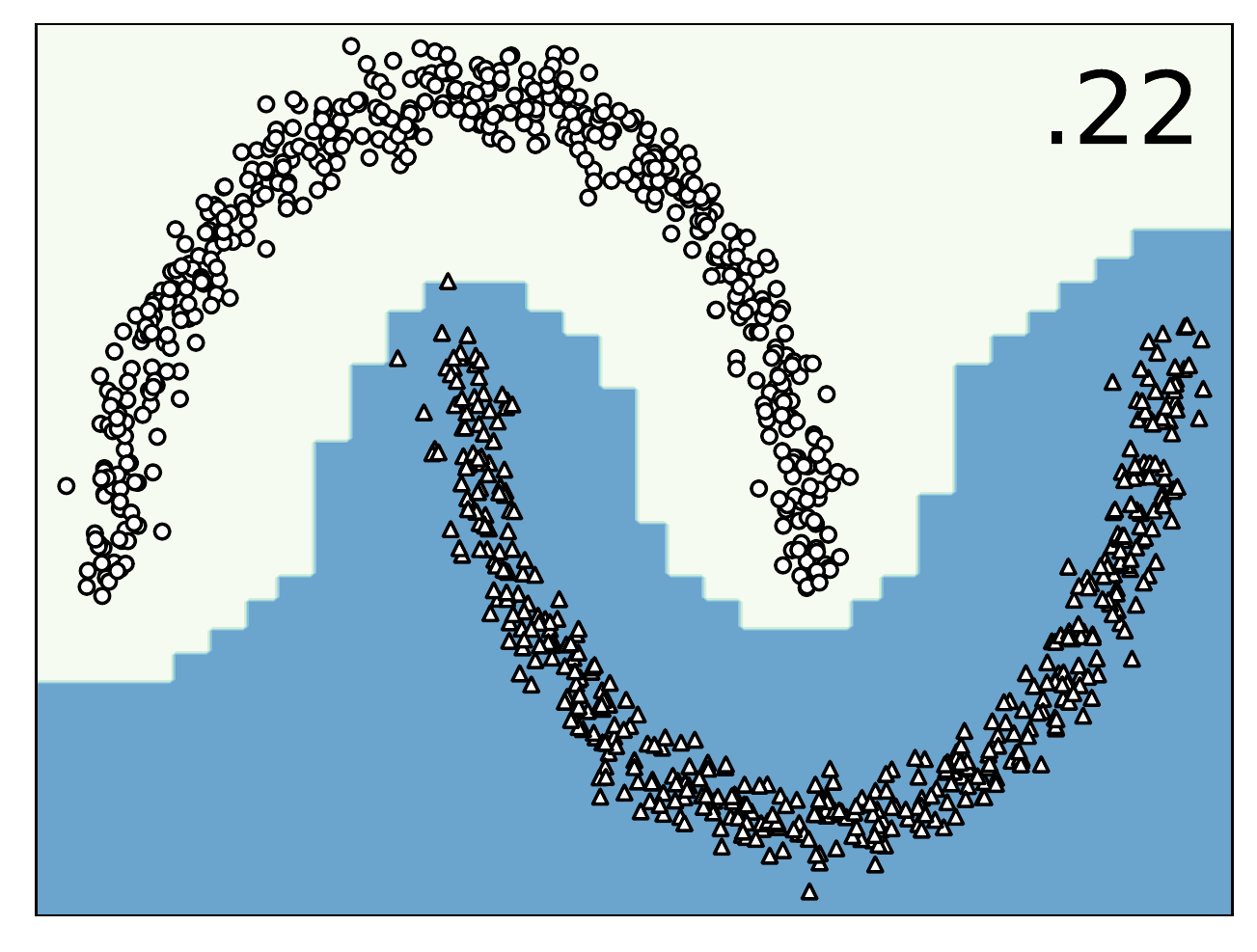}}
    \subcaptionbox{ours}{\includegraphics[width=0.15\textwidth]{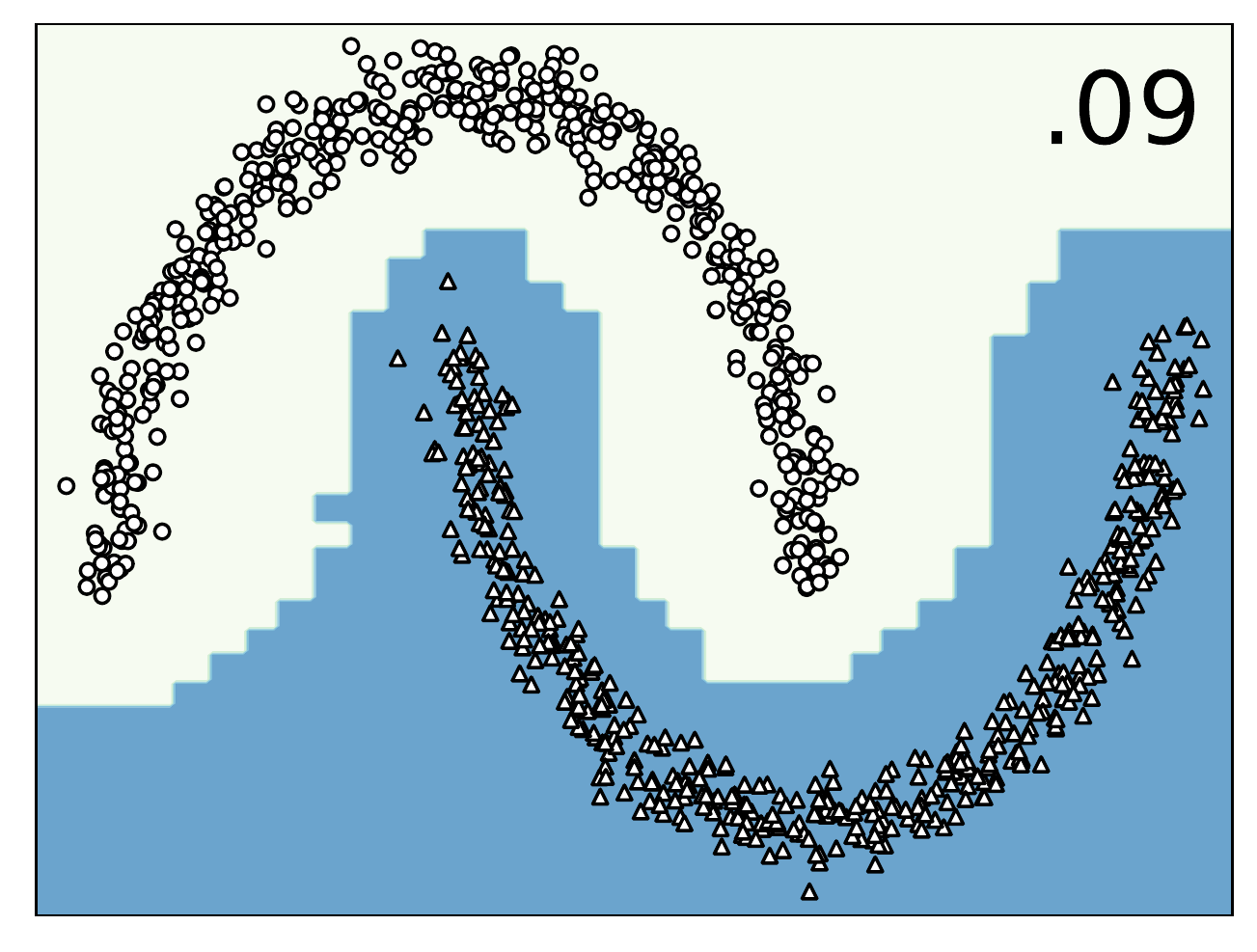}}
    \caption{Decision surface for \emph{Bayesian Naive Bayes}, the  PAC-Bayesian methods \emph{First Order}, \emph{Second Order}, \emph{C-Bound} and \emph{Binomial} (with $N=100$) and our method on the \emph{two-moons} dataset (where each half-circle is a class and inputs lie in $\mathcal{X} = [-2, 2]^2$) with $16$ (top) and $128$ (bottom) decision stumps as base classifiers (axis-aligned and evenly distributed over the input space). 
    Predicted labels are plotted with different colors, and training points are marked in black.
    When available, the value of the generalization bound is marked in the right-hand-side-top corner.
    % Additional plots are reported in the appendix.
    }
    \label{fig:moons-pred}
\end{figure}

\section{Our approach: stochastic weighted majority vote}
\label{sec:stoc}
% \subsection{Definition of \coolname}

In our framework for deriving generalization bound for Majority Vote classifiers, we consider $f_\theta$ as a realization of a distribution of majority vote classifiers, with parameter $\theta \in \Theta \subseteq \Rb^M$ and probability measure $\rho$, as represented in Figure~\ref{fig:simplex}.
The main advantage of considering a stochastic majority vote is that it allows to derive and optimize PAC-Bayesian generalization bounds directly on its true risk and to fully leverage the set of base classifiers.
The true risk of the proposed stochastic weighted majority vote takes into account the whole distribution of MVs $\rho(\theta)$, as follows:
\begin{align}\label{eq:stoctruerisk}
    \int_\Theta R(f_\theta) \rho(d\theta) = \int_\Theta \Ex_\Pc \;\mathds{1}(W_\theta(X, Y) \geq 0.5) \rho(d\theta) = \Ex_\Pc \;\int_\Theta \;\mathds{1}(W_\theta(X, Y) \geq 0.5) \rho(d\theta).
\end{align}

% \begin{remark}
% Equation~\eqref{eq:stoctruerisk} corresponds to Equation~\eqref{eq:truerisk} when $\rho$ is a dirac distribution: $\rho = \delta(\theta)$.
% \end{remark}

Given its stochastic nature, in order to evaluate and/or minimize Equation~\eqref{eq:stoctruerisk} we can either \mbox{(i) compute} its closed form or (ii) approximate it (\eg\ through Monte Carlo methods).
In both cases, assumptions have to be made on the form of $\rho$.
As the components of $\theta$ are constrained to sum to one, $\theta$ lies in the ($M\text{-}1$)-simplex: $\Theta = \Delta^{(M\text{-}1)}$, hence natural choices for its probability measure are \eg\ the Dirichlet or the Logit Normal distributions.
In the following, we show that under Dirichlet assumptions, we can derive an analytical and differentiable form of the risk of a stochastic MV.

\subsection{Exact risk under Dirichlet assumptions}
First of all, we recall that the probability density function of the Dirichlet distribution is defined by: %as follows:
\begin{equation}
    \theta \sim \Dc(\alpha_1,\dots,\alpha_M),\quad\;
    \rho(\theta) = \frac{1}{B(\alpha)} \prod_{j=1}^M (\theta_j)^{\alpha_j-1},
\end{equation}
with $\alpha = [\alpha_j \in \Rb^{+}]_{j=1}^M$ the vector of concentration parameters and $B(\alpha)$ a normalization factor (see App~\ref{app:utils} for its definition).
Notice that by taking $\alpha$ as the vector of all ones, the distribution corresponds to a uniform distribution over the ($M\text{-}1$)-simplex $\Delta^{M-1}$.
% Moreover, $\rho(\theta)$ corresponds to a dirac $\delta(\theta)$ when $ \exists \alpha_j \in \alpha$ such that $\alpha_j \to \infty$.

Under these assumptions for the MV distribution, a closed form can be derived for the expected risk:

\begin{lemma}~\label{eq:betainc}
For a given $(x, y) \sim \Pc$, let $w = \{j | h_j(x) \neq y \}$ be the set of indices of the base classifiers that misclassify $(x, y)$ and $c = \{j | h_j(x) = y \}$ be the set of indices of the base classifiers that correctly classify $(x, y)$. 
The expected error (or $01$-loss) for $(x, y)$ of the stochastic majority vote under $\theta \sim \mathcal{D}(\alpha_1,\dots,\alpha_M)$ is equal to
\begin{equation}
   \int_\Theta \;\mathds{1}(W_\theta(x, y) \geq 0.5) \rho(d\theta) =  I_{0.5}\left(\sum_{j\in c} \alpha_j, \sum_{j\in w} \alpha_j\right) ,
\end{equation}
with $I_{0.5}(\cdot)$ the regularized incomplete beta function evaluated at $0.5$.
% and $B_{p}(\beta_1, \beta_2)$ the incomplete beta function.
\end{lemma}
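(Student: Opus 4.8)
The plan is to reduce the simplex integral to a one-dimensional Beta integral by exploiting the aggregation property of the Dirichlet distribution. First I would observe that $W_\theta(x,y) = \sum_{j\in w}\theta_j$, so the event $\{W_\theta(x,y)\geq 0.5\}$ is exactly the event that the total mass $\theta$ places on the ``wrong'' coordinates is at least $\tfrac12$. Write $U \eqdef \sum_{j\in w}\theta_j$. The key classical fact about the Dirichlet is that partitioning the coordinates into two blocks and summing within each block yields a Dirichlet distribution on the two-block proportions; concretely, if $\theta\sim\Dc(\alpha_1,\dots,\alpha_M)$ then $(\,\sum_{j\in c}\theta_j,\ \sum_{j\in w}\theta_j\,)\sim\Dc\!\left(\sum_{j\in c}\alpha_j,\ \sum_{j\in w}\alpha_j\right)$, which is a Beta distribution: $U\sim\mathrm{Beta}(b_c,b_w)$ with $b_c\eqdef\sum_{j\in c}\alpha_j$ and $b_w\eqdef\sum_{j\in w}\alpha_j$.

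Granting that, the computation is immediate:
\begin{align*}
\int_\Theta \mathds{1}(W_\theta(x,y)\geq 0.5)\,\rho(d\theta)
= \mathbb{P}(U\geq 0.5)
= \int_{0.5}^{1}\frac{u^{\,b_w-1}(1-u)^{\,b_c-1}}{B(b_w,b_c)}\,du
= I_{0.5}(b_c,b_w),
\end{align*}
where the last equality uses the standard symmetry/complement identity for the regularized incomplete beta function, $\int_{t}^{1} u^{\beta-1}(1-u)^{\alpha-1}du / B(\alpha,\beta) = 1 - I_t(\beta,\alpha) = I_{1-t}(\alpha,\beta)$, evaluated at $t=0.5$ so that $1-t=t$ and the arguments land in the order $\left(\sum_{j\in c}\alpha_j,\sum_{j\in w}\alpha_j\right)$ as claimed. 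One should also note the degenerate cases: if $w=\emptyset$ then $W_\theta\equiv 0$ and the left side is $0$, consistent with $I_{0.5}(\sum_j\alpha_j,0)=0$; symmetrically if $c=\emptyset$ the value is $1$.

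The main obstacle — really the only non-routine point — is establishing the Dirichlet aggregation property cleanly, or equivalently justifying that $U=\sum_{j\in w}\theta_j$ has the stated Beta law. I would either cite it as a standard property of the Dirichlet family, or prove it directly: represent $\theta$ via independent Gammas, $\theta_j = G_j/\sum_k G_k$ with $G_j\sim\mathrm{Gamma}(\alpha_j,1)$ independent; then $\sum_{j\in w}\theta_j = (\sum_{j\in w}G_j)/(\sum_k G_k)$, and since $\sum_{j\in w}G_j\sim\mathrm{Gamma}(b_w,1)$ and $\sum_{j\in c}G_j\sim\mathrm{Gamma}(b_c,1)$ are independent, their normalized ratio is $\mathrm{Beta}(b_w,b_c)$ by the usual Gamma/Beta construction. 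Everything else is a one-line change of variables and the definition of $I_t$, so the proof is short once this reduction is in place.
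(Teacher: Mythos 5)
Your proof is correct and follows essentially the same route as the paper's: rewrite $W_\theta(x,y)=\sum_{j\in w}\theta_j$, invoke the Dirichlet aggregation property to get a Beta marginal, and evaluate the upper tail via the symmetry $I_p(a,b)=1-I_{1-p}(b,a)$ (your Gamma-representation justification of aggregation and the check of the degenerate cases $w=\emptyset$, $c=\emptyset$ are welcome additions the paper only cites or omits). One cosmetic slip: you first write $U\sim\mathrm{Beta}(b_c,b_w)$, which contradicts both the density you then integrate and your own Gamma argument; it should read $U\sim\mathrm{Beta}(b_w,b_c)$, and the rest of the computation is consistent with that.
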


\begin{proof}
We rewrite $W_\theta$ as %follows
%\begin{align}
    $W_\theta(x, y) = \sumj \theta_j \mathds{1}(h_j(x) \neq y) = \sum_{j \in w} \theta_j$,
%\end{align}
and use the aggregation property of Dirichlet distributions to show that $W_\theta$ follows a bivariate Dirichlet distribution (\aka Beta distribution):

\begin{lemma}\label{eq:aggrdir}
If $\theta \sim \mathcal{D}(\alpha_1, \dots, \alpha_M)$, then for any $j \in [M]$ and $j' \in [1, M] \setminus \{j\}$ the variable $\theta'$ formed by dropping $\theta_j$ and $\theta_{j'}$ and adding their sum also follows a Dirichlet distribution
$$ (\theta_1, \dots,\theta_M, \theta_j+\theta_{j'}) \sim \mathcal{D}(\alpha_1, \dots, \alpha_M, \alpha_j + \alpha_{j'}).$$
\end{lemma}

A proof of this property can be found in~\url{https://vannevar.ece.uw.edu/techsite/papers/documents/UWEETR-2010-0006.pdf}.
Hence $W_\theta$ follows a Beta distribution over the two sets of wrong and correct base classifiers:

\begin{align}
  \nonumber 
  \mathbb{P}[W_\theta(x, y) = \omega] =\ & \mathbb{P}\left[\sum_{j \in w} \theta_j = \omega\right] \mathbb{P}\left[\sum_{j \in c} \theta_j = 1 - \omega\right] 
  \\
    \Longrightarrow\ & W_\theta(x, y) \sim \Dc\left(\sum_{j \in w} \alpha_j, \sum_{j \in c} \alpha_j\right) \quad \text{by aggregation}.
\end{align}

Finally, notice that the expected error is related to the cumulative probability function of $W_\theta$, the incomplete beta function $I_p: \Rb^{+} \times \Rb^{+} \to [0, 1]$:

\begin{align}
    \int_\Theta \;\mathds{1}(W_\theta(x, y) \geq 0.5) \rho(d\theta) &= \int_{0.5}^1  \mathbb{P}[d W_\theta(x, y)] \; \\
    &= 1 - I_{0.5}\left(\sum_{j \in w} \alpha_j, \sum_{j \in c} \alpha_j\right) = I_{0.5}\left(\sum_{j \in c} \alpha_j, \sum_{j \in w} \alpha_j\right) \label{eq:betaincsym}.
\end{align}

Equation~\eqref{eq:betaincsym} follows by symmetry of the incomplete beta function: $I_p(a, b) = 1 - I_{1-p}(b, a)$.
\end{proof}

The expected risk $R(\rho)$ can be then expressed as follows:
\begin{equation}
    R(\rho) = \Ex_{\Pc} \int_\Theta \;\mathds{1}(W_\theta(x, y) \geq 0.5) \rho(d\theta) = \Ex_{\Pc} I_{0.5}\left(\sum_{j \in c} \alpha_j, \sum_{j \in w} \alpha_j\right).
\end{equation}
Importantly, this exact form of the risk is differentiable, hence can be directly optimized by gradient-based methods.

\subsection{Monte Carlo approximated risk}
We now propose a relaxed Monte Carlo (MC) optimization scheme for those distributions that do not admit an analytical form of the expected risk, unlike the Dirichlet one.
This second strategy is also suited to speed up training, in some cases, as the derivatives of the exact risk depend on the hyper-geometric function and can be slow to evaluate (see App.~\ref{app:derivatives}). 
With the approximated scheme, in order to update $\alpha$ by gradient descent we need to relax the true risk as the gradients of the $01$-loss are always null for discrete $W_\theta$.
In practice, we make use of a \emph{tempered} sigmoid loss $\sigma_c(x) = \frac{1}{1+\exp(-cx)}$ with slope parameter $c \in \Rb^{+}$.
De facto this corresponds to solving a relaxation of the problem and not its exact form~\citep{Nesterov05}.
At each iteration of the MC optimization algorithm we perform:

\begin{enumerate}
    \item Draw a sample $\{\theta_t \sim \rho(\alpha)\}_{t=1}^T$ using the implicit reparameterization trick~\citep{figurnov2018implicit, jankowiak2018pathwise};\\
    \item Compute the relaxed empirical risk $\sum_{t=1}^T \hat{R}_{\sigma_c}(\theta) = \sum_{t=1}^T \sumi \sigma_c(W_{\theta_t}(x_i, y_i)-0.5)$;\\
    \item Update $\alpha$ by gradient descent.
\end{enumerate}

\iffalse % NeurIPS
{\bf 1.}~Draw a sample $\{\theta_t \sim \rho(\alpha)\}_{t=1}^T$ using the implicit reparameterization trick~\citep{figurnov2018implicit, jankowiak2018pathwise};\\
{\bf 2.}~Compute the relaxed empirical risk $\sum_{t=1}^T \hat{R}_{\sigma_c}(\theta) = \sum_{t=1}^T \sumi \sigma_c(W_\theta(x_i, y_i)-0.5)$;\\
{\bf 3.}~Update $\alpha$ by gradient descent.
\fi
% \ben{Theory: it would be nice to have a small paragraph explaining the tradeoff between the approximation (MC) and the exact formula, how beneficial it is from a computational perspective, and what is the threshold (in terms of $M$ and $T$) to identify in which regime the approximation is interesting or not}
Notice that when considering Dirichlet distributions for the posterior and the prior, at inference time the empirical PAC-Bayesian bounds can still be evaluated using the exact form of Lemma~\ref{eq:betainc}.

A drawback of the approximated scheme is that it has a complexity linear in the number of MC draws $T$, but also linear in the number of predictors $M$, as sampling over the simplex requires sampling from $O(M)$ distributions, one per base classifier.
As an example, sampling from a Dirichlet over the ($M\text{-}1$)-simplex is usually implemented as sampling from $M$ Gamma distributions and normalizing the samples so that they lie on the simplex. 
In contrast, the exact formulation's complexity is constant in $M$ as it depends only on the sets of wrong and of correct predictors, hence on a constant number of variables ($2$) no matter the number of predictors $M$.
In Section~\ref{sec:exact-vs-MC} we empirically study the trade-off between training time and accuracy, showing in which regimes it is more convenient to optimize the relaxed MC risk than optimizing the exact one, and viceversa.

\section{PAC-Bayesian generalization guarantees}\label{sec:bounds}
We now derive PAC-Bayesian generalization upper bounds for the proposed stochastic MV.
In our context, upper bounds can be derived for studying the gap between true and empirical risk considering a prior distribution $\pi$ over the hypothesis space $\Theta$.
% For instance, the~\cite{mcallester1999} bound is of the form: 
% \begin{theorem}\label{eq:mcallester}
% For any $\pi$ and $\delta \in (0,1)$ with probability at least $1{-}\delta$ over samples \mbox{$S = \{(x_i, y_i) {\sim} \Pc\}_{i=1}^n$} of size $n$ we have simultaneously for any $\rho$ over $\Theta$:
% \begin{equation}
%     \int_\Theta R(f_\theta) \rho(d\theta) \leq \int_\Theta \hat{R}(f_\theta) \rho(d\theta) + \sqrt{\frac{KL(\rho, \pi) + \ln\left(\frac{2\sqrt{n}}{\delta}\right)}{2n}},
% \end{equation}
% with $\hat{R}(f_\theta) = \frac{1}{n} \sumi \mathds{1}(W_\theta(x_i, y_i) \geq 0.5)$ the empirical risk of sample $S$.
% \end{theorem}
In this paper, we make use of one of the tightest classical PAC-Bayesian bound~\citep{seeger2002pac,maurer2004note}:
\begin{theorem}[Seeger's bound]\label{eq:seeger}
For any $\pi$ over $\Theta$ and $\delta \in (0,1)$ with probability at least $1{-}\delta$ over samples \mbox{$S = \{(x_i, y_i) {\sim} \Pc\}_{i=1}^n$} of size $n$ we have simultaneously for any posterior $\rho$ over $\Theta$:
\begin{equation}
    \int_{\Theta} R(f_\theta) \rho(d\theta) \leq \mathrm{kl}^{-1}\left(\int_\Theta \hat{R}(f_\theta) \rho(d\theta)\ ,\  \frac{\mathrm{KL}(\rho, \pi) + \ln\left(\frac{2\sqrt{n}}{\delta}\right)}{n}\right),
\end{equation}
with $\hat{R}(f_\theta) = \frac{1}{n} \sumi \mathds{1}(W_\theta(x_i, y_i) \geq 0.5)$ the empirical risk on sample $S$, $\mathrm{KL}(\rho, \pi) = \int_{\Theta} \rho(\theta) \log \frac{\rho(\theta)}{\pi(\theta)} \; d\theta$ the KL divergence and $\mathrm{kl}^{-1}(q, \epsilon)$ the inverse of the binary KL divergence defined as
$ \mathrm{kl}^{-1}(q, \epsilon) = \max \{p \in [0, 1] \;|\; \mathrm{kl}(q, p) \leq \epsilon\}.$
\end{theorem}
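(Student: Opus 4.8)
The plan is to follow the standard route for deriving $\mathrm{kl}$-type PAC-Bayesian bounds: change of measure, control of an exponential moment, Markov's inequality, and then linearization via Jensen. Nothing specific to the stochastic majority vote is actually needed beyond the fact that $\hat R(f_\theta)$ is an empirical average of $n$ i.i.d.\ $\{0,1\}$-valued losses $\mathds{1}(W_\theta(x_i,y_i)\ge 0.5)$ with mean $R(f_\theta)$, so the statement is the classical Seeger--Maurer bound instantiated on the family $\{f_\theta\}_{\theta\in\Theta}$ with the $01$-loss.

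First I would apply the Donsker--Varadhan change-of-measure (Gibbs variational) inequality: for any measurable $\phi:\Theta\to\Rb$ and any $\pi,\rho$ with $\rho$ absolutely continuous \wrt\ $\pi$,
\[
  \int_\Theta \phi(\theta)\,\rho(d\theta)\;\le\;\mathrm{KL}(\rho,\pi)\;+\;\ln\int_\Theta e^{\phi(\theta)}\,\pi(d\theta).
\]
The point is that this holds for every posterior $\rho$ at once, which is exactly what yields the ``simultaneously for any posterior'' quantifier, the prior $\pi$ being fixed before seeing $S$. I would instantiate it with $\phi(\theta)=n\,\mathrm{kl}\big(\hat R(f_\theta),R(f_\theta)\big)$.

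Next I would bound $\Ex_S\!\int_\Theta e^{n\,\mathrm{kl}(\hat R(f_\theta),R(f_\theta))}\pi(d\theta)$. Since $\pi$ is data-independent, Fubini's theorem lets me exchange $\Ex_S$ with the integral against $\pi$, so it suffices to bound $\Ex_S\, e^{n\,\mathrm{kl}(\hat R(f_\theta),R(f_\theta))}$ for a fixed $\theta$. Here $n\hat R(f_\theta)$ is $\mathrm{Binomial}(n,R(f_\theta))$-distributed, and Maurer's lemma gives $\Ex_S\, e^{n\,\mathrm{kl}(\hat R(f_\theta),R(f_\theta))}\le 2\sqrt n$. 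This exponential-moment estimate is the one ingredient I would cite rather than reprove: it is the main technical obstacle (it rests on a careful Stirling-type control of the binomial tail), and it is precisely the source of the $2\sqrt n$ factor, hence of the $\ln(2\sqrt n/\delta)$ term in the statement.

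Then Markov's inequality yields, with probability at least $1-\delta$ over $S$, that $\int_\Theta e^{n\,\mathrm{kl}(\hat R(f_\theta),R(f_\theta))}\pi(d\theta)\le 2\sqrt n/\delta$. Plugging this into the change-of-measure inequality and dividing by $n$ gives, simultaneously for all $\rho$,
\[
  \int_\Theta \mathrm{kl}\big(\hat R(f_\theta),R(f_\theta)\big)\,\rho(d\theta)\;\le\;\frac{\mathrm{KL}(\rho,\pi)+\ln(2\sqrt n/\delta)}{n}.
\]
Finally, joint convexity of $(q,p)\mapsto\mathrm{kl}(q,p)$ together with Jensen's inequality lower-bounds the left-hand side by $\mathrm{kl}\big(\int_\Theta\hat R(f_\theta)\rho(d\theta),\int_\Theta R(f_\theta)\rho(d\theta)\big)$. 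Writing $\bar q=\int_\Theta\hat R(f_\theta)\rho(d\theta)$ and $\epsilon$ for the right-hand side, and using that $\mathrm{kl}^{-1}(\bar q,\epsilon)$ is by definition the largest $p\in[0,1]$ with $\mathrm{kl}(\bar q,p)\le\epsilon$, we conclude $\int_\Theta R(f_\theta)\rho(d\theta)\le\mathrm{kl}^{-1}(\bar q,\epsilon)$, which is the claimed inequality. (If $\mathrm{KL}(\rho,\pi)=\infty$ the bound is vacuously true, so there is no issue with absolute continuity.)
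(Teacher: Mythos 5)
Your argument is correct and is exactly the classical Seeger--Maurer proof (Donsker--Varadhan change of measure, Maurer's exponential-moment bound $\Ex_S e^{n\,\mathrm{kl}(\hat R,R)}\le 2\sqrt n$, Markov, then Jensen via joint convexity of $\mathrm{kl}$), which is precisely the proof the paper defers to by citing \citet{seeger2002pac} and \citet{maurer2004note} rather than reproving. The only caveat worth recording is that Maurer's $2\sqrt n$ estimate is stated for $n\ge 8$, a restriction the paper also leaves implicit.
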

A proof of Theorem~\ref{eq:seeger} can be found in \citet{seeger2002pac}.
The $\mathrm{kl}^{-1}$ function can be evaluated via the bisection method and optimized by gradient descent, as proposed in~\citet{reeb2018learning}.
Note that our contributions do not restrict the choice of generalization bound.

Importantly, Theorem~\ref{eq:seeger} is valid when the prior $\pi$ is independent from the data.
Thus it cannot be evaluated with base classifiers learned from the training sample.
%\cite{parrado2012pac}
%\texbf{Empirical bound with informed priors}
However, it is known that considering a data-dependent prior can lead to tighter PAC-Bayes bounds~\citep{dziugaite2021role}.
%(and sometimes can make the difference between vacuous and non-vacuous bounds). 
%To overcome this drawback, we build upon
Following recent works on PAC-Bayesian bounds with data-dependent priors~\citep{thiemann2017strongly,mhammedi2019pac}, we derive a cross-bounding certificate that allows us to learn and evaluate the set of base classifiers without held-out data.
More precisely, we split the training data $S$ into two subsets ($S_{\le m} {=} \{(x_i, y_i) \in S\}_{i=1}^m$ and $S_{> m} {=} \{(x_i, y_i) \in S\}_{i=m{+}1}^n$) and we learn a set of base classifiers on each data split independently (determining the hypothesis spaces \mbox{$\Theta_{\le m}$ and $\Theta_{> m}$}).
We refer to the prior distribution over $\Theta_{\le m}$ as $\pi_{\le m}$ and to the prior distribution over $\Theta_{> m}$ as $\pi_{> m}$.
In the same way, we can then define a posterior distribution per hypothesis space: $\rho_{\le m}$ and $\rho_{> m}$.
The following theorem shows that we can bound the expected risk of any convex combination of the two posteriors, as long as their empirical risks are evaluated on the data split that was not used for learning their respective priors.

\begin{theorem}[Seeger's bound with informed priors]\label{eq:informed-prior}
Let $\pi_{\le m}$ and $\rho_{\le m}$ be the prior and posterior distributions on $\Theta_{\le m}$, and $\pi_{>m}$ and $\rho_{>m}$ the prior and posterior distributions on $\Theta_{>m}$.
For any $p {\in} (0, 1)$ and $\delta \in (0,1)$ with probability at least $1{-}\delta$ over samples \mbox{$S {=} \{(x_i, y_i) {\sim} \Pc\}_{i=1}^n$} %of size $n$ 
we have %simultaneously
\begin{multline*}
    \kl\!\left(p \hat{R}(\rho_{>m})+ (1-p) \hat{R}(\rho_{\le m}) \middle\| p R_{\le m}(\rho_{>m}) + (1-p) _{> m}(\rho_{\le m}) \right) \\ \le  \frac{p\; KL(\rho_{>m}, \pi_{>m})}{m}+\frac{(1-p)\;KL(\rho_{\le m}, \pi_{\le m})}{n-m} + \frac{\ln\tfrac{4\sqrt{m(n{-}m)}}{\delta}}{n},
\end{multline*}
with $R(\rho_{>m}) = \int_{\Theta_{>m}}\!\! R(f_\theta) \rho(d\theta)$, and  $R(\rho_{\le m}) = \int_{\Theta_{\le m}}\!\! R(f_\theta) \rho(d\theta)$,\\
and  $\hat{R}_{\le m}(\rho_{>m}) {=} \int_{\Theta_{>m}}\!\! \hat{R}(f_\theta) \rho(d\theta)$, and  $\hat{R}_{\le m}(\rho_{>m}) {=} \int_{\Theta_{\le m}}\!\! \hat{R}(f_\theta) \rho(d\theta)$.
\end{theorem}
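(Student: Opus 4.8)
The plan is to obtain the bound by applying Seeger's bound (Theorem~\ref{eq:seeger}) twice, once to each half of the split, and gluing the two inequalities together with the joint convexity of the binary $\kl$. The conceptual key is that the prior $\pi_{>m}$ and the dictionary $\Theta_{>m}$, although built from data, depend \emph{only} on $S_{>m}$; hence, conditionally on $S_{>m}$, they are a legitimate data-independent prior for the \emph{fresh} i.i.d.\ sample $S_{\le m}$ of size $m$. So the first step is to condition on $S_{>m}$, apply Theorem~\ref{eq:seeger} with sample $S_{\le m}$, prior $\pi_{>m}$ and confidence $\delta/2$, and note that, since the conclusion is valid for every realization of $S_{>m}$, it holds unconditionally with probability at least $1-\delta/2$ over $S$, simultaneously over all posteriors on $\Theta_{>m}$ (in particular over the $\rho_{>m}$ we end up choosing, which may depend on all of $S$):
\[
\kl\!\left(\hat R_{\le m}(\rho_{>m}) \,\middle\|\, R(\rho_{>m})\right) \;\le\; \frac{\mathrm{KL}(\rho_{>m},\pi_{>m}) + \ln\tfrac{4\sqrt m}{\delta}}{m}.
\]
Symmetrically, conditioning on $S_{\le m}$ and applying Theorem~\ref{eq:seeger} with sample $S_{>m}$ (size $n-m$), prior $\pi_{\le m}$ and confidence $\delta/2$ gives, with probability at least $1-\delta/2$ and simultaneously over all $\rho_{\le m}$,
\[
\kl\!\left(\hat R_{>m}(\rho_{\le m}) \,\middle\|\, R(\rho_{\le m})\right) \;\le\; \frac{\mathrm{KL}(\rho_{\le m},\pi_{\le m}) + \ln\tfrac{4\sqrt{n-m}}{\delta}}{n-m}.
\]

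Next I would take a union bound so that both inequalities hold together with probability at least $1-\delta$, and combine them. Writing $q_1=\hat R_{\le m}(\rho_{>m})$, $r_1=R(\rho_{>m})$, $q_2=\hat R_{>m}(\rho_{\le m})$, $r_2=R(\rho_{\le m})$, the binary $\kl$ is jointly convex in its two arguments (it is the relative entropy of two-point distributions; equivalently, apply the log-sum inequality), so for any $p\in(0,1)$,
\[
\kl\!\left(p\,q_1+(1-p)\,q_2 \,\middle\|\, p\,r_1+(1-p)\,r_2\right) \;\le\; p\,\kl(q_1\,\|\,r_1)+(1-p)\,\kl(q_2\,\|\,r_2).
\]
Substituting the two inequalities above and bounding each logarithmic term by $\ln\tfrac{4\sqrt{m(n-m)}}{\delta}$ (using $m,n-m\ge 1$) gives a bound of the stated shape, with the two $\mathrm{KL}$ divergences appearing divided by the effective sample sizes $m$ and $n-m$; pinning down the precise constant on the logarithmic remainder (the $\tfrac1n$) calls for the slightly sharper argument mentioned below.

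The main obstacle is the first step rather than any computation: one must argue carefully that the \emph{data-dependent} priors $\pi_{\le m}$ and $\pi_{>m}$ are admissible, which works precisely because the two dictionaries $\Theta_{\le m},\Theta_{>m}$ were learned on disjoint halves and because each posterior's empirical risk is measured on the half that did \emph{not} shape its prior — any leakage here breaks the conditioning argument. To obtain the constants exactly, it is cleanest to carry out the combination at the level of the exponential-moment inequality behind Theorem~\ref{eq:seeger} (of the form $\Ex_S\exp(k\,\kl(\hat R(h)\|R(h)))\le 2\sqrt k$ for a single $h$ and any $k$) rather than after the $\kl$-inversion: one keeps the two conditional exponential-moment bounds, combines them with weights tied to $p$, $m$ and $n-m$ (the delicate point being to control this product despite the cross-dependence of the two bounds on the two halves of $S$, e.g.\ via Hölder's inequality), applies the Donsker--Varadhan change of measure to introduce the KL terms, and concludes with Markov's inequality.
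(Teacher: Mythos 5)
Your proof follows essentially the same route as the paper's: condition on each half so that the other half's prior is admissible, apply Seeger's bound at level $\delta/2$ to each split, take a union bound, and glue the two inequalities together with the joint convexity of the binary $\kl$. Your worry about pinning down the logarithmic remainder is well-founded but applies equally to the paper's own proof, which combines the two terms $\ln\tfrac{4\sqrt{m}}{\delta}$ and $\ln\tfrac{4\sqrt{n-m}}{\delta}$ in exactly the same way and only lands on the stated $\tfrac{1}{n}\ln\tfrac{4\sqrt{m(n-m)}}{\delta}$ remainder up to the choice $p=m/n$ used in practice, so no sharper exponential-moment argument is attempted there either.
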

The result follows by $\kl$'s convexity. The complete proof is reported in App.~\ref{ap:cross-prior}. 
In practice, following~\citet{mhammedi2019pac} we set $m=\frac{n}{2}$ and $p=\frac{m}{n}$, and we learn the base classifiers by empirical risk minimization.

\paragraph{Comparison with existing bounds.}
Until now, we considered bounds for the expected risk over the space of MVs but not for a single realization $f_\theta$, unlike state-of-the-art methods.
Under Dirichlet assumptions, we can bound the risk of the expected MV $f_{\hat{\theta}}$ by twice the expected risk (see App~\ref{app:deterministic}):
$R(f_{\hat{\theta}}) = R(\Ex_{\rho(\theta)} f_\theta) \leq 2 \Ex_{\rho(\theta)} R(f_\theta)$. 
The obtained oracle bound would comprise an irreducible factor, but its KL term would not degrade, unlike state-of-the-art bounds that account for voter correlation~\citep{lacasse2006pac,lacasse2010learning,NEURIPS2020_38685413}. 
Indeed, the empirical bound does not introduce additional factors on the KL term, such as \emph{Second Order} which has a $2\; KL()$ term and \emph{Binomial} which has a $N\; KL()$ term.
The empirical bound on the deterministic MV could be also refined leveraging %recent 
works on the disintegration %``derandomization'' 
of PAC-Bayesian guarantees~\citep{BlanchardFleuret2007,Catoni2007,rivasplata2020pac,viallard2021general}.
% to refine the bound by considering the risk of the deterministic model as done in~\citet{neyshabur2017pac} or other pointwise techniques~\citet{,}.

A downside of our method is that the complexity of the KL term grows with the number of base classifiers $M$, unlike the KL on categoricals that tends to $0$ in its limit.
This results in making the generalization bounds increasingly looser and conservative with growing $M$, even for low empirical risks.
From a generalization perspective, our guarantees are hence able to reflect the complexity of the model, expressed as the size of the hypothesis space $M$.
However, our risk certificates do not account for redundancy in the voter set.
For instance, they are not able to distinguish scenarios where base classifiers are highly correlated (hence less complex hypothesis space) from scenarios where base classifiers are independent (more complex).
An expedient for ensuring that generalization bounds are tight consists in learning the hypothesis space:
the number of base predictors can then be limited without degrading the performance of the majority vote.
% \ben{comment on the fact that the bound may be used to do model selection, ie select the optimal number of voters -- beyond a certain threshold, the bound increases due to the KL. Mention in the experiments!!}

\begin{figure}[t]
    \centering
    \includegraphics[width=\textwidth]{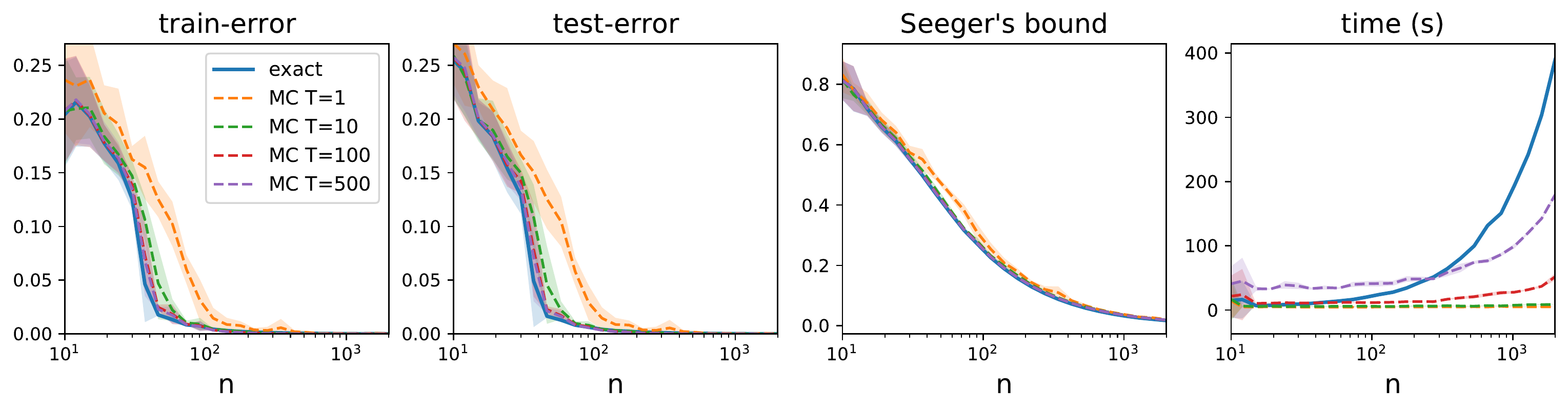}\\
    \includegraphics[width=\textwidth]{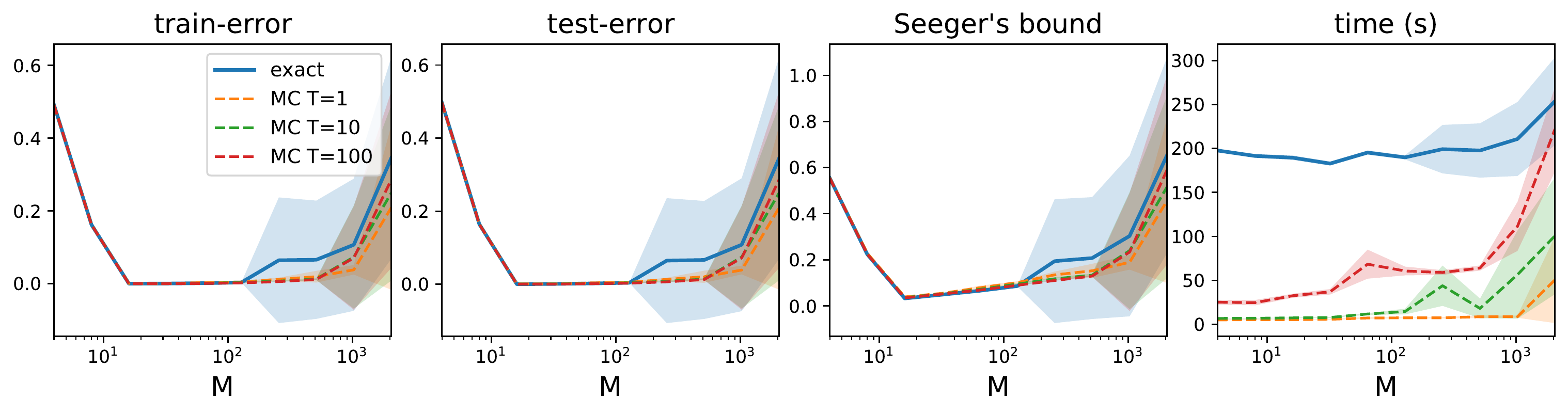}
    \caption{Average performance for $10$ trials of \emph{exact} and \emph{MC} variants of our method as a function of the number of training points $n$ (top, $M$ fixed to $16$) or of decision stumps $M$ (bottom, $n=1000$), both represented in logarithmic scale, for different number of MC draws $T$.}
    \label{fig:moons-MC}
\end{figure}

\section{Experiments}\label{sec:expe}
In this section, we empirically evaluate \coolname, and we compare its generalization bounds and test errors to those obtained with PAC-Bayesian methods learning majority votes.
We show that our method allows to derive generalization bounds that are consistently tight (\ie close to the test errors) and non-vacuous (\ie smaller than $1$) both when studying ensembles of weak predictors and when studying ensembles of strong ones.

In the following, we consider Dirichlet distributions for the prior and the posterior of our method, and refer to the model obtained by optimizing the exact risk as \emph{exact} and the one obtained by optimizing the approximated one as \emph{MC}.
We consider as baselines the PAC-Bayesian 
methods described in Section~\ref{sec:related}: 
We refer to the methods optimizing the First Order, Second Order and Binomial empirical bound as \emph{FO}, \emph{SO} and \emph{Bin} respectively.
We do not compare with the C-Bound, as it is hard to optimize on large-scale datasets and existing algorithms are suited only for binary classification.
All generalization bounds are evaluated with a probability $1 {-} \delta {=} 0.95$ and all prior distributions are set to the uniform (we provide a study for different priors in App.~\ref{app:prior}).
The posterior parameters ($\alpha$ for our method, $\theta$ for the others) are initialized uniformly in $[0.01, 2]$ (and normalized to sum to $1$ for \emph{SO}, \emph{FO} and \emph{Bin}).
Finally, for \emph{MC} the sigmoid's slope parameter $c$ is set to $100$ and for \emph{Bin} the number of voters drawn at each iteration is set to $N{=}100$.
Code, available at \url{https://github.com/vzantedeschi/StocMV}, was implemented in pytorch~\citep{PaszkeGMLBCKLGA19} and all experiments were run on a virtual machine with $8$ vCPUs and $128Gb$ of RAM.

\subsection{Comparison of exact and MC variants}\label{sec:exact-vs-MC}
For this set of experiments, we optimize Seeger's Bound (Equation~\eqref{eq:seeger}) by (batch) Gradient Descent, for $1,000$ iterations and with learning rate equal to $0.1$.
We study the performance of our method on the binary classification \textbf{two-moons} dataset, with $2$ features, $2$ classes and $\mathcal{N}(0, 0.05)$ Gaussian noise, for which we draw $n$ points for training, and $1,000$ points for testing.
Figure~\ref{fig:moons-MC} reports a comparison of \emph{exact} and \emph{MC} variants in terms of error, generalization bound and training time (in seconds).
Increasing the number of MC draws $T$ unsurprisingly allows to recover \emph{exact}'s performance, and at lower computational cost for reasonable values of $M$ and $T$.
In general, as \emph{MC} is easily parallelizable, its training time has better dependence on $n$ than \emph{exact}'s one, however it increases with $M$ at a worse rate.
% At the same time, when $M>10^2$, \emph{MC} achieves \emph{exact}'s errors even with $T=1$, probably because of increased redundancy in the base predictor set.
When the training sample is large enough (here for $n > 10^2$), \emph{MC} achieves \emph{exact}'s errors and bounds even for $T=1$.
We also observe that the error rates and bounds gradually degrade for higher values of $M$ for both methods. 
This is due to the KL term increasing with $M$, as highlighted in Section~\ref{sec:bounds}, becoming a too strong regularization during training and making the bound looser. 

\subsection{Experiments on real benchmarks}
We now compare the considered methods on real datasets and on two different scenarios, depending on the type of PAC-Bayesian bounds that are evaluated: When making use of \textit{data-independent priors}, we chose as voters axis-aligned decision stumps, with thresholds evenly spread over the input space ($10$ per feature);
When making use of \textit{data-dependent priors}, we build Random Forests~\citep{breiman2001random} as set of voters, each with $M{=}100$ trees learned bagging $\frac{n}{2}$ points and sampling $\sqrt{d}$ random features to ensure voter diversity, optimizing Gini impurity score and, unless stated otherwise, without bounding their maximal depth. 

\begin{figure}
    \centering
    \subcaptionbox{Binary -- data independent prior\label{fig:independent}}{\includegraphics[width=\textwidth]{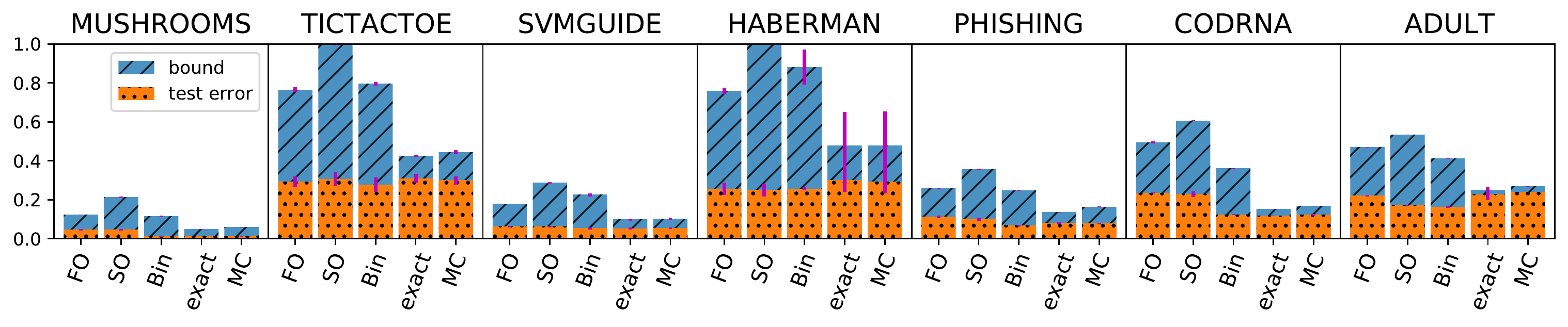}}
    \subcaptionbox{Multi-class -- data dependent prior\label{fig:dependent}}{\includegraphics[width=7cm]{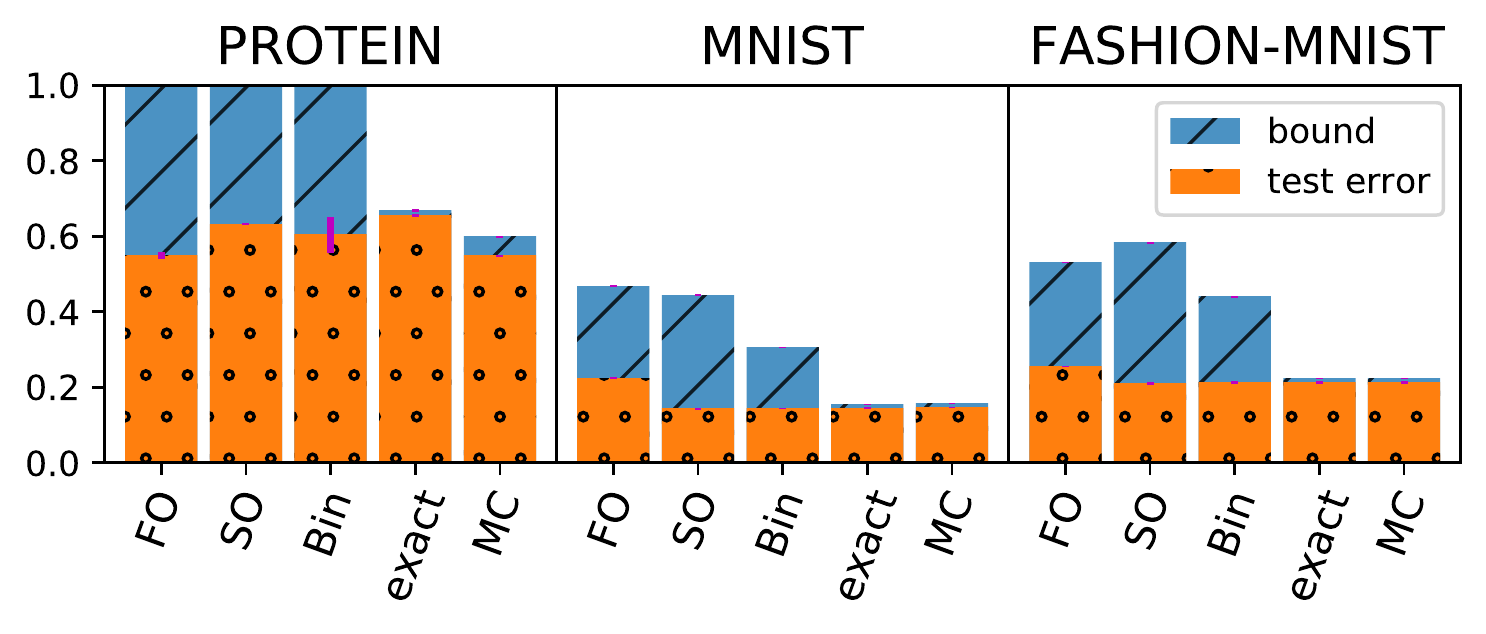}\hfill \includegraphics[width=5.4cm]{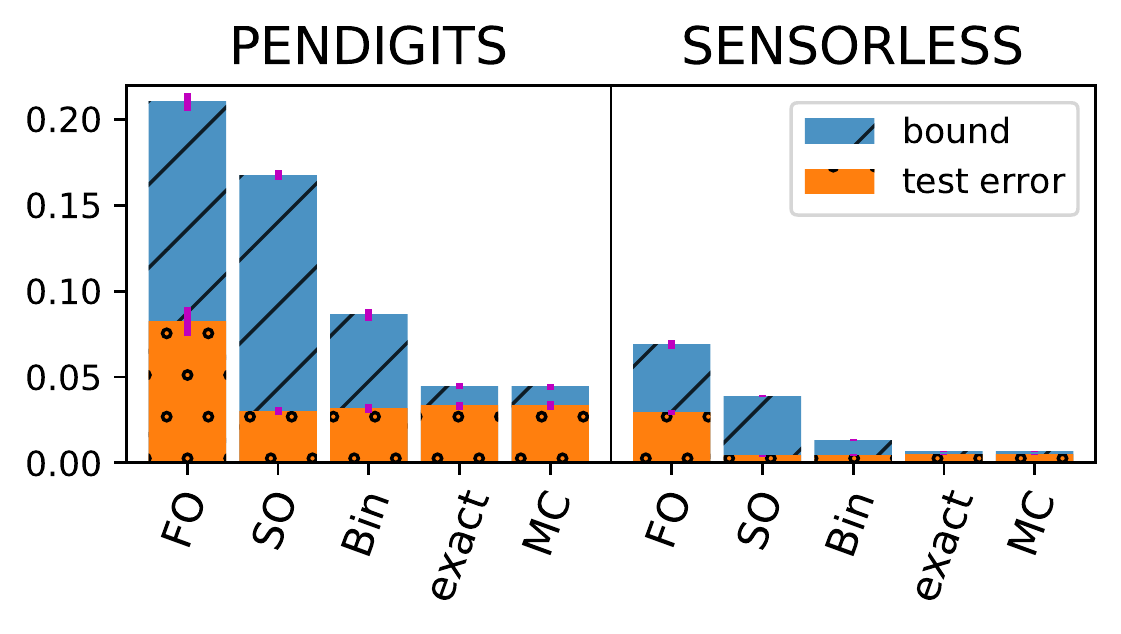}}
    \caption{Comparison in terms of test error rates and PAC-Bayesian bound values.
    We report the means (bars) and standard deviations (vertical, magenta lines) over $10$ different runs.}\label{fig:real}
\end{figure}

We consider several classification datasets from UCI~\citep{Dua2019}, LIBSVM\footnote{\url{https://www.csie.ntu.edu.tw/~cjlin/libsvm/}} and Zalando~\citep{xiao2017}, of different number of features and of instances.
Their descriptions and details on any pre-processing are provided in App.~\ref{app:datasets}.
We train the models by Stochastic Gradient Descent (SGD) using Adam~\citep{KingmaB14} with $(0.9, 0.999)$ running average coefficients, batch size equal to $1024$ and learning rate equal to $0.1$ with a scheduler reducing this parameter of a factor of $10$ with $2$ epochs patience.
We fix the maximal number of epochs to $100$ and patience equal to $25$ for early stopping, and for \emph{MC} we fix $T=10$ to increase randomness.

We report the test errors and generalization bounds in Figure~\ref{fig:real} (additional results are reported in the appendix, in Tables~\ref{tab:binary-real} and~\ref{tab:multic-real} and Figure~\ref{fig:real-all}):
We compare the different methods on binary datasets and with data-independent priors in Figure~\ref{fig:independent}, and on multi-class datasets and with data-dependent priors in Figure~\ref{fig:dependent}.
First we notice that the bounds obtained by our method are consistently non vacuous and tighter than those obtained by the baselines on all datasets.
% We notice that on ADULT, HABERMAN and TICTACTOE, the expected error of our method is higher than the best baseline.
% In App.~\ref{app:prior} we study the impact of the prior's concentration parameters on a subset of these datasets.
% These results suggest that tuning the prior would allow to improve \coolname's performance in most cases.
% \val{any difference between weak and strong voters?}
Regarding the error rates, our method's performance is generally aligned with the baselines, while it achieved error rates significantly lower than \emph{FO} and \emph{SO} on the perfectly separable \emph{two-moons} dataset.
Sensitivity to noise could explain why our method does not outperform the baselines on the studied real problems, as these usually present label and input noise. 
Indeed our learning algorithm optimizes the $01$-loss, which does not distinguish points with margins close or far from $0.5$ because of its discontinuity in $W_\theta = 0.5$.
Preliminary results reported in App~\ref{app:noise} seem to confirm this supposition.

Finally, to gain a better understanding of the relation between base classifier strength and performance of the models obtained with the different methods, we further study their performance and generalization guarantees with varying voter strength.
As hypothesis set, we learn Random Forests with $200$ decision trees in total, as before, but for this experiment we bound their maximal depth between $1$ and $10$.
Constraining the tree depth allows to indirectly control how well the voters fit the training set (as shown in Appendix, Figure~\ref{fig:strength-app}). 
Following~\citet{lorenzen2019pac}, we assess the voter strength by computing the expected accuracy of a random voter.
All methods' results improve overall when voters get stronger, even though \emph{FO} at a slower pace.
Notice that on the considered datasets \emph{SO} is the most sensitive method, particularly suffering from weak base predictors.
Our method generally provides test errors comparable with the best baselines and consistently tighter bounds.

\begin{figure}[t!]
    \centering
    \includegraphics[width=0.49\textwidth]{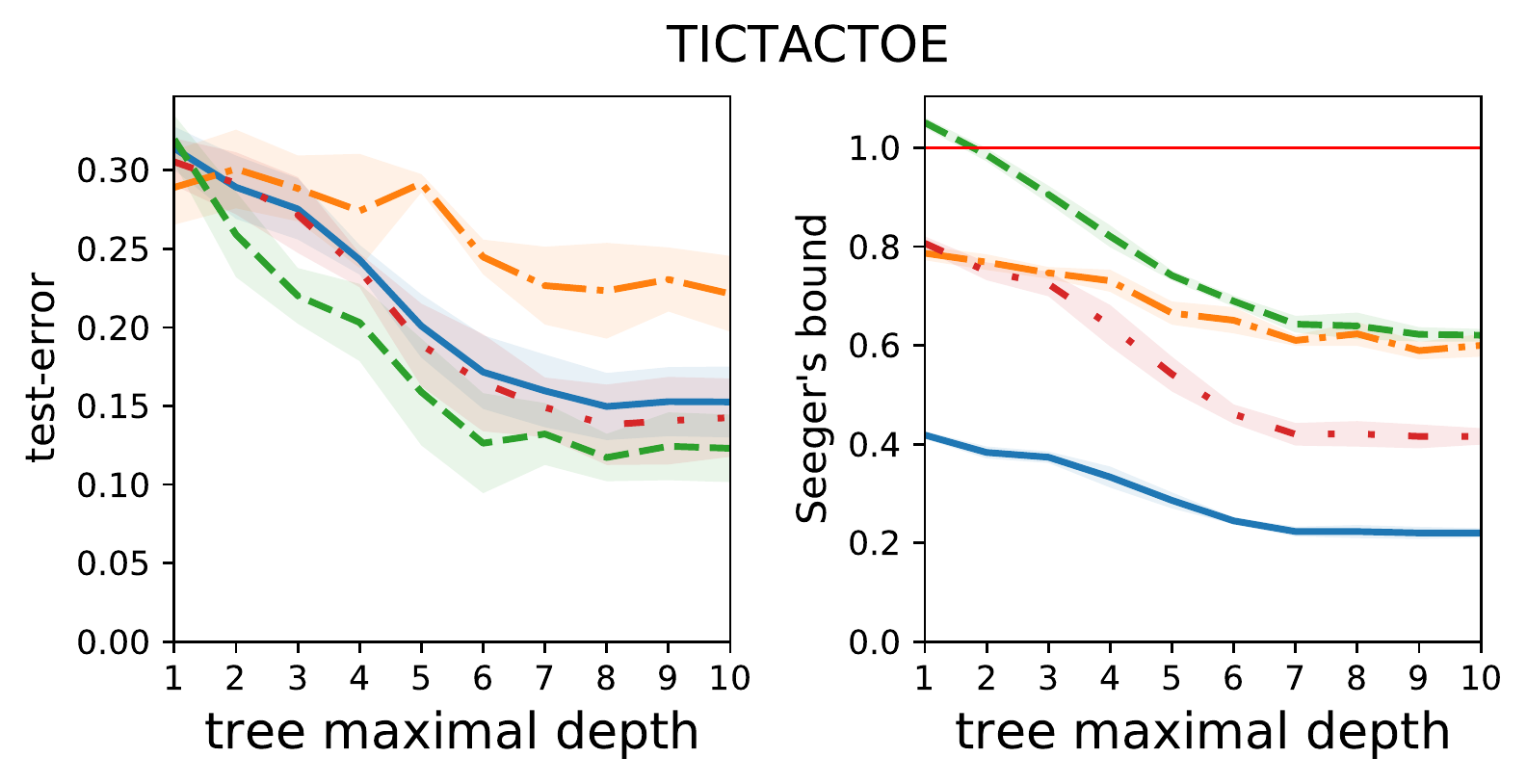}
    \includegraphics[width=0.49\textwidth]{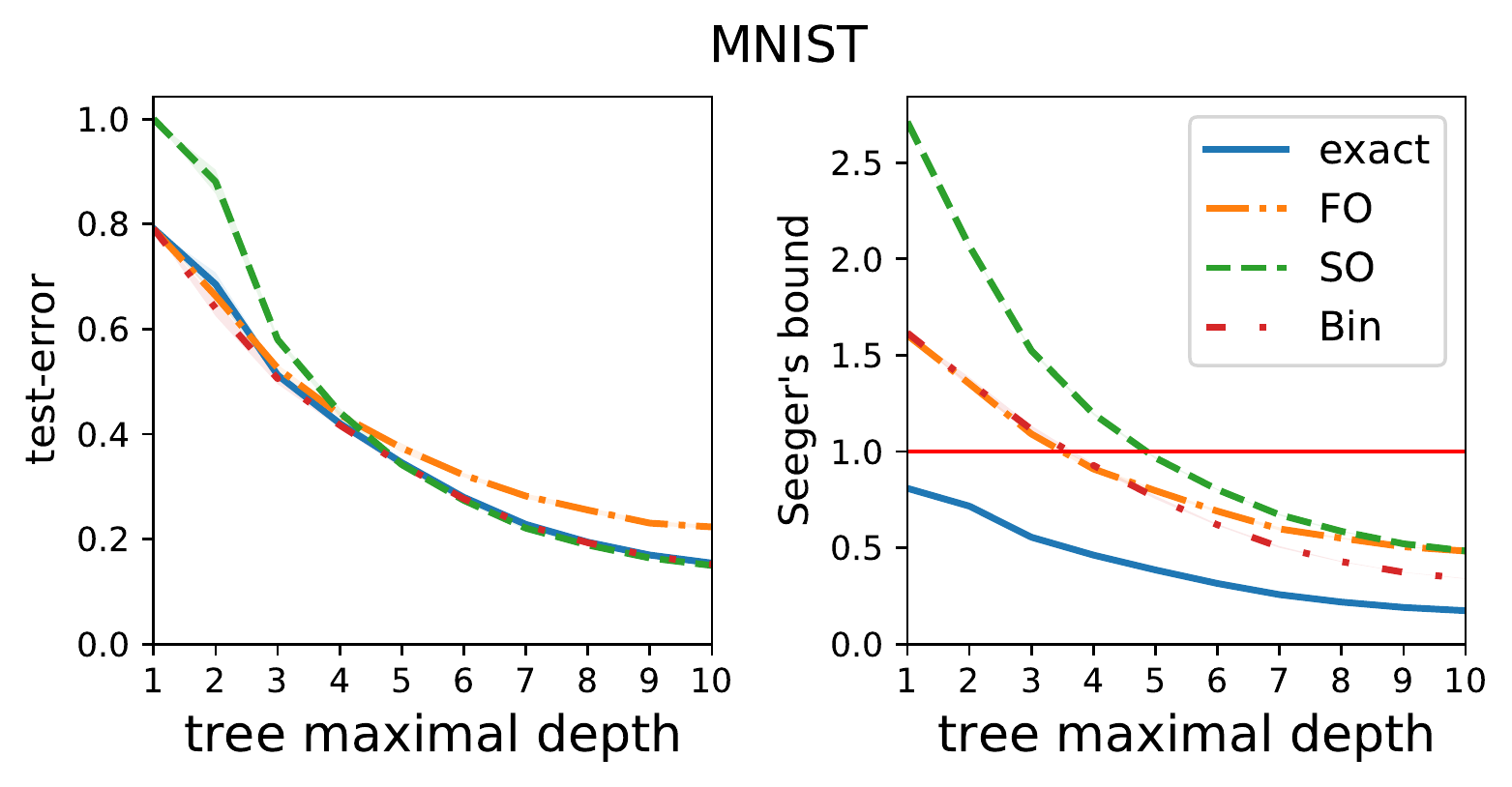}
    \caption{Test error rates and PAC-Bayesian bound values as a function of the maximal depth of the voters (decision trees here).
    We report the means and standard deviations over $4$ different runs, and mark with a red horizontal line the threshold above which Seeger's bounds are vacuous.
    Additional results are available in the appendix.}\label{fig:strength}
\end{figure}

\section{Future work}%\label{sec:future}
We propose a stochastic version of the classical majority vote classifier, and we directly analyze and optimize its expected risk through the PAC-Bayesian framework.
The benefits on the model accuracy of this direct optimization are however reduced in presence of input noise, and fostering robustness in noisy contexts is the subject of future work.
% : does removing the margin reduce robustness to bayesian noise and label noise? what if we add a term for maximizing the margin?
Another potential improvement would consist in tackling the discussed looseness of our generalization bounds with increasing number of base predictors, by accounting for redundancy in the hypothesis space.
% KL looser with M: can we tighten the bounds by considering the diversity in the hypothesis space?
% - can we derive tighter guarantees for deterministic MV?
% \val{Remi: commentaire sur le fait d'utiliser Lemma 2 pour optimiser la 0-1 dans un autre cadre que majority vote, par exemple une couche stochastique d'un dnn. a mentioner ou left for future work?}

% keywords: majority vote, ensemble method, generalization bound, bound-driven algorithm, PAC-Bayes
% We design a bound-driven learning algorithm for majority votes, leverage PAC-Bayes gu analyze and implement 

% \iffalse % NeurIPS
\begin{ack} %Remerciements uniquement dans version finale
%AVANT les refs
We thank the anonymous reviewers for their constructive feedback and support.
This work was partially funded by the French Project APRIORI ANR-18-CE23-0015.
Experiments presented in this paper were carried out using the Grid'5000 testbed, supported by a scientific interest group hosted by Inria and including CNRS, RENATER and several Universities as well as other organizations (see \url{https://www.grid5000.fr}).
Pascal Germain is supported by the Canada CIFAR AI Chair Program, and the NSERC Discovery grant RGPIN-2020-07223.
Benjamin Guedj acknowledges partial support by the U.S. Army Research Laboratory and the U.S. Army Research Office, and by the U.K. Ministry of Defence and the U.K. Engineering and Physical Sciences Research Council (EPSRC) under grant number EP/R013616/1. Benjamin Guedj acknowledges partial support from the French National Agency for Research, grants ANR-18-CE40-0016-01 and ANR-18-CE23-0015-02.
\end{ack}
% \fi

%%%% Arxiv
\iffalse
\section*{Acknowledgement}
This work was funded by the French Project APRIORI ANR-18-CE23-0015.
Experiments presented in this paper were carried out using the Grid'5000 testbed, supported by a scientific interest group hosted by Inria and including CNRS, RENATER and several Universities as well as other organizations (see \url{https://www.grid5000.fr}).
P. Germain is supported by the Canada CIFAR AI Chair Program, and the NSERC Discovery grant RGPIN-2020-07223.
\fi
%\section*{References}
% {
% \small % le small est autorisé
\bibliographystyle{plainnat}
\bibliography{biblio}

\newpage
\appendix

\section{Derivation details under Dirichlet assumptions}
\label{app:utils}

\subsection{Dirichlet distribution}
The probability density function of the Dirichlet distribution is defined as follows:
\begin{equation}
    \theta \sim \mathcal{D}(\alpha_1,\dots,\alpha_M),\quad\;
    \rho(\theta) = \frac{1}{B(\alpha)} \prod_{j=1}^M (\theta_j)^{\alpha_j-1},
\end{equation}
with $\alpha = [\alpha_j \in \Rb^{+}]_{j=1}^M$ the vector of the distribution parameters and $B(\alpha)$ the normalized multi-variate beta function, defined using the gamma function $\Gamma$:

\begin{equation}
    B(\alpha) = \frac{\prod_{j=1}^M \Gamma(\alpha_j)}{\Gamma(\sum_{j=1}^M \alpha_j)},
    \quad\quad\Gamma(a) = \int_{0}^\infty t^{a-1} e^{-t} dt.
\end{equation}

\subsection{KL divergence between Dirichlet distributions}

Let $\rho(\theta) = \Dc(\alpha)$ and $\pi(\theta) = \Dc(\beta)$, with $\alpha_0 = \sumj \alpha_j$ and $\beta_0 = \sumj \beta_j$.
The KL divergence between $\rho$ and $\pi$ is equal to:
\begin{equation}
    KL(\rho, \pi) = \ln(\Gamma(\alpha_0)) - \sumj \ln(\Gamma(\alpha_j)) - \ln(\Gamma(\beta_0)) + \sumj \ln(\Gamma(\beta_j)) + \sumj (\alpha_j - \beta_j) (\psi(\alpha_j) - \psi(\alpha_0))
\end{equation}
with $\psi(a)$ the digamma function: the first derivative of $\ln(\Gamma(a))$ (see Figure~\ref{fig:gamma}).

\begin{proof}We have
\begin{align}
    KL(\rho, \pi) &= \int_\Theta \rho(\theta) \ln \left(\frac{\rho(\theta)}{\pi(\theta)}\right) d \theta\\
    &= \int_\Theta \ln \left( \frac{B(\beta)}{B(\alpha)} \frac{\prodj \theta_j^{\alpha_j -1}}{\prodj \theta_j^{\beta_j -1}} \right) \rho(\theta) d \theta \\
    &= \int_\Theta \left( \ln B(\beta) - \ln B(\alpha) + \ln \left( \prodj \theta_j^{\alpha_j - \beta_j} \right) \right) \rho(\theta) d \theta \\
     &= \int_\Theta \left( \ln B(\beta) - \ln B(\alpha) + \sumj (\alpha_j - \beta_j) \ln \theta_j \right) \rho(\theta) d \theta \\
     &= \ln B(\beta) - \ln B(\alpha) + \sumj (\alpha_j - \beta_j) \int_\Theta \ln \theta_j \; \rho(\theta) d \theta \\
     &= \sumj \ln(\Gamma(\beta_j)) - \ln(\Gamma(\beta_0)) + \ln(\Gamma(\alpha_0)) - \sumj \ln(\Gamma(\alpha_j)) \nonumber \\
     &\quad+ \sumj (\alpha_j - \beta_j) (\psi(\alpha_j) - \psi(\alpha_0)) \label{eq:geom-mean}.
\end{align}

Equation~\eqref{eq:geom-mean} follows by definition of Dirichlet's geometric mean: $$\int_\Theta \ln \theta_j \; \rho(\theta) d \theta = \psi(\alpha_j) - \psi(\alpha_0).$$

\end{proof}

\begin{figure}[t]
\centering
\begin{minipage}{.45\textwidth}
\centering
    \includegraphics[width=\textwidth]{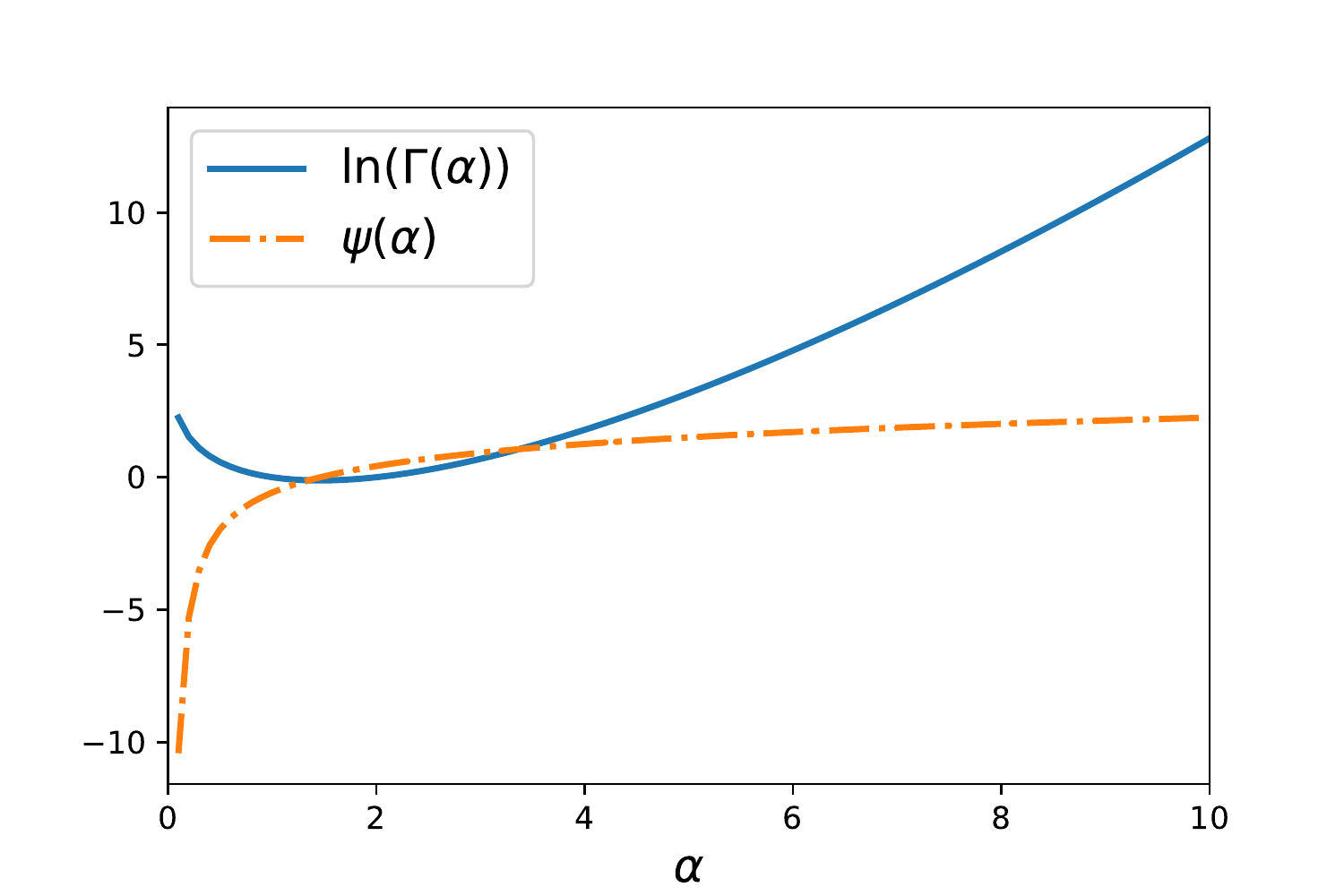}
    \caption{Functions of the gamma family.}
    \label{fig:gamma}
\end{minipage}\hfill
\begin{minipage}{.45\textwidth}
  \centering
  \includegraphics[width=\textwidth]{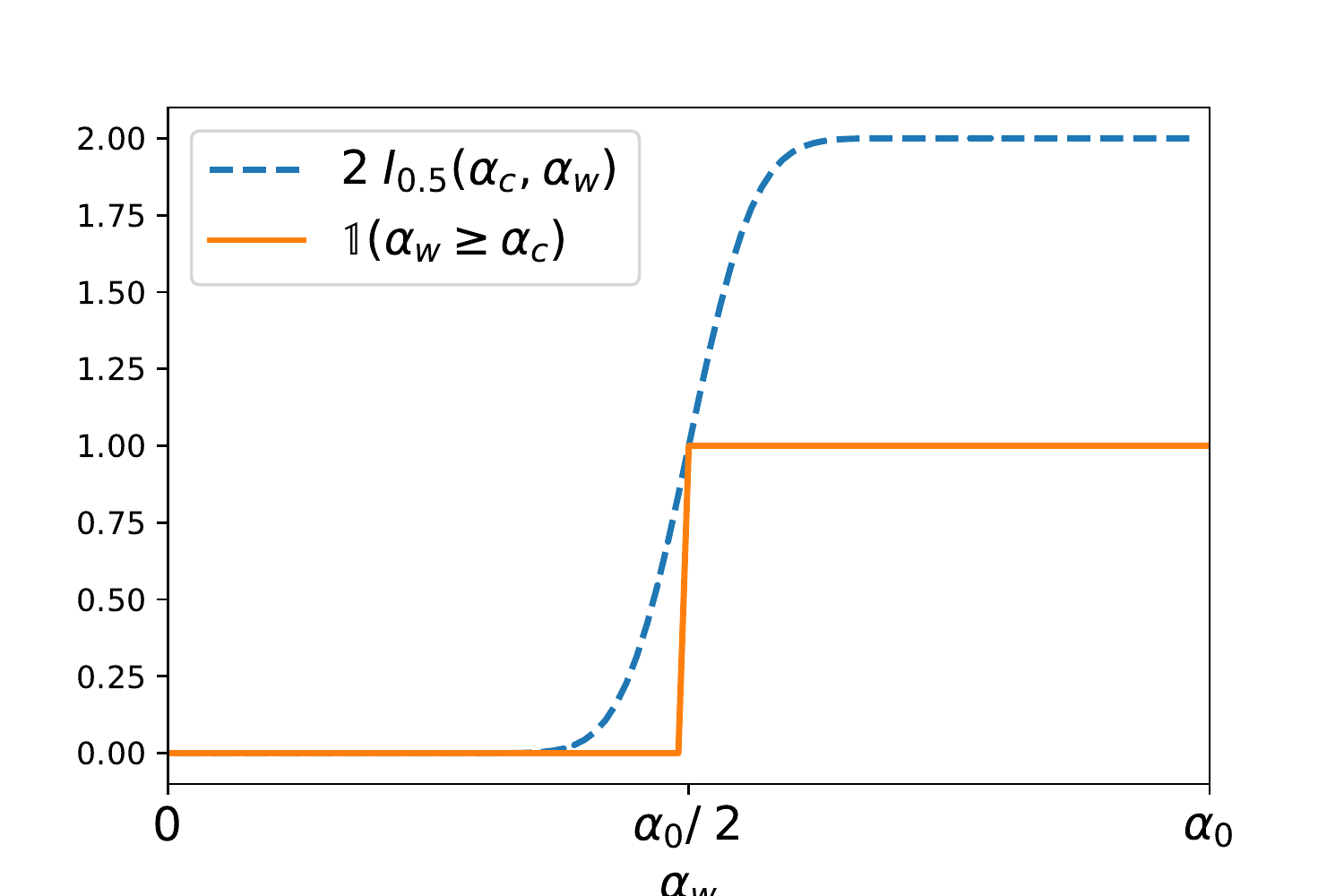}
    \caption{Oracle bound on the expected MV.}
    \label{fig:expectedMV}
\end{minipage}
\end{figure}

\subsection{Partial derivatives for gradient-based optimization}
\label{app:derivatives}
When optimizing our objective functions by gradient descent, we make use of the following partial derivatives \wrt the Dirichlet parameters $\{\alpha_j\}_{j=1}^M$.

The partial derivatives of the KL divergence are
\begin{align}
    \nabla_{\alpha_j} KL(\rho, \pi) &= \psi(\alpha_0) -  \psi(\alpha_j) + \psi(\alpha_j) - \psi(\alpha_0) + (\alpha_j - \beta_j)(\psi'(\alpha_j) - \psi'(\alpha_0)) \\
    &= (\alpha_j - \beta_j)(\psi'(\alpha_j) -  \psi'(\alpha_0)).
\end{align}

For a given data point $(x, y) \sim \Pc$, recall that $w = \{j | h_j(x) \neq y \}$ is the set of indices of the base classifiers that misclassify $(x, y)$ and $c = \{j | h_j(x) = y \}$ is the set of indices of the base classifiers that correctly classify $(x, y)$.
Let us define $w(\alpha) = \sum_{j \in w} \alpha_j$ and $c(\alpha) = \sum_{j \in c} \alpha_j = \alpha_0 - w(\alpha)$.
For any $j \in c$ we have

\begin{align}
    \nabla_{\alpha_j} I_{0.5}&(\ca, \wa) = \nonumber\\
    &(\ln 0.5 - \psi(\ca) + \psi(\alpha_0)) I_{0.5}(\ca, \wa) \nonumber\\
    &- 0.5^{\ca} \frac{\Gamma(\ca) \Gamma(\alpha_0)}{\Gamma(\wa)} \,\pFq{3}{2}(\ca,\ca,1-\wa;\ca+1 ,\ca+1;0.5)
\end{align}

and for any $j \in w$ we have

\begin{align}
    \nabla_{\alpha_j} I_{0.5}&(\ca, \wa) = \nonumber\\
    &(- \ln 0.5 + \psi(\wa) - \psi(\alpha_0)) I_{0.5}(\ca, \wa) \nonumber\\
    &+ 0.5^{\wa} \frac{\Gamma(\wa) \Gamma(\alpha_0)}{\Gamma(\ca)} \,\pFq{3}{2}(\wa,\wa,1-\ca;\wa+1 ,\wa+1;0.5)
\end{align}
where $\pFq{3}{2}(a, b, c; d, e; z)$ is the generalized hyper-geometric function:
$$\pFq{3}{2}(a, b, c; d, e; z) = \sum_{t=1}^T \frac{(a)^t \;(b)^t \;(c)^t\; z^t}{(d)^t \;(e)^t \;t!}$$
with $(.)^t$ the rising factorial.

The hyper-geometric function can be slow to evaluate, as its convergence rate varies depending on $\ca$ and $\wa$.
A possible strategy for speeding up this evaluation, apart from parallelizing computations, would be dynamic programming, \ie~storing the gradients of the incomplete beta function, as it is likely to be evaluated several times for the same $\ca$ and $\wa$.

\subsection{Oracle bound on expected Majority Vote}
\label{app:deterministic}
We now derive an oracle bound for the risk of the expected majority vote, given a distribution $\rho(\theta)$ having a Dirichlet form and concentration vector $\alpha = \{\alpha_j\}_{j=1}^M$.
The oracle bound can then be used to derive empirical PAC-Bayesian bounds on the true risk of the expected MV.
The expected MV is parameterized by the mean weighting vector $\hat{\theta}$, which for a Dirichlet distribution is given by 
$$\hat{\theta} = \; \Ex_\rho \; \theta =\; \frac{\alpha}{\alpha_0} \quad\quad \text{with} \quad \alpha_0 = \sumj \alpha_j.$$
For a given $(x, y) \sim \Pc$, let us define $w(x, y) = \{j | h_j(x) \neq y \}$ the set of indices of the base classifiers that misclassify $(x, y)$ and $c(x, y) = \{j | h_j(x) = y \}$ be the set of indices of the base classifiers that correctly classify $(x, y)$. 
The risk of the expected MV can then be measured as follows:
\begin{align}
    R(f_{\hat{\theta}}) &= \mathbb{P}\left[W_{\hat{\theta}} \geq 0.5\right] \\
    &= \mathbb{P}\left[\Ex \; W_{\theta} \geq 0.5\right] \label{eq:linwtheta}\\
    &= \mathbb{P}_{(x, y) \sim \Pc} \left[\sum_{j | h_j(x) \neq y}\frac{\alpha_j}{\alpha_0} \geq 0.5 \right] \\
    &= \mathbb{P}_{(x, y) \sim \Pc}\left[\sum_{j \in w(x, y)}\alpha_j \geq \sum_{j \in c(x, y)} \alpha_j\right].
\end{align}
Equation~\eqref{eq:linwtheta} follows by linearity of $W_{\theta}$.

We recall that the expected risk is given by 
$$\Ex_{(x, y) \sim \Pc} \; I_{0.5}\left(\sum_{j \in c(x, y)}\alpha_j,\sum_{j \in w(x, y)}\alpha_j \right) = \Ex_{(x, y) \sim \Pc} \; I_{0.5}\left(\alpha_0 - \alpha_{w(x,y)}, \alpha_{w(x,y)}\right)$$
with $\alpha_{w(x,y)} = \sum_{j \in w(x, y)}\alpha_j$.
As $I_{0.5}()$ is monotonically increasing in its second argument and $I_{0.5}\left(\frac{\alpha_0}{2}, \frac{\alpha_0}{2}\right) = 0.5$, we can then relate the two risks:

\begin{align}
    R(f_{\hat{\theta}}) \leq 2 \Ex_\rho R(f_\theta).
\end{align}

See Figure~\ref{fig:expectedMV} for an illustration.

\section{Proof of Theorem~\ref{eq:informed-prior}}
\label{ap:cross-prior}
Let us define the following empirical risks
\begin{align*}
    \hat{R}_{\le m}(\rho_{>m}) = \int_{\Theta_{>m}} \hat{R}_{\le m}(f_\theta) \rho_{>m}(d\theta) \text{ and } \hat{R}_{> m}(\rho_{\le m}) = \int_{\Theta_{\le m}} \hat{R}_{> m}(f_\theta) \rho_{\le m}(d\theta)
\end{align*}

and true risks
\begin{align*}
    R(\rho_{>m}) = \int_{\Theta_{>m}} R(f_\theta) \rho_{>m}(d\theta) \text{ and } R(\rho_{\le m}) = \int_{\Theta_{\le m}} R(f_\theta) \rho_{\le m}(d\theta).
\end{align*}

\setcounter{theorem}{1}
\begin{theorem}[Seeger's bound with informed priors]
Let $\pi_{\le m}$ and $\rho_{\le m}$ be the prior and posterior distributions on $\Theta_{\le m}$ and $\pi_{>m}$ and $\rho_{>m}$ the prior and posterior distributions on $\Theta_{>m}$.
For any $p \in (0, 1)$ and $\delta \in (0,1)$ with probability at least $1{-}\delta$ over samples \mbox{$S = \{(x_i, y_i) {\sim} \Pc\}_{i=1}^n$} of size $n$ we have simultaneously:
\begin{multline*}
    \kl\!\left(p R(\rho_{>m})+ (1-p) R(\rho_{\le m}) \middle\| p \hat{R}_{\le m}(\rho_{>m}) + (1-p) \hat{R}_{> m}(\rho_{\le m}) \right) \\ \le  \frac{p\; KL(\rho_{>m}, \pi_{>m})}{m}+\frac{(1-p)\;KL(\rho_{\le m}, \pi_{\le m})}{n-m} + \frac{\ln\tfrac{4\sqrt{m(n{-}m)}}{\delta}}{n}
\end{multline*}

with $R(\rho_{>m}) = \int_{\Theta_{>m}} R(f_\theta) \rho(d\theta)$, and $R(\rho_{\le m}) = \int_{\Theta_{\le m}} R(f_\theta) \rho(d\theta)$,\\ and $\hat{R}_{\le m}(\rho_{>m}) = \int_{\Theta_{>m}} \hat{R}(f_\theta) \rho(d\theta)$, and $\hat{R}_{\le m}(\rho_{>m}) = \int_{\Theta_{\le m}} \hat{R}(f_\theta) \rho(d\theta)$.
\end{theorem}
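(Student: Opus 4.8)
The plan is to apply Seeger's bound (Theorem~\ref{eq:seeger}) once to each of the two data splits and then stitch the two resulting inequalities together using the joint convexity of the binary $\kl$ divergence.

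First I would isolate the independence structure that makes the ``informed'' priors admissible. The hypothesis space $\Theta_{>m}$, and hence the prior $\pi_{>m}$, is constructed from $S_{>m}$ alone, so it is a deterministic function of $S_{>m}$ and is independent of $S_{\le m}$. Conditioning on $S_{>m}$, the pair $(\Theta_{>m},\pi_{>m})$ is frozen while $S_{\le m}$ remains an i.i.d.\ sample of size $m$ from $\Pc$, so Theorem~\ref{eq:seeger} applies with $S_{\le m}$ in the role of the learning sample: with probability at least $1-\tfrac{\delta}{2}$ over $S_{\le m}$, simultaneously for every posterior $\rho$ on $\Theta_{>m}$,
\begin{equation*}
\kl\!\left(\hat R_{\le m}(\rho),\, R(\rho)\right)\;\le\;\frac{\mathrm{KL}(\rho,\pi_{>m})+\ln\tfrac{4\sqrt{m}}{\delta}}{m}.
\end{equation*}
Because this holds for every value of $S_{>m}$, it holds unconditionally; and because it is uniform over posteriors, I may instantiate it at $\rho=\rho_{>m}$ even though $\rho_{>m}$ is allowed to depend on all of $S$. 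Call this event $A$. By the symmetric argument with the roles of the two splits exchanged (so that $\pi_{\le m}$ is frozen and the bound is evaluated on $S_{>m}$, of size $n-m$), there is an event $B$ of probability at least $1-\tfrac{\delta}{2}$ on which
\begin{equation*}
\kl\!\left(\hat R_{>m}(\rho_{\le m}),\, R(\rho_{\le m})\right)\;\le\;\frac{\mathrm{KL}(\rho_{\le m},\pi_{\le m})+\ln\tfrac{4\sqrt{n-m}}{\delta}}{n-m}.
\end{equation*}
A union bound gives that $A$ and $B$ hold simultaneously with probability at least $1-\delta$.

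On that event I would then use the joint convexity of $(q,r)\mapsto\kl(q,r)$: for any $p\in(0,1)$, $\kl(pq_1+(1-p)q_2,\,pr_1+(1-p)r_2)\le p\,\kl(q_1,r_1)+(1-p)\,\kl(q_2,r_2)$. Taking $q_1=\hat R_{\le m}(\rho_{>m})$, $r_1=R(\rho_{>m})$, $q_2=\hat R_{>m}(\rho_{\le m})$, $r_2=R(\rho_{\le m})$ and substituting the two displayed bounds yields
\begin{multline*}
\kl\!\left(p\hat R_{\le m}(\rho_{>m})+(1-p)\hat R_{>m}(\rho_{\le m}),\; pR(\rho_{>m})+(1-p)R(\rho_{\le m})\right)\\
\le\;\frac{p\,\mathrm{KL}(\rho_{>m},\pi_{>m})}{m}+\frac{(1-p)\,\mathrm{KL}(\rho_{\le m},\pi_{\le m})}{n-m}+\frac{p}{m}\ln\tfrac{4\sqrt{m}}{\delta}+\frac{1-p}{n-m}\ln\tfrac{4\sqrt{n-m}}{\delta},
\end{multline*}
which is the stated inequality once the two logarithmic terms are collected; in particular, at the choice $p=\tfrac{m}{n}$ used in practice one has $\tfrac{p}{m}=\tfrac{1-p}{n-m}=\tfrac1n$, so the last two terms merge into $\tfrac1n\ln\tfrac{4\sqrt{m(n-m)}}{\delta}$ exactly as in the theorem.

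The step that needs the most care is the conditioning and quantifier bookkeeping: one must verify that $\pi_{>m}$ is independent of the sample $S_{\le m}$ on which its bound is evaluated, while $\rho_{>m}$ is free to be optimized on all of $S$ — this asymmetry is precisely what the ``simultaneously for any posterior $\rho$'' clause of Theorem~\ref{eq:seeger} provides — and symmetrically for the other split. Everything else (two invocations of Theorem~\ref{eq:seeger} at confidence $1-\tfrac{\delta}{2}$, the union bound, and the convexity estimate) is routine.
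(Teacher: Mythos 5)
Your proof is correct and follows essentially the same route as the paper's: two invocations of Seeger's bound at confidence $1-\tfrac{\delta}{2}$ (one per data split, with the prior frozen by the complementary split), joint convexity of the binary $\kl$, and a union bound; you merely apply the convexity step at the end rather than at the outset, and you make explicit the conditioning/quantifier argument that the paper leaves implicit. One small point that your careful bookkeeping exposes (and which the paper's own proof shares): collecting the two logarithmic terms at $p=\tfrac{m}{n}$ actually yields $\tfrac{1}{n}\ln\tfrac{16\sqrt{m(n-m)}}{\delta^2}$ rather than the stated $\tfrac{1}{n}\ln\tfrac{4\sqrt{m(n-m)}}{\delta}$ (and for general $p\in(0,1)$ the two terms do not merge into a single $\tfrac1n\ln(\cdot)$ at all), so the constant in the displayed bound should be adjusted accordingly.
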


\begin{proof}
From the joint convexity of the binary $\kl$ divergence, we have for any $p \in [0, 1]$
\begin{align*}
    &\kl\!\left(p \hat{R}(\rho_{>m})+ (1-p) \hat{R}(\rho_{\le m}) \;\middle\|\; p R_{\le m}(\rho_{>m}) + (1-p) R_{> m}(\rho_{\le m}) \right)\\
   \le\ \ &p\kl\!\left(\hat{R}(\rho_{>m})\middle\|R_{\le m}(\rho_{>m})\right)  
    + (1-p)\kl\!\left(\hat{R}(\rho_{\le m}) \middle\| R_{> m}(\rho_{\le m}) \right)\!.
\end{align*}

For $\forall\rho_{>m}$ defined on $\Theta_{>m}$:
\begin{align}
    \mathbb{P}_{S_{\le m}\sim {\cal P}^m}\left[p\; \kl\!\left(\hat{R}(\rho_{>m})\middle\|R_{\le m}(\rho_{>m})\right) \le p\frac{KL(\rho_{>m}, \pi_{>m})+\ln\tfrac{4\sqrt{m}}{\delta}}{m}\right] \ge 1{-}\tfrac{\delta}{2} \nonumber\\
    \Rightarrow \mathbb{P}_{S\sim {\cal P}^n}\left[p\; \kl\!\left(\hat{R}(\rho_{>m})\middle\|R_{\le m}(\rho_{>m})\right) \le p\frac{KL(\rho_{>m}, \pi_{>m})+\ln\tfrac{4\sqrt{m}}{\delta}}{m}\right] \ge 1{-}\tfrac{\delta}{2};\label{eq:p1}
\end{align}

and for $\forall\rho_{\le m}$ defined on $\Theta_{\le m}$:
\begin{align}
    \mathbb{P}_{S_{>m}\sim {\cal P}^{n{-}m}}\left[(1-p) \kl\!\left(\hat{R}(\rho_{\le m}) \middle\| R_{> m}(\rho_{\le m}) \right) \le (1-p)\frac{KL(\rho_{\le m}, \pi_{\le m})+\ln\tfrac{4\sqrt{n-m}}{\delta}}{n-m}\right] \ge 1{-}\tfrac{\delta}{2} \nonumber\\
    \Rightarrow \mathbb{P}_{S\sim {\cal P}^n}\left[(1-p)\; \kl\!\left(\hat{R}(\rho_{\le m}) \middle\| R_{> m}(\rho_{\le m}) \right) \le (1-p)\frac{KL(\rho_{\le m}, \pi_{\le m})+\ln\tfrac{4\sqrt{n-m}}{\delta}}{n-m}\right] \ge 1{-}\tfrac{\delta}{2}.\label{eq:p2}
\end{align}

Combining Equation~\eqref{eq:p1} and Equation~\eqref{eq:p2} using the union bound, we obtain the desired result with $1-\delta$ probability. 
\end{proof}

\section{Additional experimental results}

\subsection{Analysis of the role of bound regularization on the posterior}

\paragraph{Synthetic dataset.}
We study the behavior of the method during training on a simple toyset, built from two normal distributions $\mathcal{N}_1([-1, 0], \mathrm{diag}([0.1, 1]))$ and $\mathcal{N}_2([1, 0], \mathrm{diag}([0.1, 1]))$, one per class, so that the two classes are almost perfectly separable on the first dimension.
from each class distribution for training, $n$ points are drawn, and $1,000$ points for testing.

\paragraph{Base predictors and prior.}
We fix the set of base voters to $M{=}4$ axis-aligned decision stumps: $2$ per class and centered in $0$ on each dimension, so that the problem is well specified as the optimal classifier is in the predictor set.
Then, we set the parameters $\beta$ of the prior distribution all to $0.1$ to encourage sparse solutions for the posterior, and we initialize the posterior's parameters by drawing $\alpha_j \sim \mathcal{U}(0.01, 2)$.

\paragraph{Study of the impact of optimizing the PAC-Bayesian bound.}

\begin{figure}
    \centering
    \includegraphics[width=0.325\textwidth]{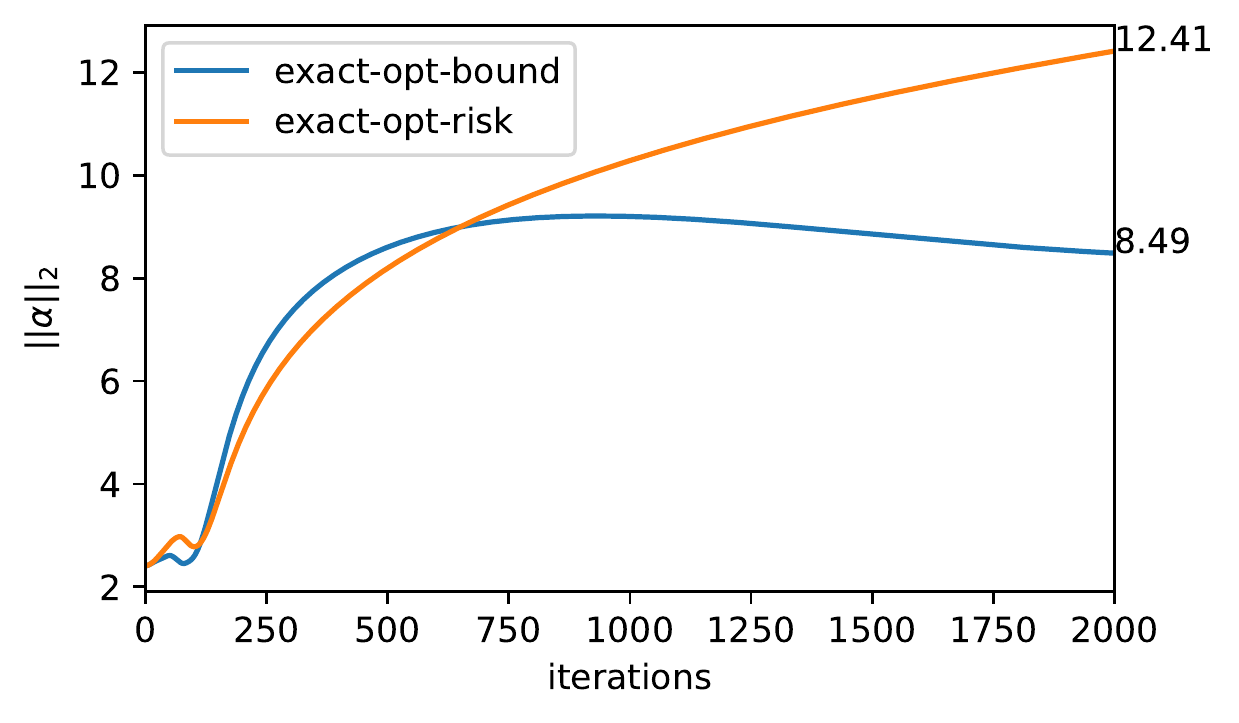}
    \includegraphics[width=0.325\textwidth]{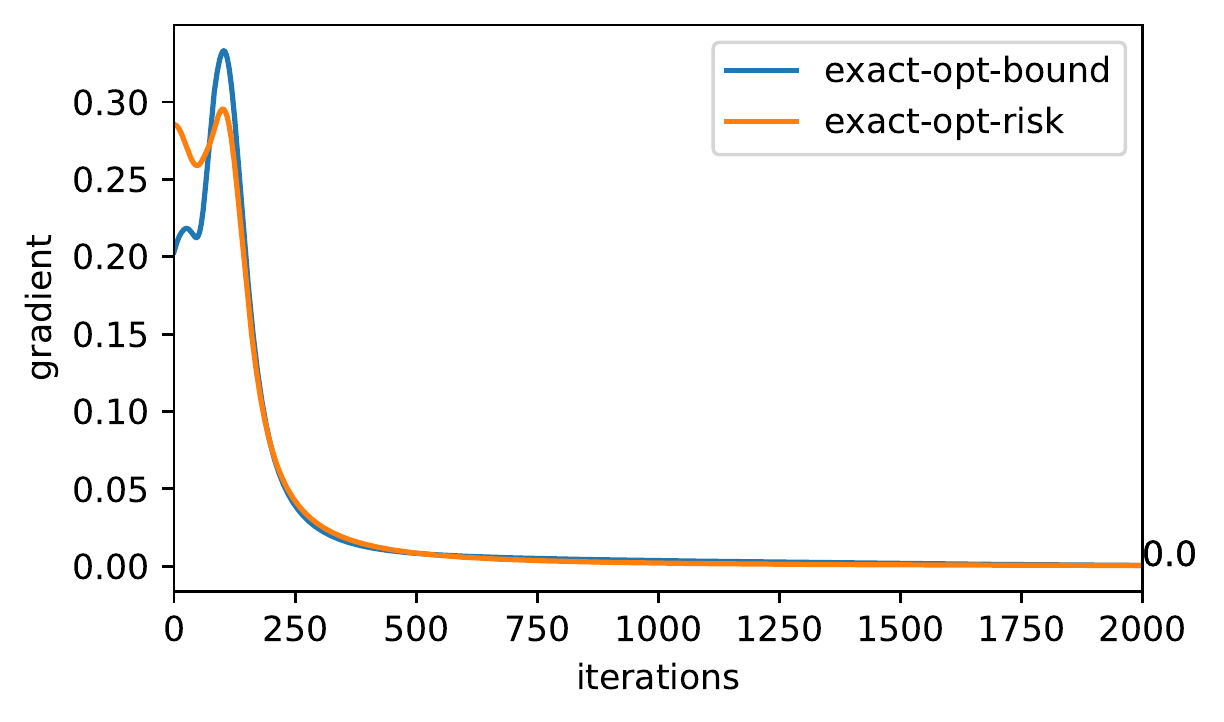}
    \includegraphics[width=0.325\textwidth]{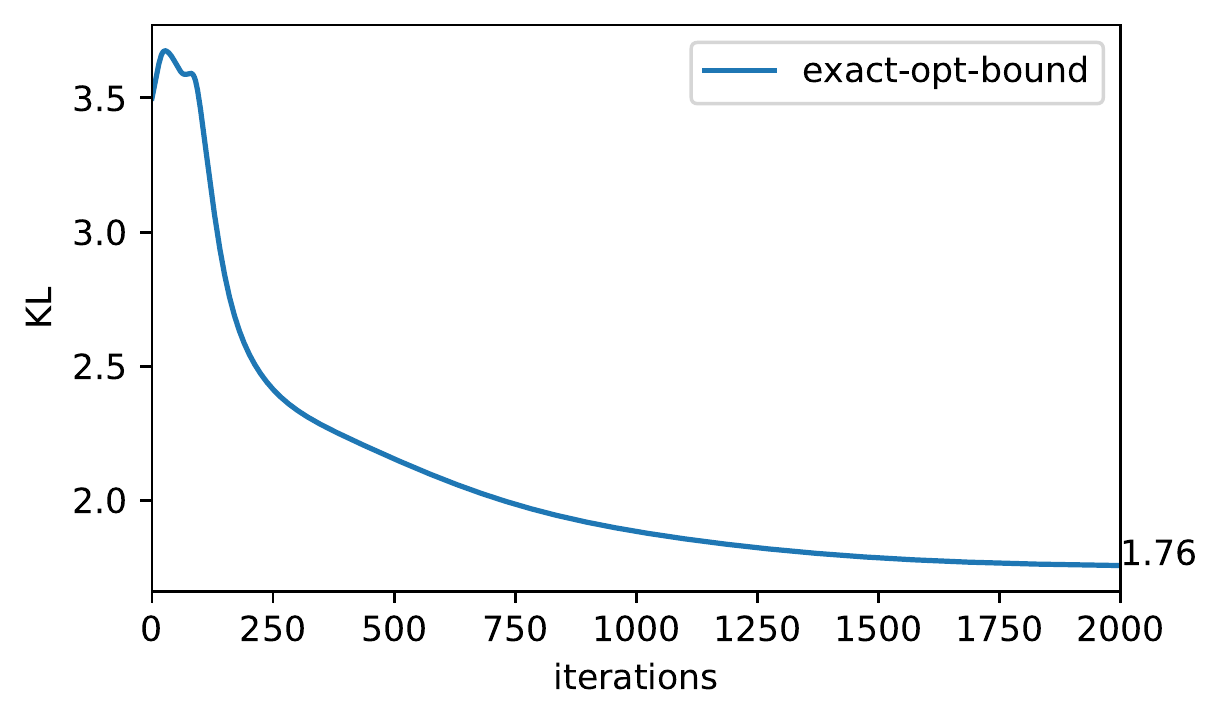}
    
    \caption{Evolution of $\|\alpha\|_2$ (left), its gradient (center) and $KL(\alpha, \beta)$ (right) during training using the \emph{exact} method.
    In each plot we compare the values obtained when optimizing Seeger's Bound (Equation~\eqref{eq:seeger}) with those obtained when optimizing only the empirical risk.}
    \label{fig:normals-seeger}
\end{figure}

In Figure~\ref{fig:normals-seeger}, we compare the evolution of $\alpha$ during training for the exact method and $n{=}50$.
In each plot we compare the posterior parameters $\alpha$ obtained when optimizing Bound (Equation~\eqref{eq:seeger}) with the one obtained when optimizing only the empirical risk (hence, without any regularization).
Namely, we study the evolution during training of $\|\alpha\|_2$, of $\alpha$'s gradients and of the KL divergence \wrt prior: $KL(\rho, \pi)$.
We notice that without regularization, $\alpha$ diverges, while it tends to a constant and smaller value when optimizing the bound.
This behaviour is not due to optimization instability, as shown by the gradient smoothly tending to $0$ for both methods.
Instead, it can be explained considering that without regularization the stochastic MV tends to concentrate around the optimal MV, \ie a single $\theta$.
This behavior results in reducing the variance of the distribution, thus increasing the concentration parameters $\alpha$.
In contrast, when optimizing the bound the model fits the empirical risk and finds optimal solutions that are also close to the prior.

\paragraph{Comparison with baselines.}

\begin{figure}
    \centering
    \includegraphics[width=\textwidth]{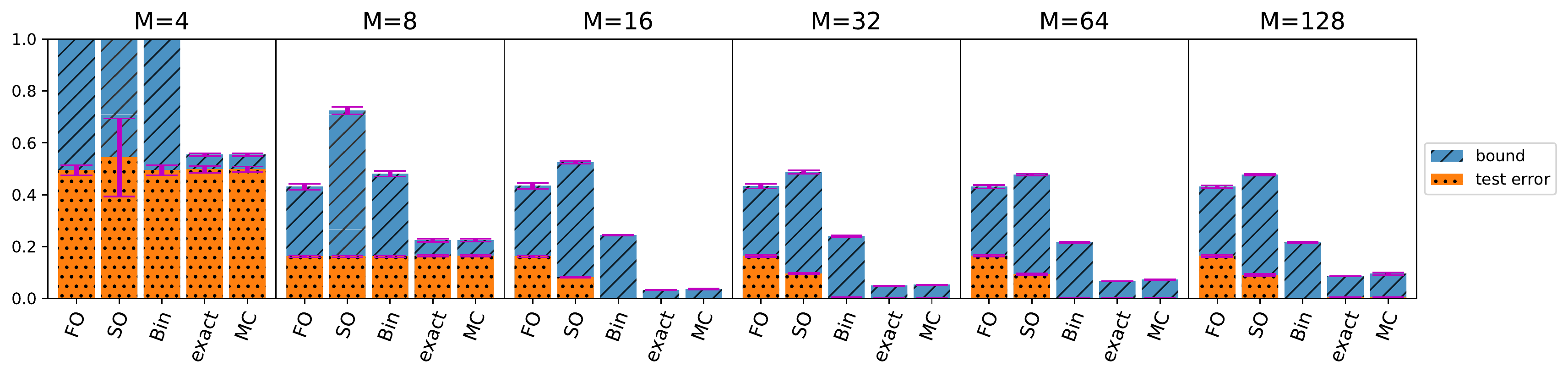}
    \caption{Comparison in terms of test error rates and PAC-Bayesian bound values.
    Each block corresponds to a different number of predictors $M$.
    We report the means (bars) and standard deviations (vertical, magenta lines) over $10$ different runs.}
    \label{fig:moons}
\end{figure}

We report a comparison of \emph{FO}, \emph{SO}, \emph{Bin} and our method on the binary classification \textbf{two-moons} dataset.
We report in Figure~\ref{fig:moons} the results obtained after optimizing the respective PAC-Bayesian bounds.
Notice that our bounds are tighter than the baselines, for any value of $M$: 
this is principally due to the fact that our method consistently obtains lower error rates, being able to better fit the training set (as shown in Figure~\ref{fig:moons-pred}), but also to the fact that our bound is not based on an oracle upper bound on the true risk, unlike the baselines.
% in which also the other terms of the bound (in particular the KL) are multiplied by a factor $2$ and $4$ respectively.
% All the bounds could be tightened by increasing the size of the training set $n$.
% Yet, the test errors for \emph{FO} and \emph{SO} would still not decrease.

\paragraph{Sensitivity to noise.}\label{app:noise}
We observe that on \emph{two-moons} our method outperforms the baselines also in terms of error rates.
However, this does not seem to be the case on real benchmarks where its test errors are generally aligned with those of the baselines.
We conduct an additional experiment to study whether this phenomenon is due to sensitivity to noise, such as input noise.
Indeed our learning algorithm optimizes the $01$-loss, which does not distinguish points with margins close or far from $0.5$ because of its discontinuity in $W_\theta = 0.5$.
In Figure~\ref{fig:noise}, we assess training and test errors, and Seeger's bound values with increasing input noise.
For this experiment, we generated training and test sets as before, and with Gaussian noise $\mathcal{N}(0, \sigma^2)$ added to the inputs.
As expected, all methods degrade with increasing noise.
In particular, the test errors of \emph{exact} and \emph{Bin} worsen the fastest and the benefits of using them vanish from $\sigma^2 > 0.35$. 

\begin{figure}
    \centering
    \includegraphics[width=\textwidth]{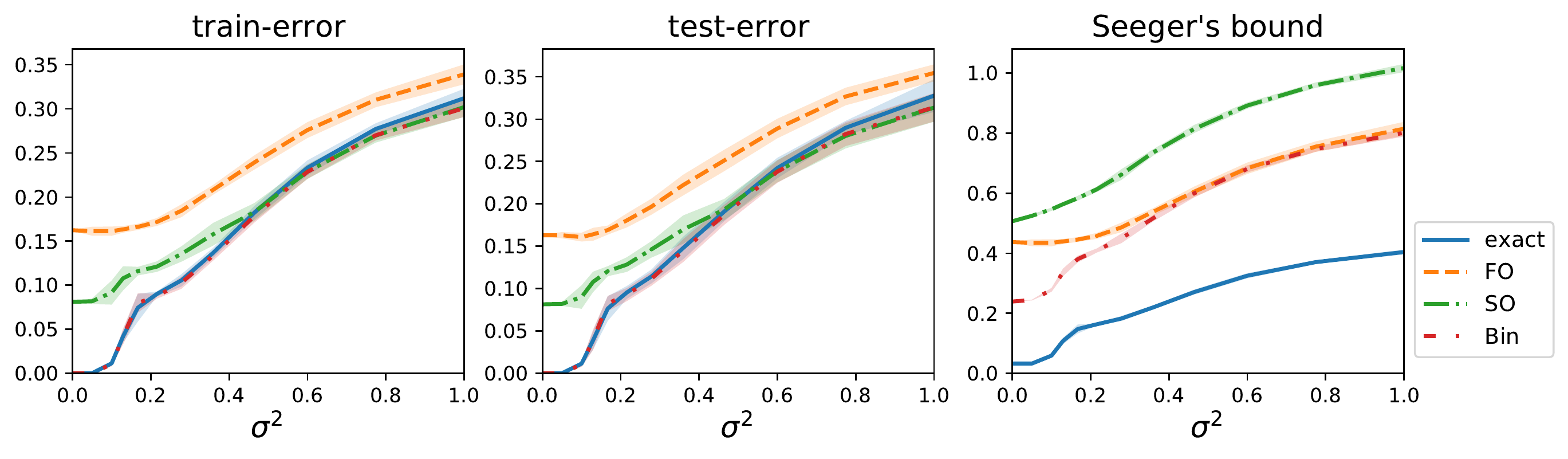}
    \caption{Comparison in terms of error rates and PAC-Bayesian bounds, depending on the magnitude of input noise $\mathcal{N}(0, \sigma^2)$.
    We report the means and standard deviations over $10$ different runs.}
    \label{fig:noise}
\end{figure}

\subsection{Additional results on real benchmarks}

\paragraph{Dataset descriptions}\label{app:datasets}
We consider several classification datasets from UCI~\citep{Dua2019}, LIBSVM~\url{https://www.csie.ntu.edu.tw/~cjlin/libsvm/} and Zalando~\citep{xiao2017}, of different number of features and of points:
\begin{itemize}
    \item \emph{Haberman} (UCI): prediction of survival of $n=306$ patients who had undergone surgery from $d=3$ anonymized features;
    \item \emph{TicTacToe} (UCI): determination of a win for player $x$ at TicTacToe game of any of the $n=958$ board configurations ($d=9$ categorical states); 
	\item \emph{Svmguide1} (LIBSVM): $d=4$ features, $n=7,089$ instances and $2$ classes (no description available);
	\item \emph{Mushrooms} (UCI): prediction of edibility of $n=8,124$ mushroom sample, given their $d=22$ categorical features describing their aspect;  
	\item \emph{Phishing} (LIBSVM): prediction of phishing websites ($n=2456$ websites and $d=68$ binary encoded features);
	\item \emph{Adult} (LIBSVM a1a): determining whether a person earns more than 50K a year ($n=32,561$ people and $d=123$ binary features);
	\item \emph{CodRNA} (LIBSVM): detection of non-coding RNAs among $n=59,535$ instances and from $d=$ features;
	\item \emph{Pendigits} (UCI): recognition of hand-written digits ($10$ classes, $d=9$ features and $n= 12,992$);
	\item \emph{Protein} (LIBSVM): $d=357$ features, $n=24,387$ instances and $3$ classes;
	\item \emph{Shuttle} (UCI): $d=9$ features, $n=58,000$ and $7$ classes;
	\item \emph{Sensorless} (LIBSVM): prediction of motor condition ($n=58,509$ instances and $11$ classes), with intact and defective components, from $d=48$ features extracted from electric current drive signals;
	\item \emph{MNIST} (LIBSVM): prediction of hand-written digits ($n=70,000$ instances and $10$ classes) from $d=28 \times 28$ gray-scale images;
	\item \emph{Fashion-MNIST} (Zalando): prediction of cloth articles ($n=70,000$ instances and $10$ classes) from $d=28 \times 28$ gray-scale images.
\end{itemize}

At each run of an algorithm, we randomly split a dataset in training and test sets of sizes $80\%-20\%$ respectively.
Note that we do not make use of a validation set, as we use the PAC-Bayesian bounds as estimate of the test error for model selection.
Finally, we convert all categorical features to numerical using an ordinal encoder and z-score all features using the statistics of the training set.

\paragraph{Choice of prior.}\label{app:prior}

In all the other experiments, we fixed the prior distribution (parameterized by $[\beta_j]_{j=1}^M$) to the uniform, \ie $ \beta_j = 1, \; \forall j$.
This choice was to make the comparison with the baselines as fair as possible, as their prior was also fixed to the uniform (categorical).
However, we can bias the sparsity of the posterior, or conversely its concentration, by choosing a different value for the prior distribution parameters.
In some cases, tuning the prior parameters allows to obtain better performance, as reported in Figure~\ref{fig:prior}.
In particular, on \emph{Protein} encouraging sparser solutions generally provides better results, confirmed by the fact that the best baseline on this dataset, \emph{FO}, is known to output sparse solutions.
On the contrary, on datasets where methods accounting for voter correlation outperform \emph{FO}, such as on \emph{MNIST}, encouraging solutions to be concentrated and close to the simplex mean yields better performance. 
In general, these results suggest that the choice of prior distribution has a high impact on the learned model's performance and tuning its concentration parameters would be a viable option for improving results.

\begin{figure}[hb!]
    \centering
    \includegraphics[width=\textwidth]{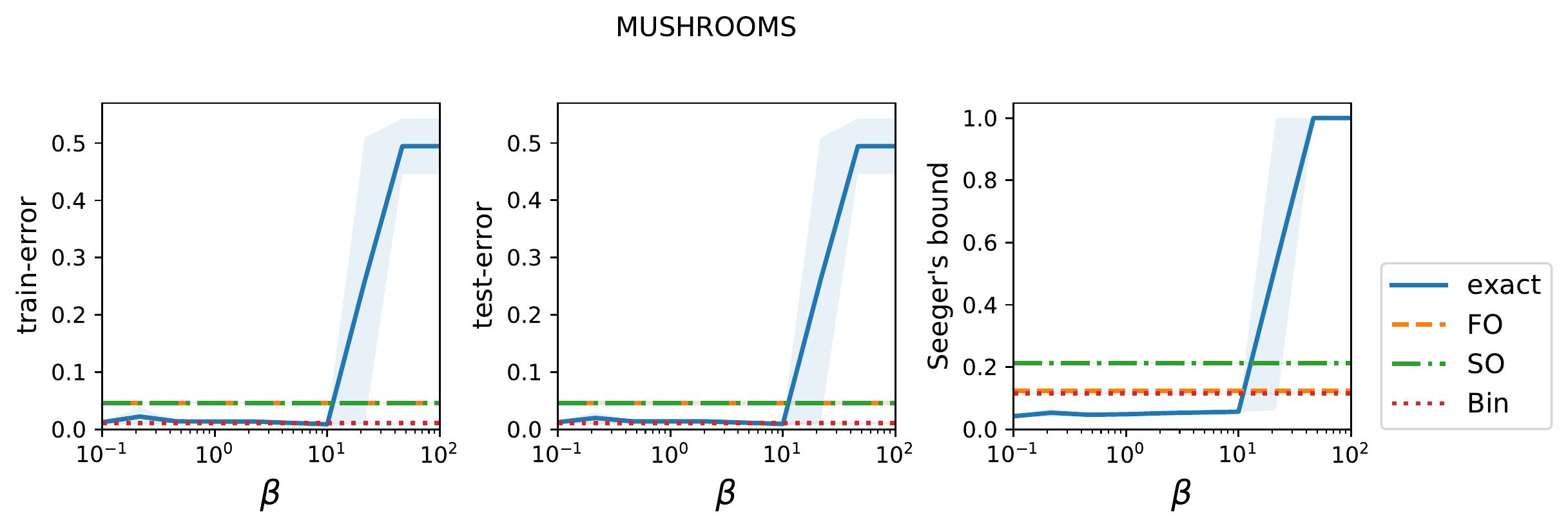}
    \includegraphics[width=\textwidth]{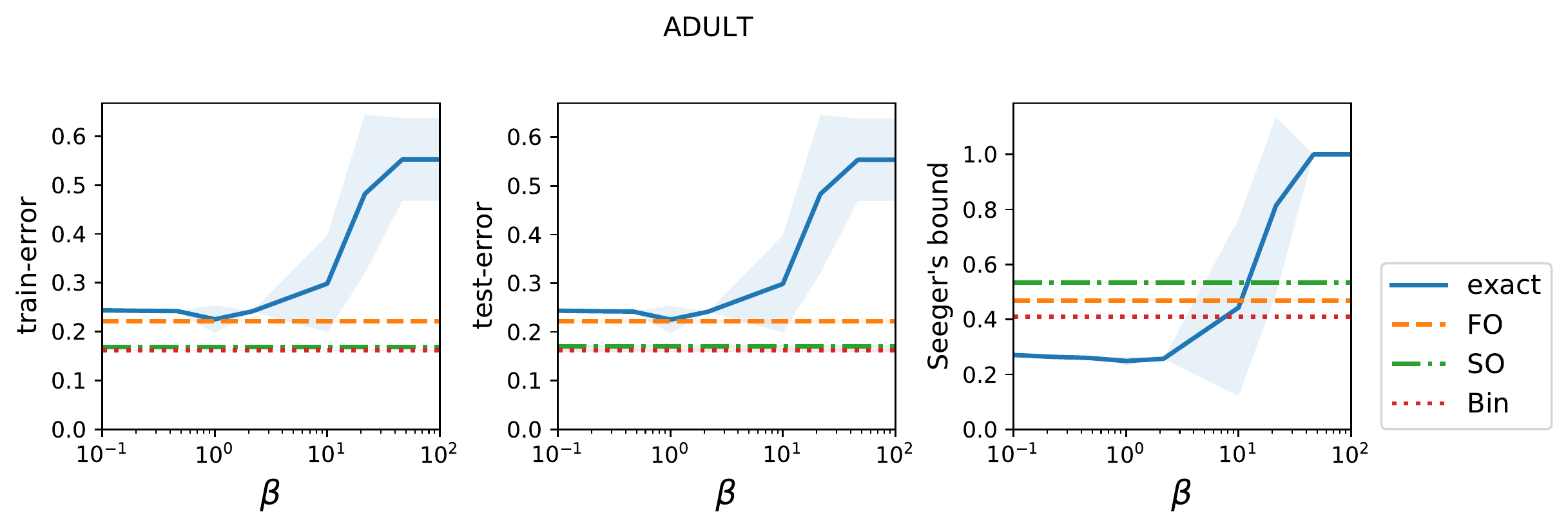}
    \includegraphics[width=\textwidth]{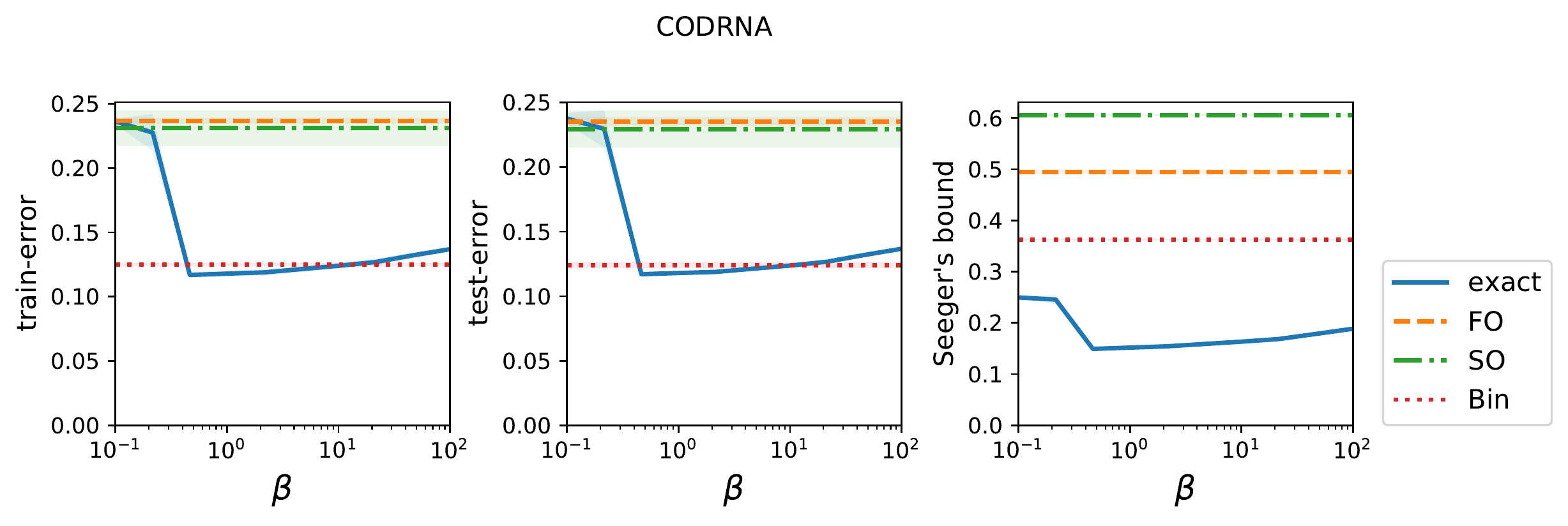}
\end{figure}
\begin{figure}[h!]
    \centering
    \includegraphics[width=\textwidth]{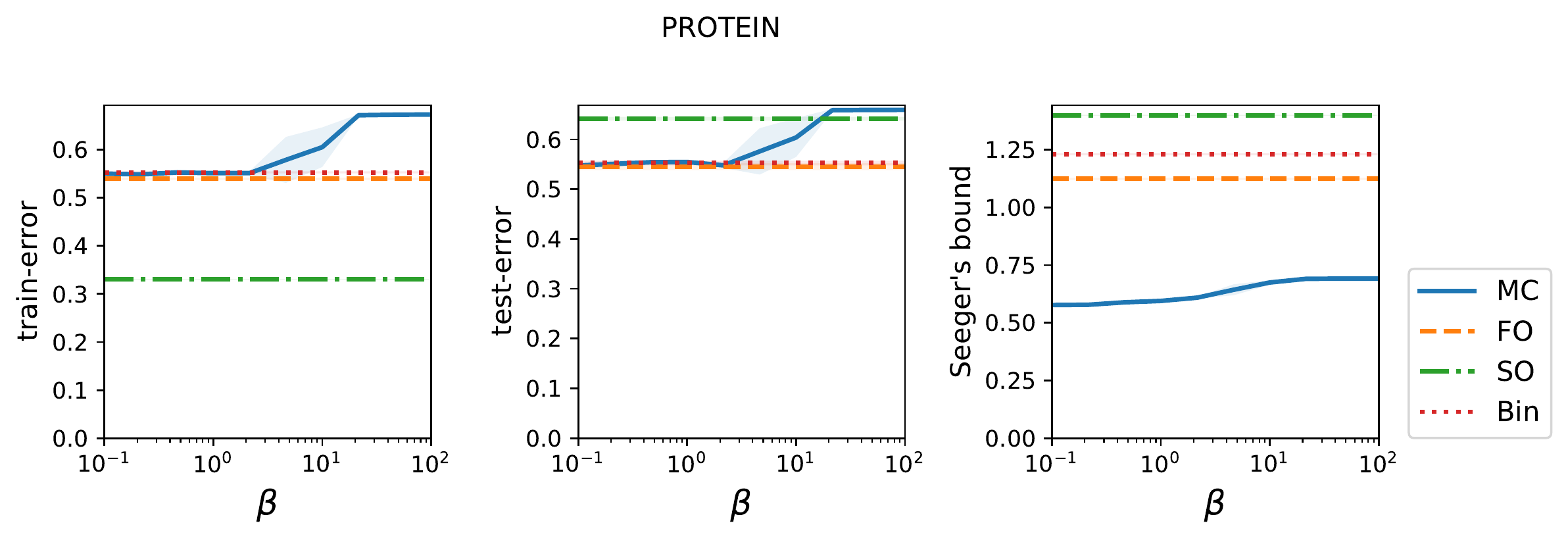}
    \includegraphics[width=\textwidth]{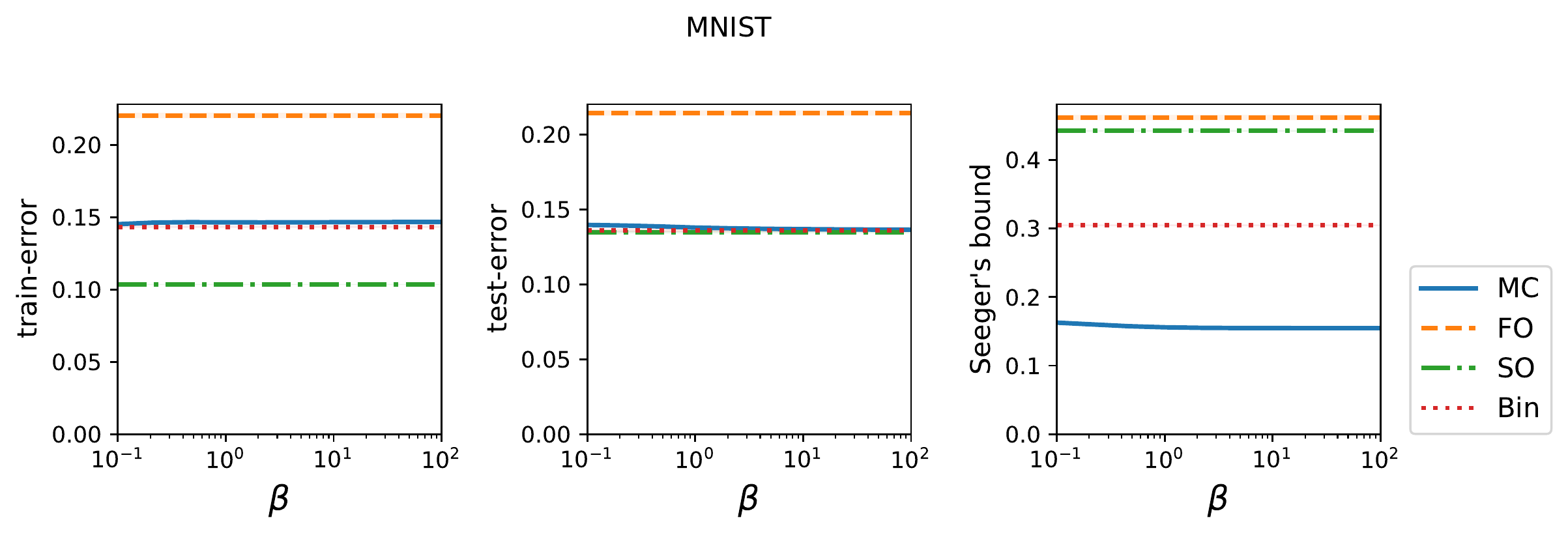}
    \caption{Study of impact of prior on posterior's performance. 
    We fix all $M$ prior's parameters to $\beta$, on the x axis: 
    the smaller $\beta$, the sparser the posterior is encouraged to be.
    We plot average and standard deviations over $4$ trials.}\label{fig:prior}
\end{figure}

\subsection{Impact of voter strength}
We report the complete study on the impact of voter strength on the learned models.
More precisely we provide results for additional datasets as well as the study of the expected strength of a voter as a function of the tree maximal depth. 
Recall that as hypothesis set, we learn a Random Forest with $100$ decision trees for which we bound the maximal depth between $1$ and $10$.
In Figure~\ref{fig:strength-app}, we can see that limiting the maximal depth is an effective way for controlling the strength of the voters, measured as the expected accuracy of a random voter.
Apart from \emph{Protein}, where decision trees do not seem to be a good choice of base predictor, increasing the strength of the voters generally yields more powerful ensembles for all methods.
Our method has error rates comparable with the best baselines and enjoys tight and non-vacuous generalization guarantees for any tree depth.
Finally, by comparing \emph{SO}'s training and test errors we notice that this method tends to overfit the dataset especially when the base classifiers are weaker (tree depth close to $1$).

\begin{figure}[h!]
    \centering
    \includegraphics[width=0.25\textwidth]{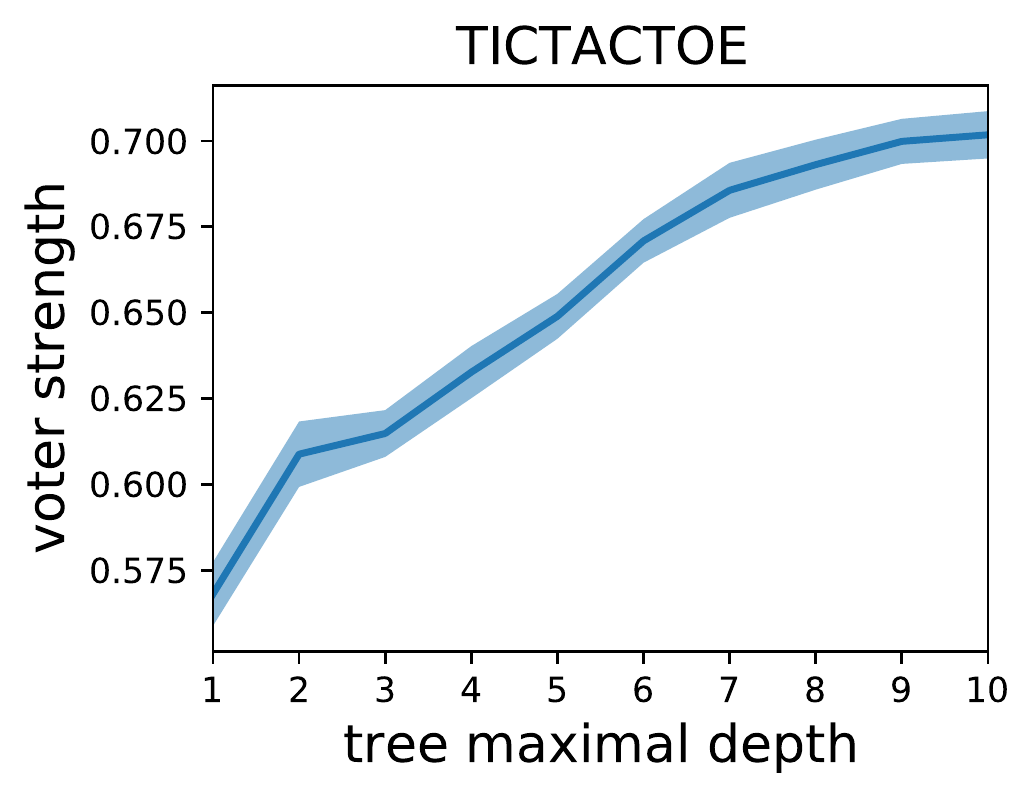}
    \includegraphics[width=0.7\textwidth]{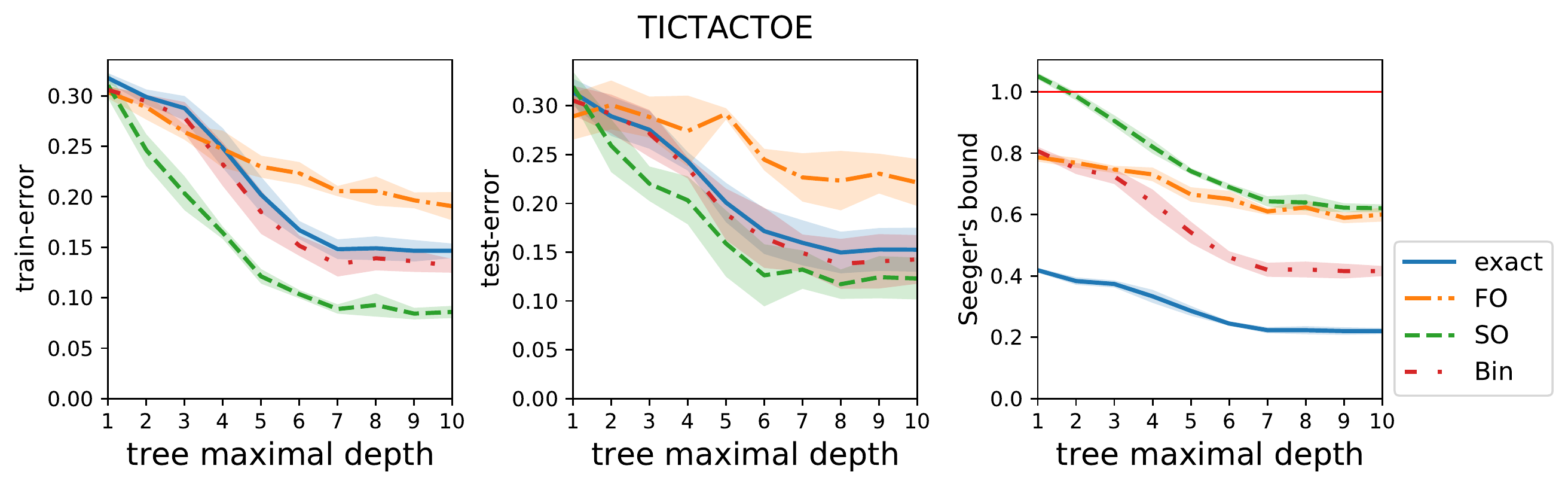}\\
    \includegraphics[width=0.25\textwidth]{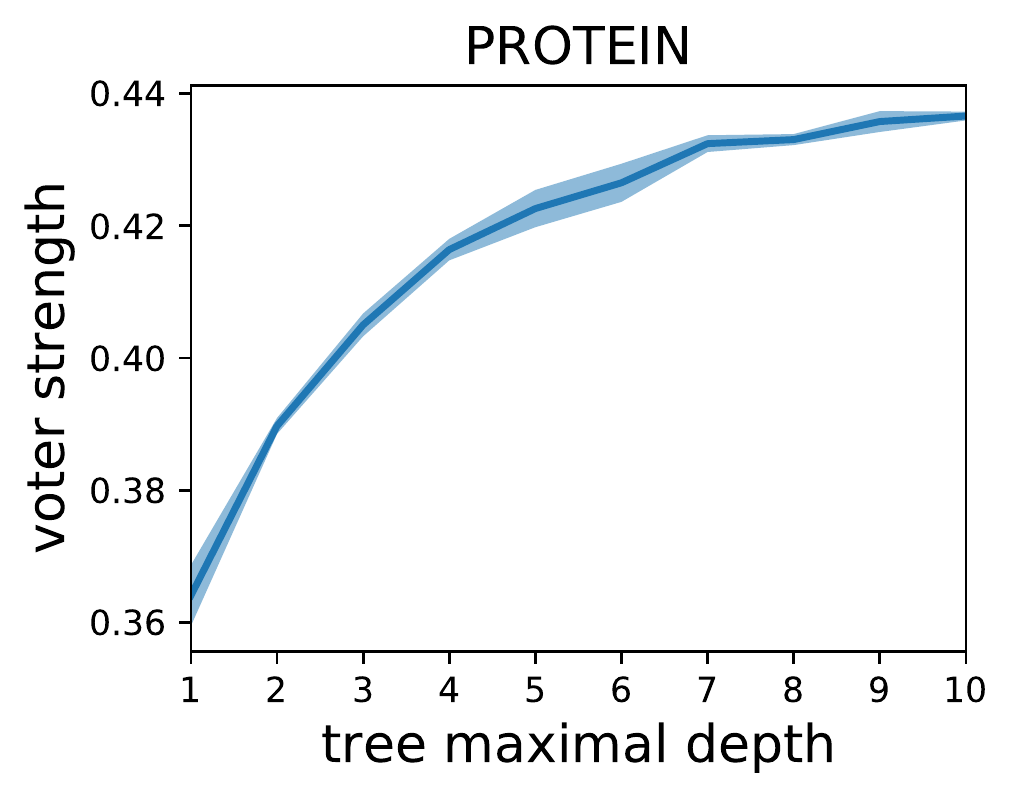}
    \includegraphics[width=0.7\textwidth]{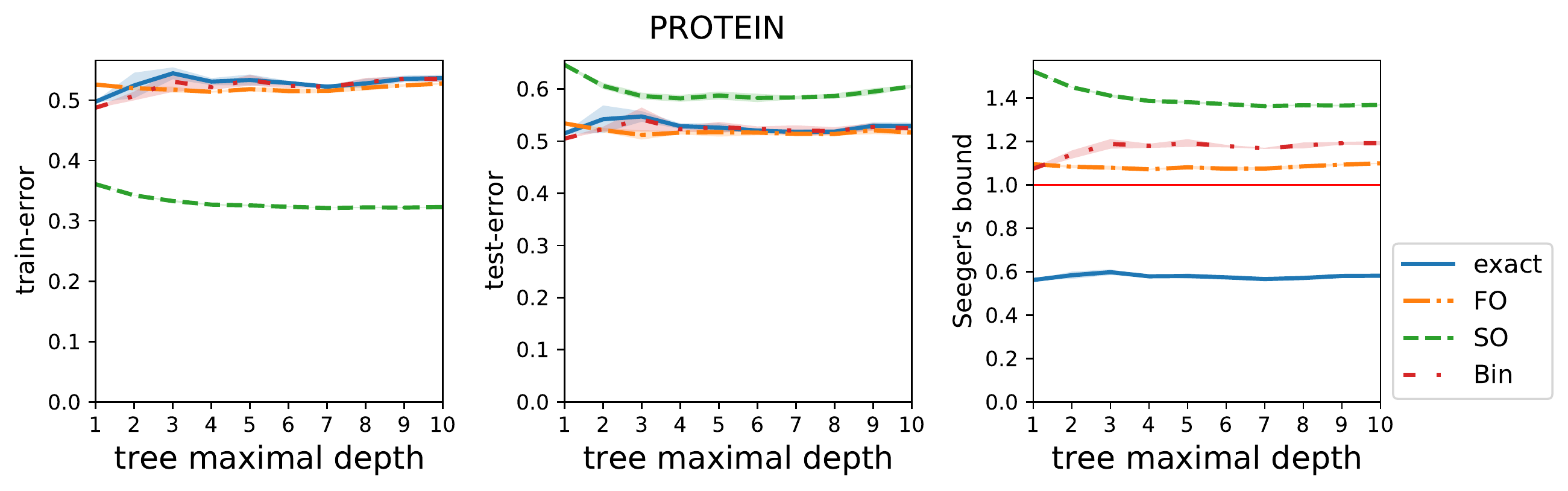}\\
    \includegraphics[width=0.25\textwidth]{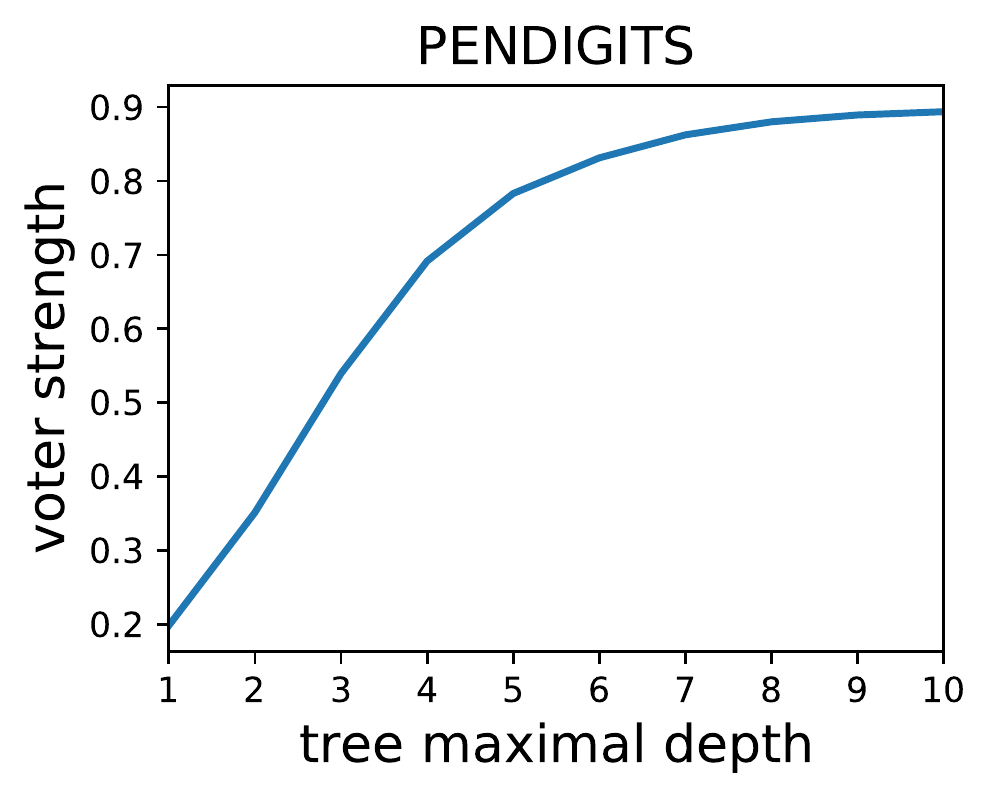}
    \includegraphics[width=0.7\textwidth]{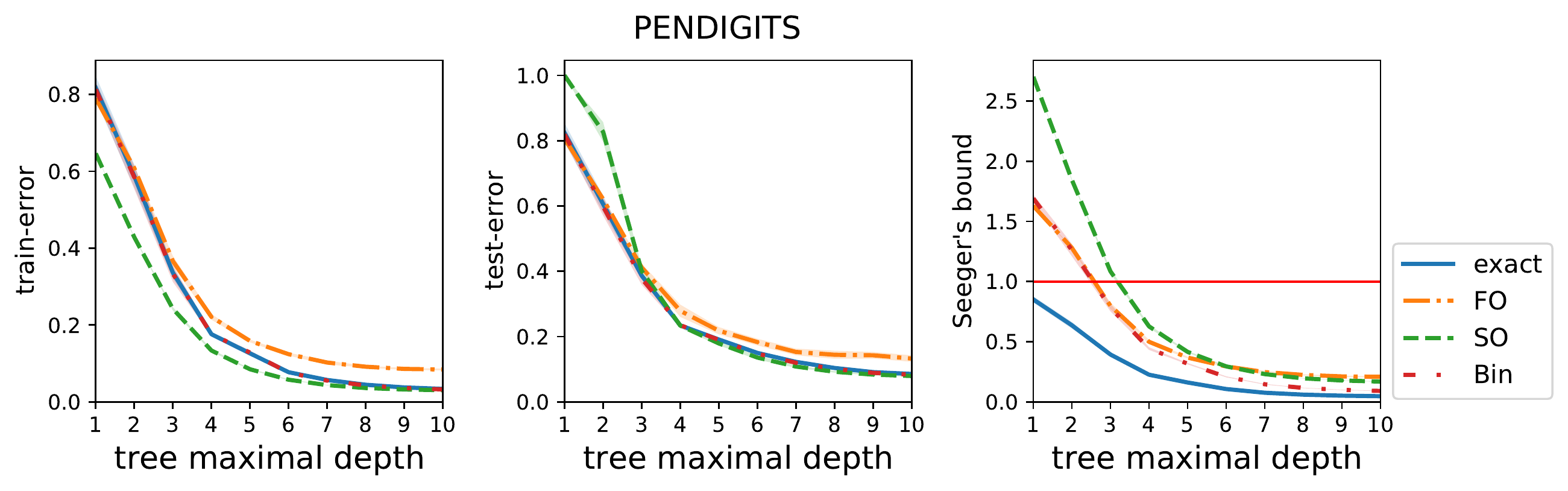}\\
    \includegraphics[width=0.25\textwidth]{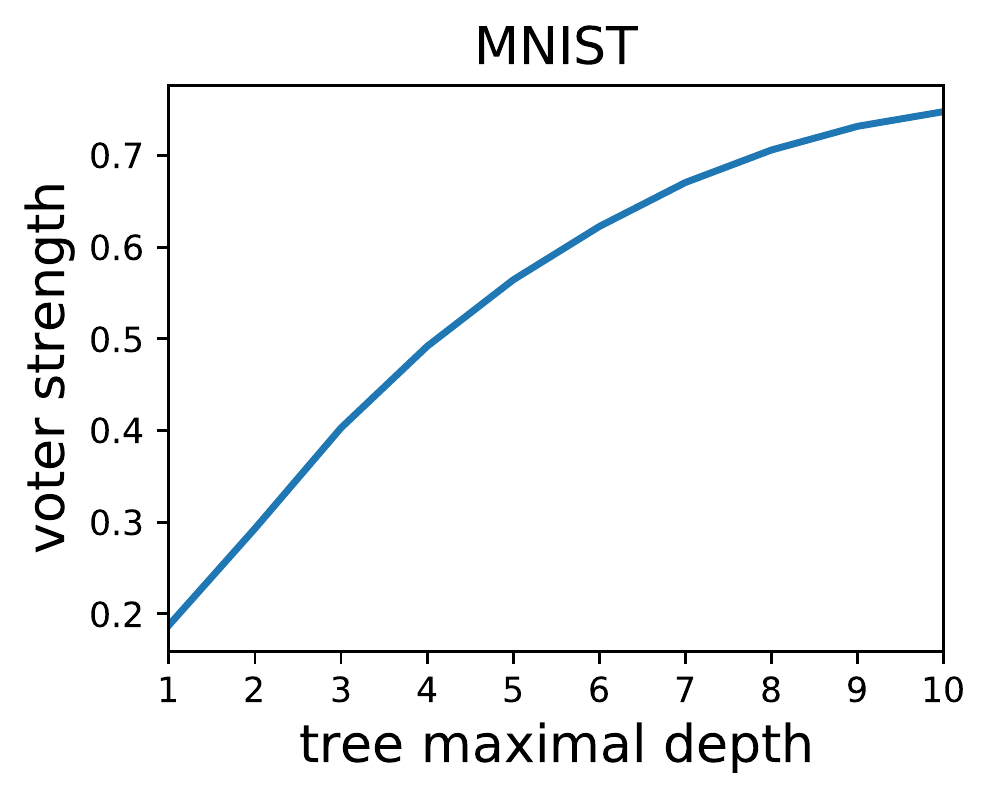}
    \includegraphics[width=0.7\textwidth]{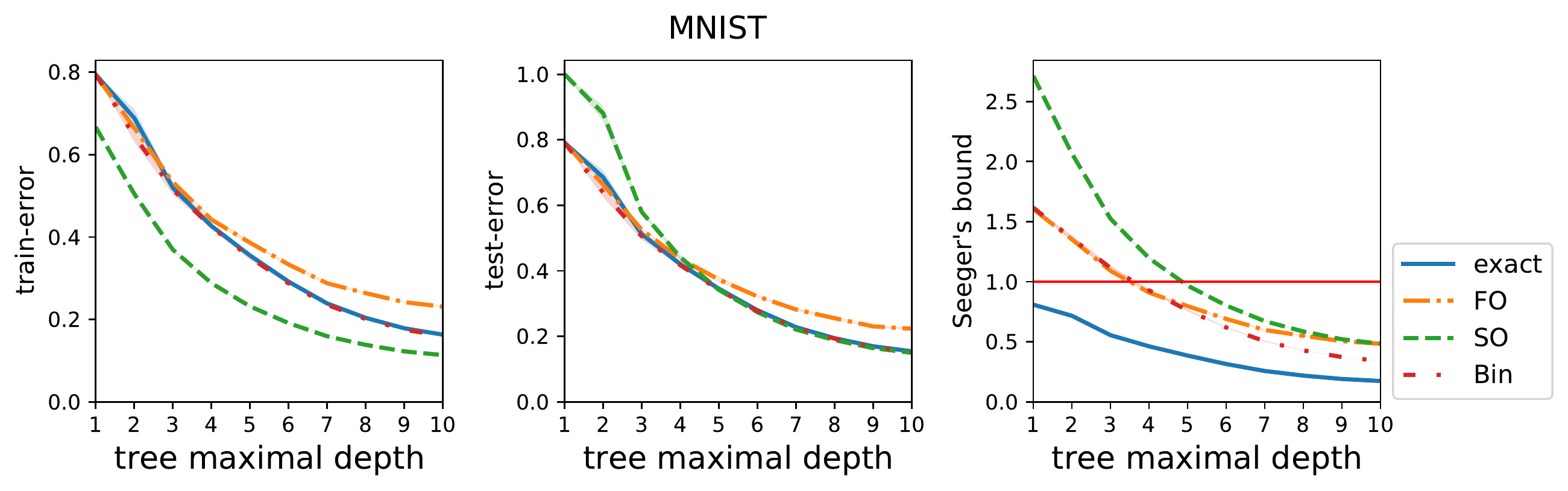}\\
    \includegraphics[width=0.25\textwidth]{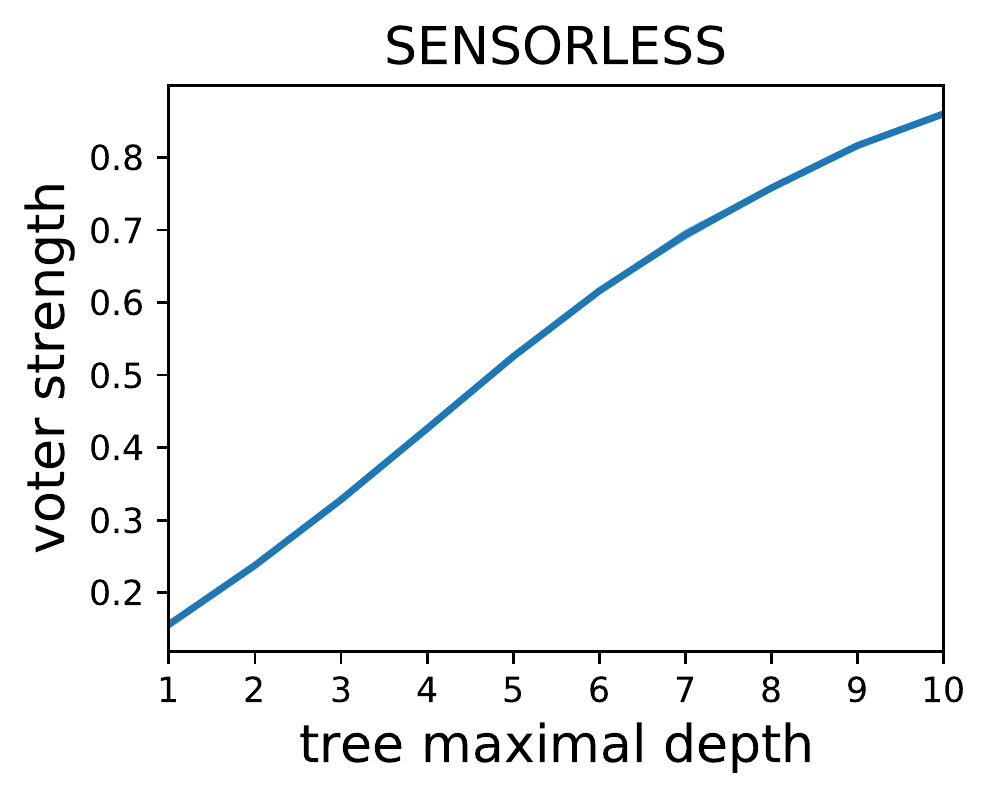}
    \includegraphics[width=0.7\textwidth]{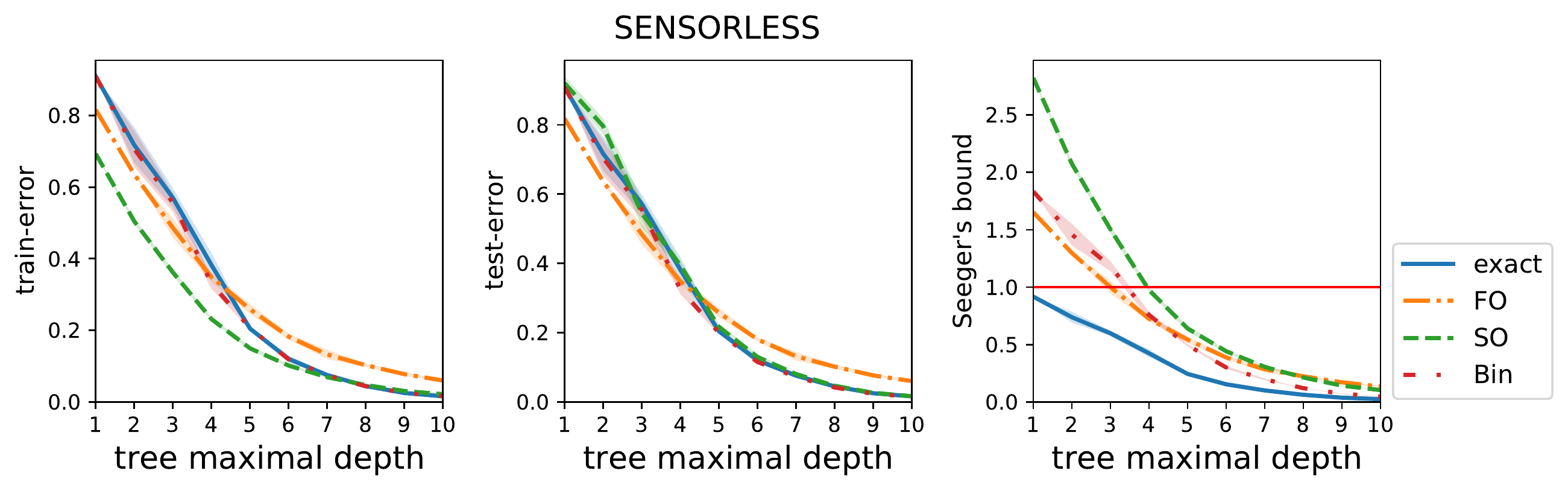}
    \caption{Comparison voter strength (1st column), training error (2nd column), test error (3rd column) and Seeger's bound (4th column) as a function of the tree maximal depth.
    We mark with a red horizontal line the threshold above which the bounds are vacuous.
    Results are averaged over $4$ trials.}\label{fig:strength-app}
\end{figure}

\subsection{Model entropy and complexity, Summary of main results}
We provide two additional elements for assessing the differences between models obtained with our method and models obtained with the upper-bound baselines.
Figure~\ref{fig:real-all} reports the values of the entropy of the obtained posteriors, the KL term in the PAC-Bayes bounds, the training error, the test error and the bound value.
The first measurement (entropy) assesses the diversity of the obtained posteriors: the higher the entropy, the higher the number of selected base classifiers. 
The entropy is generally the highest for our models and the lowest for \emph{FO} which has already been shown to select very few base classifiers.
The second measurement (KL divergence) is provided to verify that our method obtains tighter generalization guarantees because it does not consist in an upper bound of the $01$-loss and not because it obtains models with lower complexity.
Indeed, the posteriors optimized with our variants \emph{exact} and \emph{MC} do not necessarily have low KL divergence w.r.t. the prior.

We finally report the detailed comparison on real benchmarks in Figure~\ref{fig:real-all} and Tables~\ref{tab:binary-real} and~\ref{tab:multic-real}.

\begin{figure}[h]
    \centering
    \includegraphics[width=0.95\textwidth]{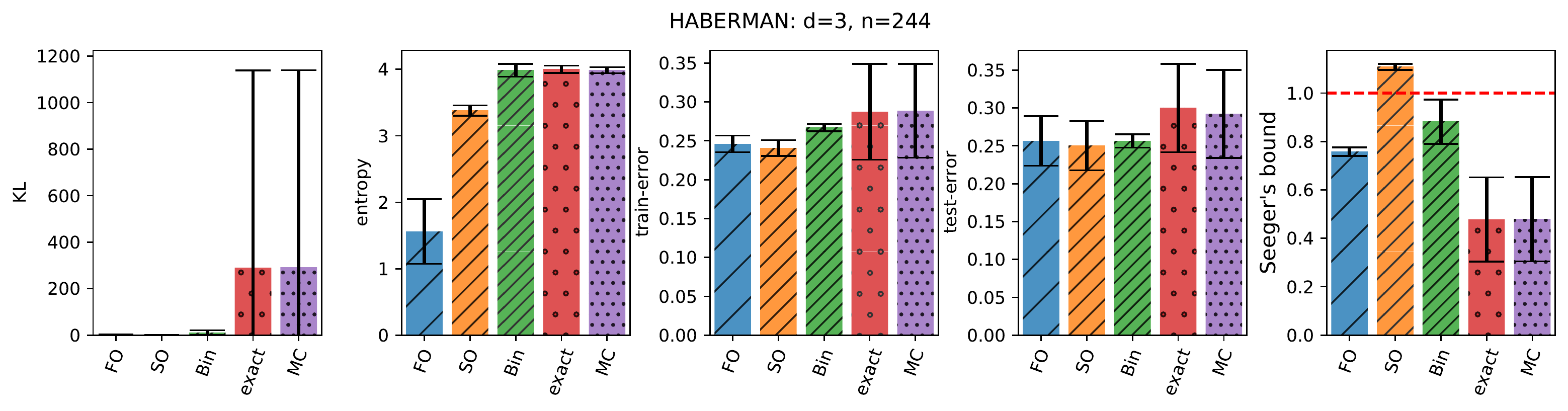}
    \includegraphics[width=0.95\textwidth]{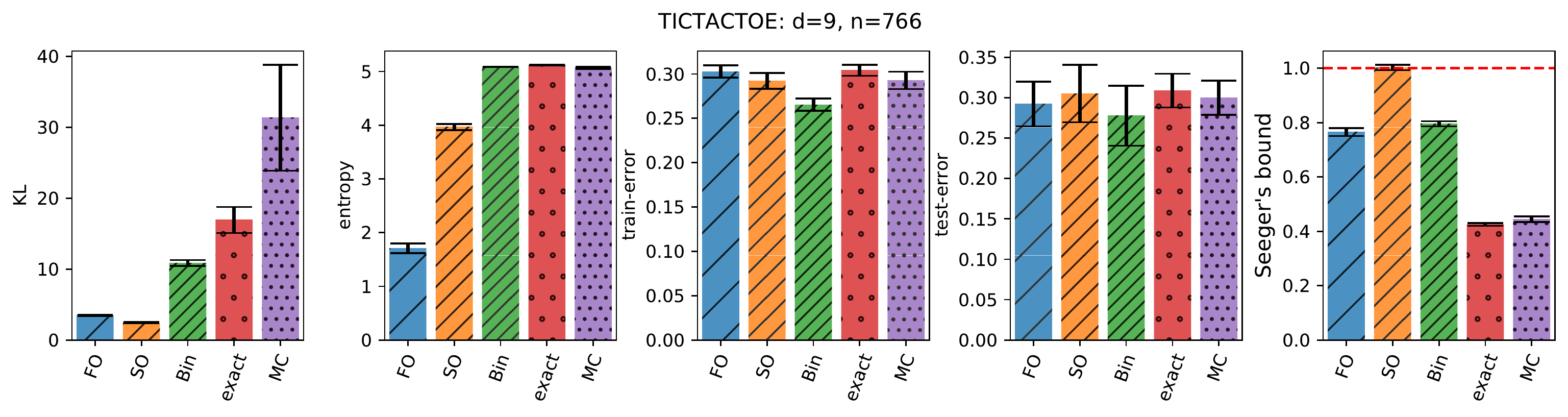}
    \includegraphics[width=0.95\textwidth]{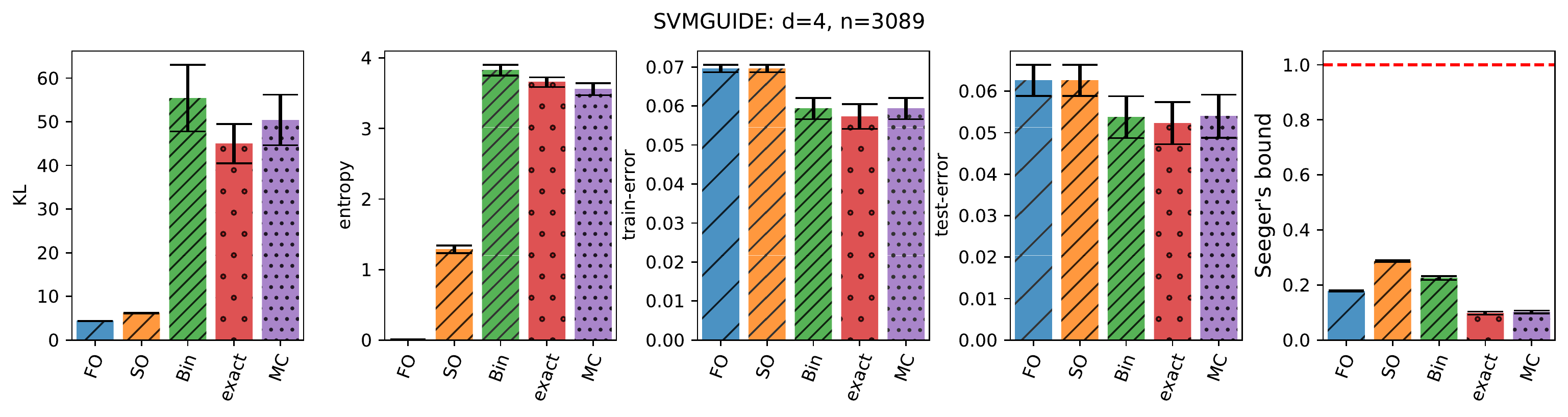}
    \includegraphics[width=0.95\textwidth]{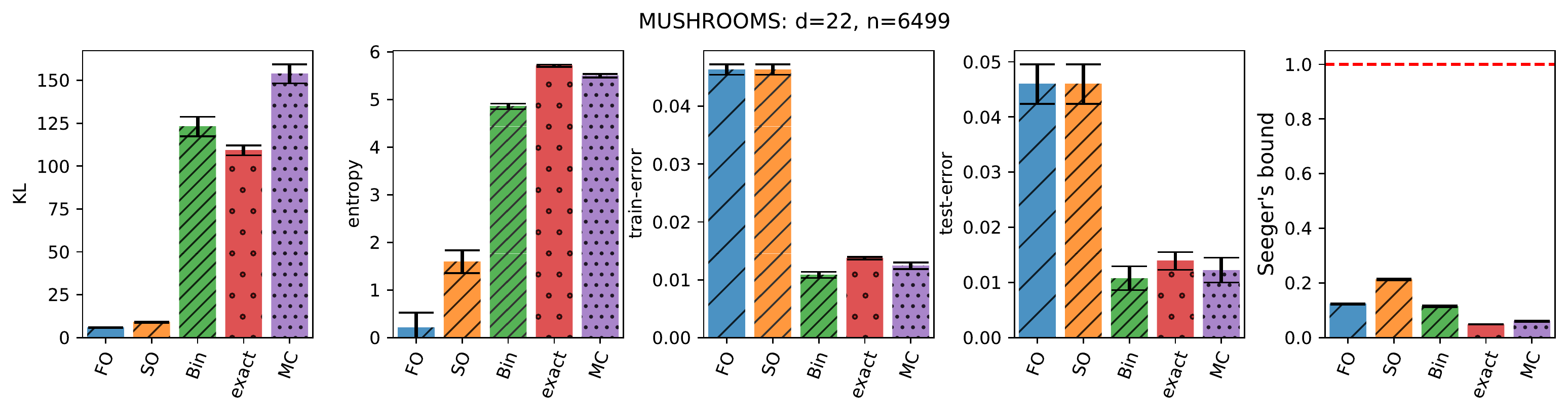}
    \includegraphics[width=0.95\textwidth]{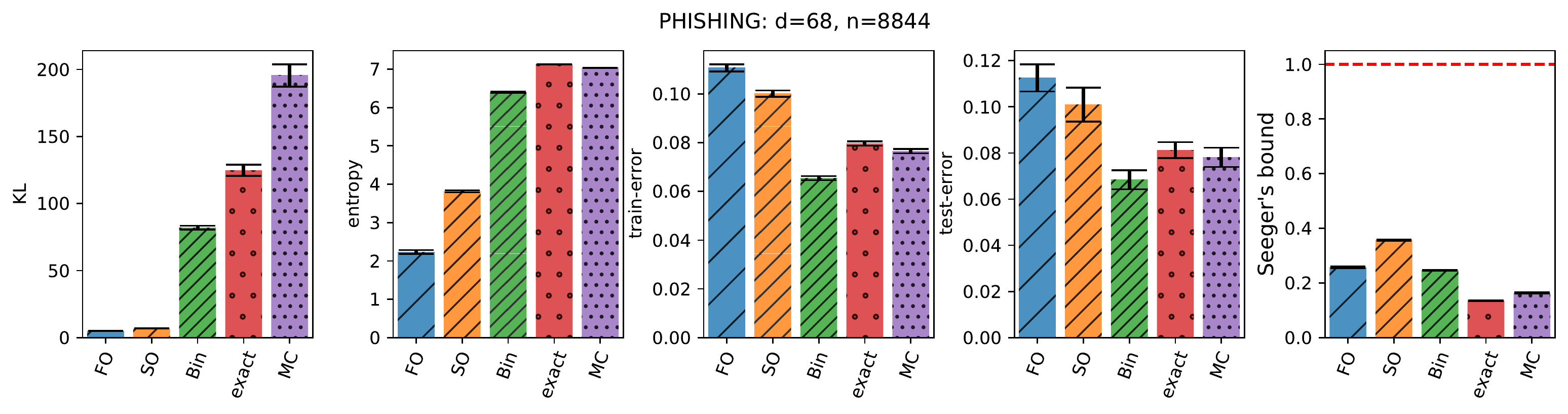}
\end{figure}
\begin{figure}[t!]
    \centering
    \includegraphics[width=0.95\textwidth]{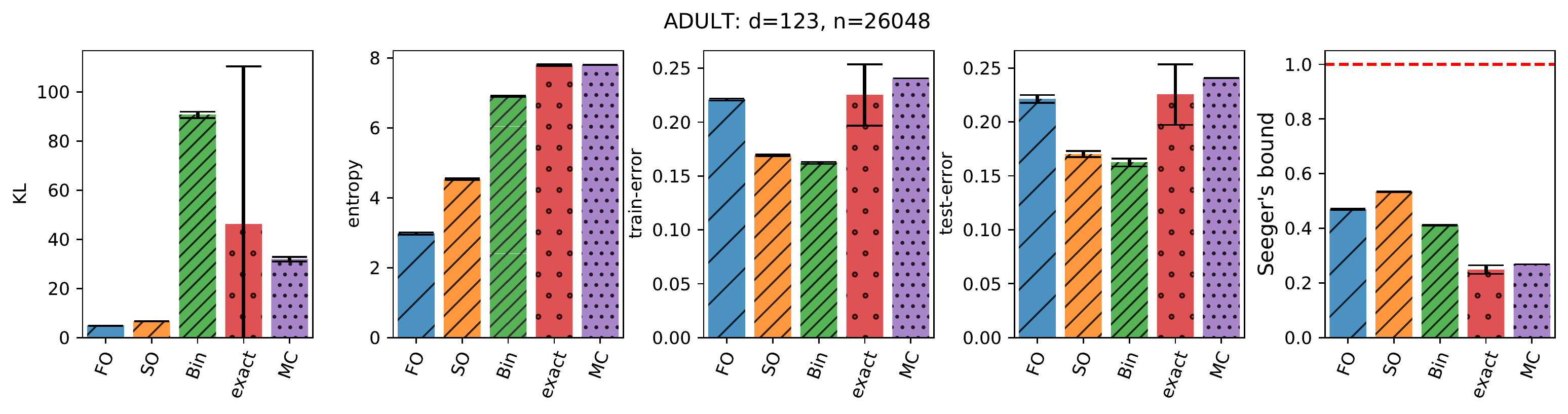}
    \includegraphics[width=0.95\textwidth]{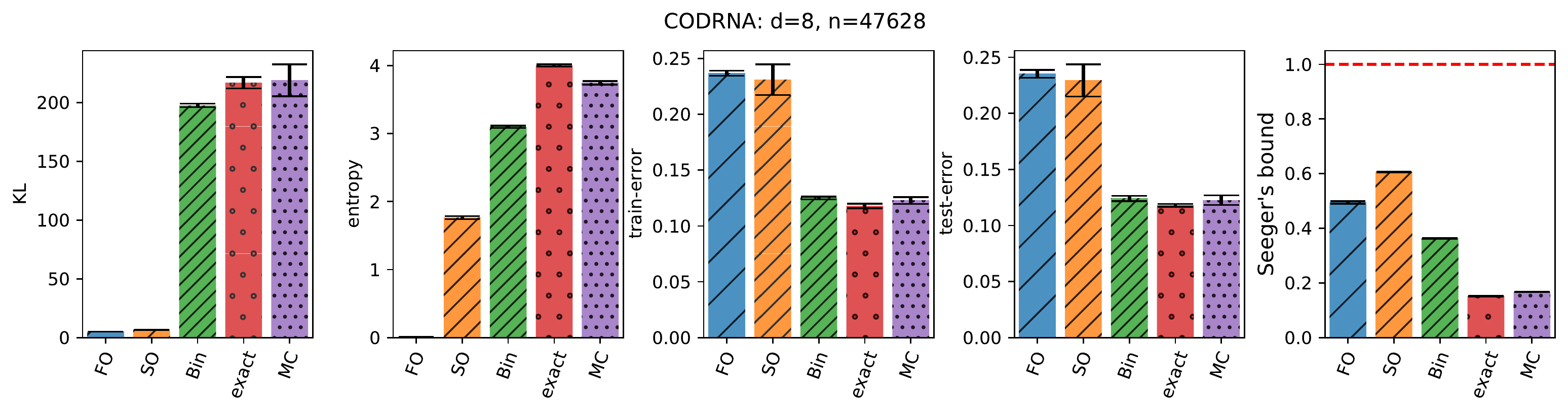}
    \includegraphics[width=0.95\textwidth]{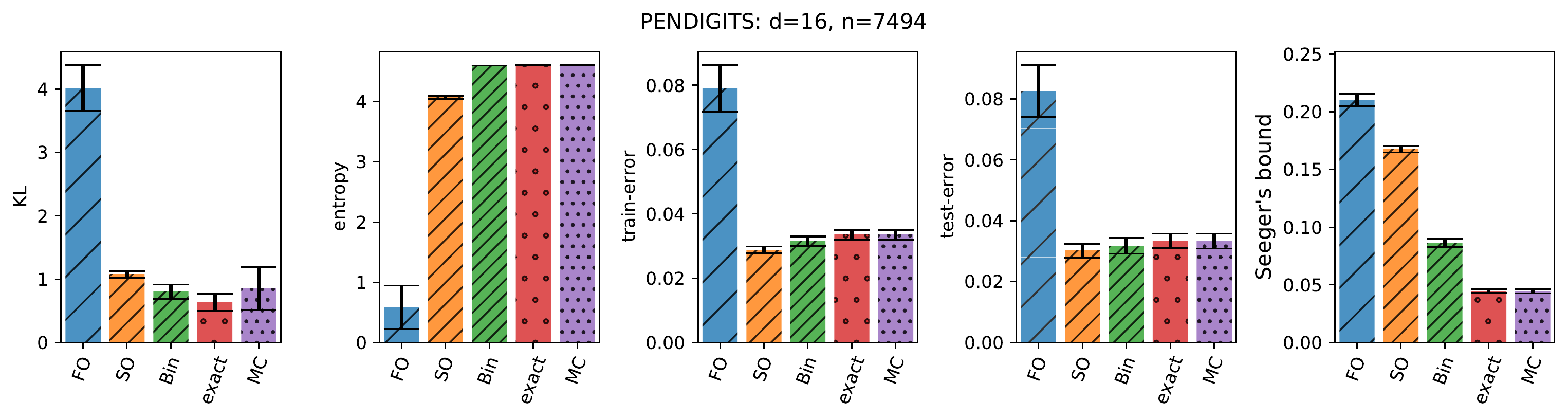}
    \includegraphics[width=0.95\textwidth]{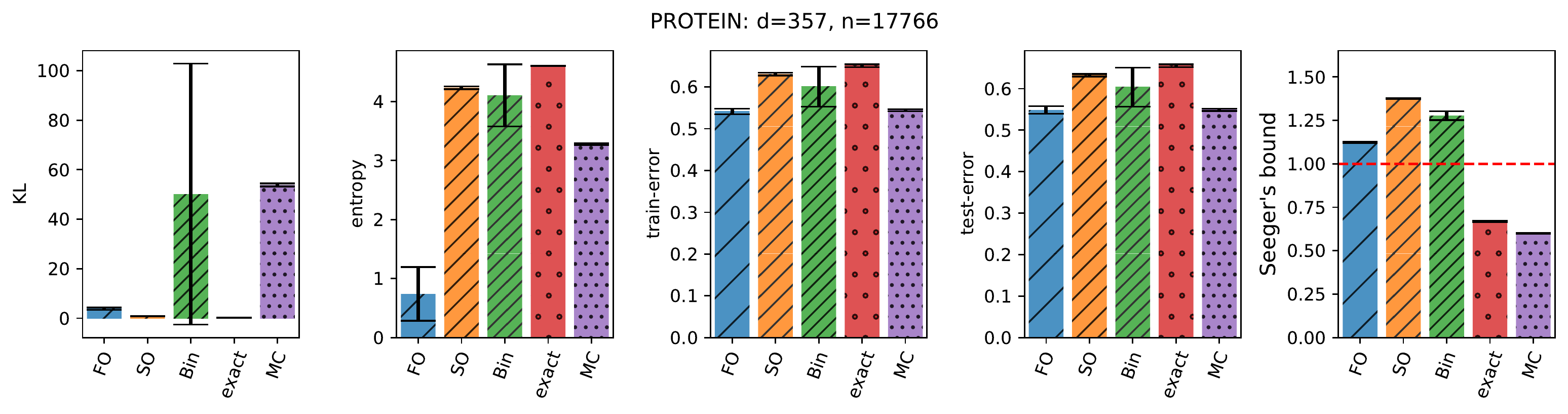}
    \includegraphics[width=0.95\textwidth]{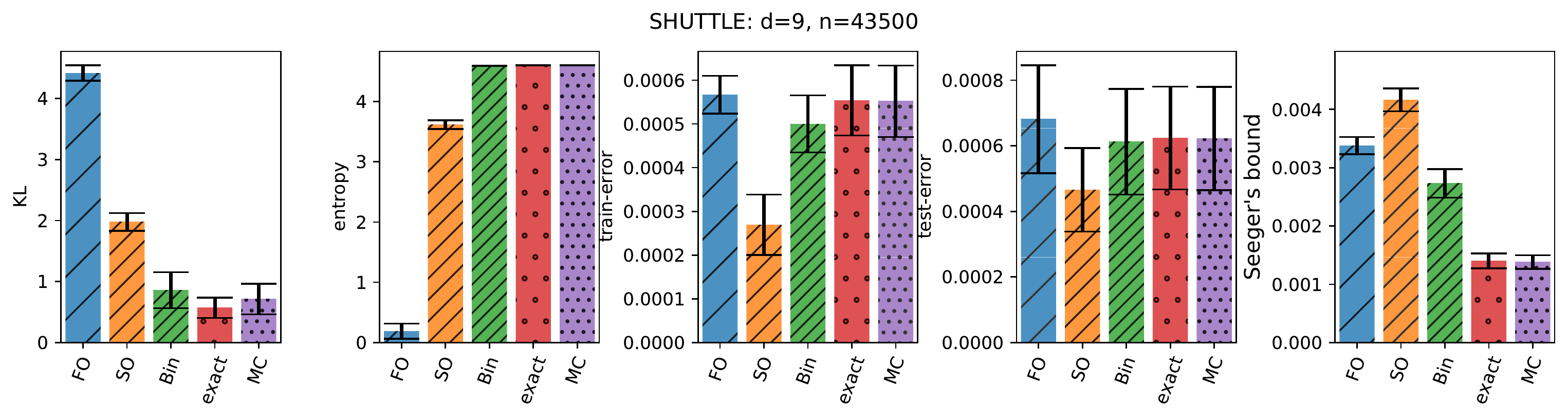}
\end{figure}
\begin{figure}[t!]
    \centering
    \includegraphics[width=0.95\textwidth]{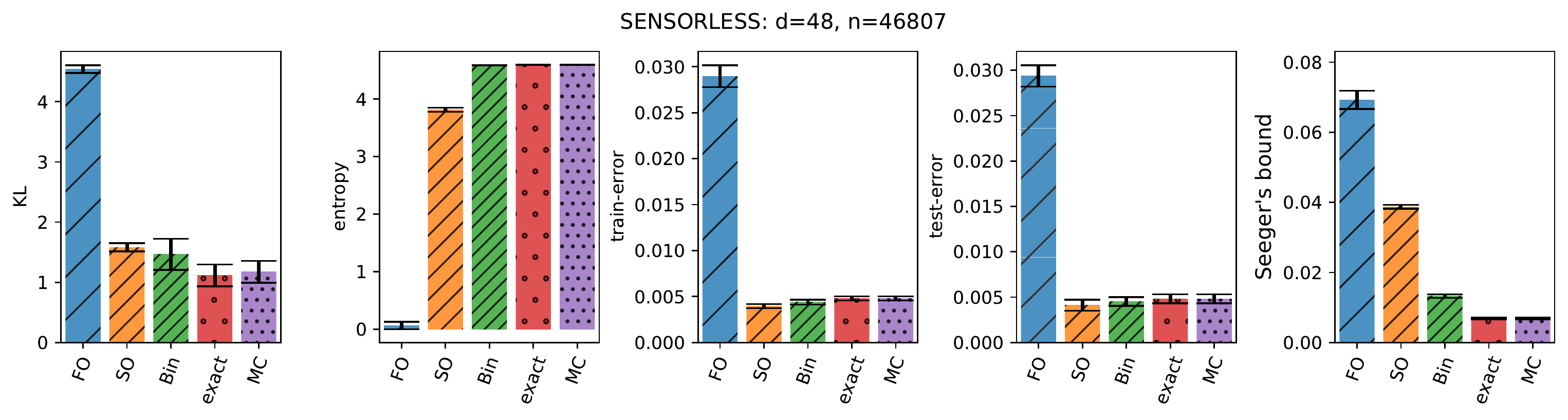}
    \includegraphics[width=0.95\textwidth]{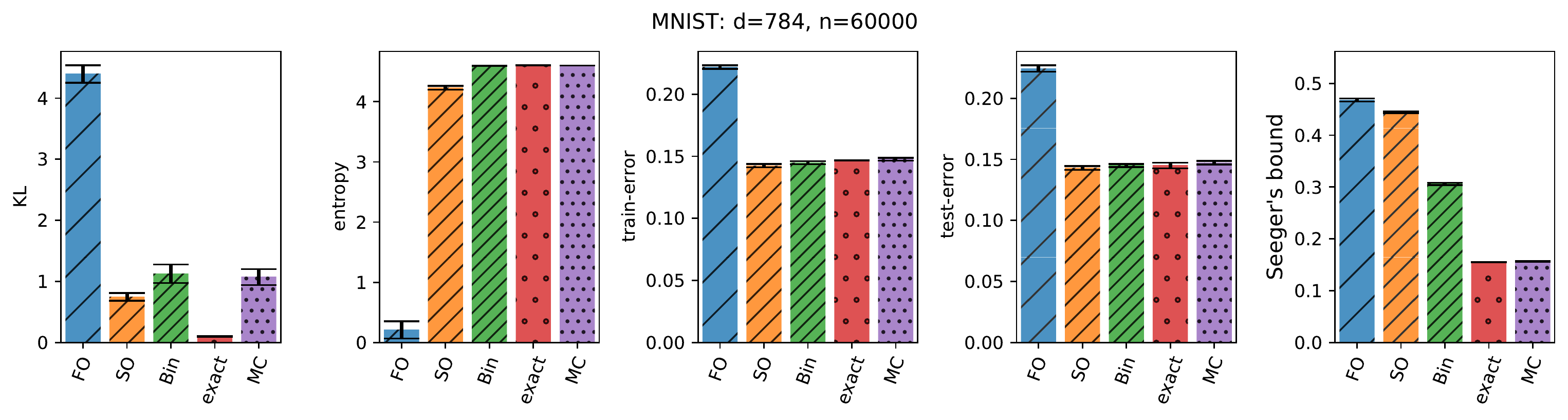}
    \includegraphics[width=0.95\textwidth]{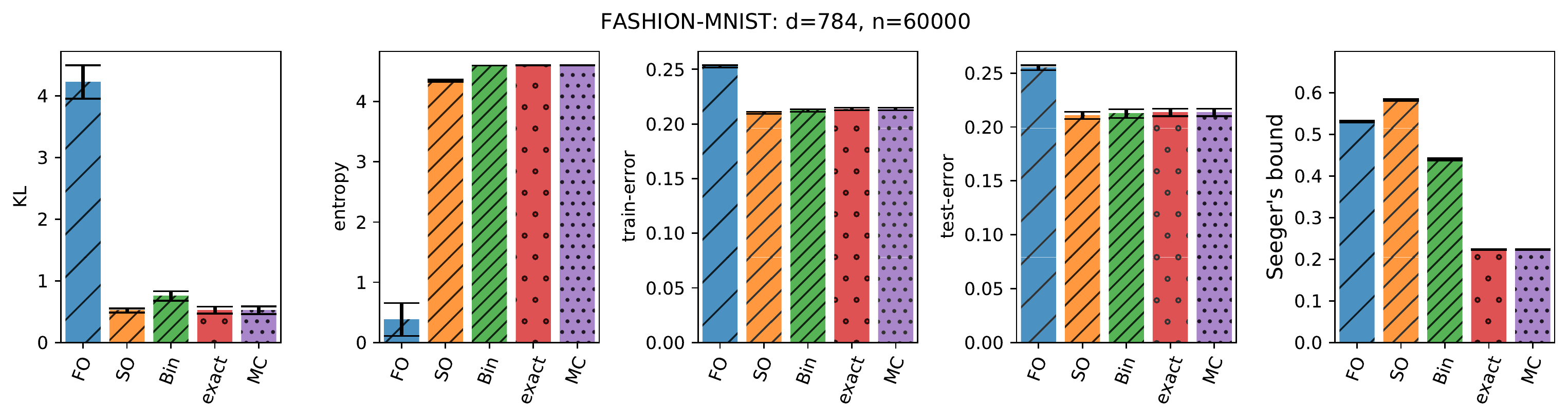}
    \caption{Comparison of First Order (\emph{FO}), Second Order (\emph{SO}) and Binomial (\emph{Bin}) and our methods (\emph{exact}, \emph{MC}) in terms of training and test error rates and PAC-Bayesian bound values. For \emph{exact} and \emph{MC} the entropy is computed for the average MV given the learned Dirichlet distribution.
    Each row of subfigures corresponds to a dataset, where we marked its number of features $d$ and number of training instances $n$.
    The dashed horizontal line in the rightmost column plots marks the threshold above which the bounds are vacuous.
    We report the means (bars) and standard deviations (vertical, black lines) over $10$ different runs.}\label{fig:real-all}
\end{figure}

\label{ap:real}
\begin{landscape}
\begin{table}
  \caption{Summary of the main results on binary datasets with data agnostic prior. For the PAC-Bayesian methods, we report the performance of the model obtained after optimizing Seeger's Bound (Theorem~\eqref{eq:seeger}). Bayesian Naive Bayes (BayesianNB) corresponds to the weighting strategy proposed in~\citet{berend15a}. 
  For each method we report the average ($\pm$ standard deviations) of the test errors and generalization bounds (for the PAC-Bayesian methods) over 10 runs.
  We bold the results (test error and bound value) that are significantly better (smaller) than the other results for a dataset. 
  Notice that the PAC-Bayesian methods generally improve upon BayesianNB and that our method consistently provides the tightest and non-vacuous generalization bounds over all the datasets.}
  \label{tab:binary-real}
  \centering
  \begin{tabular}{clccccccc}
    \toprule
    & Method     & HABERMAN & TICTACTOE & SVMGUIDE & MUSHROOMS & PHISHING & CODRNA & ADULT\\
    \midrule
    \parbox[t]{2mm}{\multirow{6}{*}{\rotatebox[origin=c]{90}{Test error}}}
    & Bayesian NB & $25.65 \pm 2.10$  & $30.21 \pm 3.06$  & $34.51 \pm 0.00$  & $11.65 \pm 0.59$  & $44.49 \pm 0.05$  & $66.67 \pm 0.00$  & $24.07 \pm 0.00$ \\
    & First Order & $25.65 \pm 3.26$  & $29.22 \pm 2.77$  & $6.26 \pm 0.37$  & $4.60 \pm 0.36$  & $11.26 \pm 0.58$  & $23.52 \pm 0.35$  & $22.14 \pm 0.36$ \\
    & Second Order & $25.00 \pm 3.25$  & $30.52 \pm 3.55$  & $6.26 \pm 0.37$  & $4.60 \pm 0.36$  & $10.10 \pm 0.73$  & $22.93 \pm 1.43$  & $17.02 \pm 0.29$ \\
    & Binomial & $25.65 \pm 0.87$  & $27.76 \pm 3.71$  & $\bm{5.37 \pm 0.50}$  & $\bm{1.08 \pm 0.22}$  & $\bm{6.84 \pm 0.41}$  & $12.41 \pm 0.23$  & $\bm{16.24 \pm 0.37}$ \\
    & ours-exact & $30.00 \pm 5.83$  & $30.88 \pm 2.08$  & $\bm{5.23 \pm 0.51}$  & $\bm{1.39 \pm 0.16}$  & $8.13 \pm 0.35$  & $\bm{11.80 \pm 0.13}$  & $22.54 \pm 2.81$ \\
    & ours-MC & $29.22 \pm 5.81$  & $30.00 \pm 2.11$  & $\bm{5.40 \pm 0.52}$  & $\bm{1.22 \pm 0.23}$  & $7.81 \pm 0.42$  & $12.25 \pm 0.43$  & $24.07 \pm 0.00$ \\

    \midrule
    \parbox[t]{2mm}{\multirow{5}{*}{\rotatebox[origin=c]{90}{Seeger's b.}}}
    & First Order & $75.76 \pm 1.75$  & $76.52 \pm 1.44$  & $17.84 \pm 0.21$  & $12.31 \pm 0.19$  & $25.75 \pm 0.31$  & $49.45 \pm 0.47$  & $46.93 \pm 0.18$ \\
    & Second Order & $110.83 \pm 1.24$  & $100.25 \pm 0.99$  & $28.67 \pm 0.33$  & $21.27 \pm 0.31$  & $35.67 \pm 0.21$  & $60.59 \pm 0.20$  & $53.33 \pm 0.09$ \\
    & Binomial & $88.17 \pm 9.12$  & $79.55 \pm 0.90$  & $22.52 \pm 0.70$  & $11.49 \pm 0.27$  & $24.64 \pm 0.20$  & $36.24 \pm 0.13$  & $41.06 \pm 0.13$ \\
    & ours-exact & $\bm{47.83 \pm 17.40}$  & $\bm{42.54 \pm 0.52}$  & $\bm{9.79 \pm 0.57}$  & $\bm{4.85 \pm 0.09}$  & $\bm{13.49 \pm 0.15}$  & $\bm{15.17 \pm 0.21}$  & $\bm{24.87 \pm 1.56}$ \\
    & ours-MC & $\bm{47.89 \pm 17.38}$  & $44.42 \pm 1.02$  & $\bm{10.21 \pm 0.54}$  & $5.99 \pm 0.16$  & $16.34 \pm 0.19$  & $16.72 \pm 0.12$  & $26.83 \pm 0.03$ \\
    \bottomrule
  \end{tabular}
\end{table}
\end{landscape}

\begin{landscape}
\begin{table}
  \caption{Summary of the main results on multi-class datasets with informed priors. For the PAC-Bayesian methods, we report the performance of the model obtained after optimizing Seeger's Bound with informed priors (Theorem~\eqref{eq:informed-prior}). Bayesian Naive Bayes (BayesianNB) corresponds to the weighting strategy proposed in~\citet{berend15a}. 
  For each method we report the average ($\pm$ standard deviations) of the test errors and generalization bounds (for the PAC-Bayesian methods) over $10$ runs.
  We bold the results (test error and bound value) that are significantly better (smaller) than the other results for a dataset. 
  Notice that the PAC-Bayesian methods generally improve upon BayesianNB and that our method consistently provides the tightest and non-vacuous generalization bounds over all the datasets.}
  \label{tab:multic-real}
  \centering
  \begin{tabular}{clcccccc}
    \toprule
    & Method     & PENDIGITS & PROTEIN & SHUTTLE & SENSORLESS & MNIST & FASHION-MNIST\\
    \midrule
    \parbox[t]{2mm}{\multirow{6}{*}{\rotatebox[origin=c]{90}{Test error}}}
    & Bayesian NB & $82.41 \pm 0.11$  & $82.09 \pm 0.00$  & $20.90 \pm 0.03$  & $86.27 \pm 0.21$  & $86.09 \pm 0.09$  & $89.14 \pm 0.05$ \\
    & First Order & $8.25 \pm 0.85$  & $\bm{54.86 \pm 0.93}$  & $0.07 \pm 0.02$  & $2.94 \pm 0.12$  & $22.44 \pm 0.26$  & $25.50 \pm 0.23$ \\
    & Second Order & $3.01 \pm 0.23$  & $63.26 \pm 0.30$  & $0.05 \pm 0.01$  & $0.41 \pm 0.06$  & $14.29 \pm 0.16$  & $21.08 \pm 0.34$ \\
    & Binomial & $3.17 \pm 0.26$  & $60.37 \pm 4.69$  & $0.06 \pm 0.02$  & $0.45 \pm 0.05$  & $14.49 \pm 0.13$  & $21.25 \pm 0.41$ \\
    & ours-exact & $3.33 \pm 0.24$  & $65.54 \pm 0.33$  & $0.06 \pm 0.02$  & $0.48 \pm 0.05$  & $14.49 \pm 0.23$  & $21.34 \pm 0.35$ \\
    & ours-MC & $3.33 \pm 0.24$  & $\bm{54.92 \pm 0.28}$  & $0.06 \pm 0.02$  & $0.48 \pm 0.05$  & $14.73 \pm 0.15$  & $21.34 \pm 0.35$ \\
    \midrule
    \parbox[t]{2mm}{\multirow{5}{*}{\rotatebox[origin=c]{90}{Seeger's b.}}}
    & First Order & $21.04 \pm 0.52$  & $112.34 \pm 0.25$  & $0.34 \pm 0.01$  & $6.92 \pm 0.26$  & $46.81 \pm 0.30$  & $53.08 \pm 0.21$ \\
    & Second Order & $16.77 \pm 0.28$  & $137.60 \pm 0.26$  & $0.42 \pm 0.02$  & $3.87 \pm 0.06$  & $44.44 \pm 0.18$  & $58.26 \pm 0.19$ \\
    & Binomial & $8.64 \pm 0.35$  & $127.74 \pm 2.59$  & $0.27 \pm 0.02$  & $1.33 \pm 0.05$  & $30.59 \pm 0.23$  & $43.99 \pm 0.23$ \\
    & ours-exact & $\bm{4.46 \pm 0.18}$  & $66.86 \pm 0.34$  & $\bm{0.14 \pm 0.01}$  & $\bm{0.69 \pm 0.03}$  & $\bm{15.50 \pm 0.02}$  & $\bm{22.35 \pm 0.12}$ \\
    & ours-MC & $\bm{4.45 \pm 0.17}$  & $\bm{59.93 \pm 0.24}$  & $\bm{0.14 \pm 0.01}$  & $\bm{0.69 \pm 0.03}$  & $\bm{15.64 \pm 0.11}$  & $\bm{22.33 \pm 0.12}$ \\

    \bottomrule
  \end{tabular}
\end{table}
\end{landscape}

\end{document}